\newtheorem*{rep@theorem}{\rep@title}
\newcommand{\newreptheorem}[2]{%
	\newenvironment{rep#1}[1]{%
		\def\rep@title{#2~\ref{##1}}%
		\begin{rep@theorem}}%
		{\end{rep@theorem}}}
\newtheorem{assumptionA}{A\!\!}
\newtheorem{theorem}{Theorem}
\newtheorem{lemma}{Lemma}
\newtheorem{corollary}{Corollary}[theorem]
\let\counterwithin\relax
\newtheorem*{Lemma*}{Lemma}
\newtheorem*{Theorem*}{Theorem}
\begin{document}

\title{\bf On the Convergence of Decentralized Adaptive Gradient Methods}

\author{\textbf{Xiangyi Chen, Belhal Karimi, Weijie Zhao, Ping Li}\\\\
Cognitive Computing Lab\\
Baidu Research\\
10900 NE 8th St. Bellevue, WA 98004, USA\\
\texttt{\{xiangyichen1900, belhal.karimi, zhaoweijie12, pingli98\}@gmail.com}
}

\date{\vspace{0.4in}}

\maketitle

\begin{abstract}\vspace{0.1in}
\noindent\footnote{The work of Xiangyi Chen was conducted while he was an intern at Baidu Research -- Bellevue  in Summer 2019.}Adaptive gradient methods including Adam, AdaGrad, and their variants have been very successful for training deep learning models, such as neural networks. 
Meanwhile, given the need for distributed computing, distributed optimization algorithms are rapidly becoming a focal point.
With the growth of computing power and the need for using machine learning models on mobile devices, the communication cost of distributed training algorithms needs careful consideration.  In this paper, we introduce novel convergent decentralized adaptive gradient methods and rigorously incorporate adaptive gradient methods into decentralized training procedures. 
Specifically, we propose a general algorithmic framework that can convert existing adaptive gradient methods to their decentralized counterparts. 
In addition, we thoroughly analyze the convergence behavior of the proposed algorithmic framework and show that if a given adaptive gradient method converges, under some specific conditions, then its decentralized counterpart is also convergent. 
We illustrate the benefit of our generic decentralized framework on a prototype method, i.e., AMSGrad, both theoretically and numerically.
\end{abstract}

\newpage

\section{Introduction}

Distributed training of machine learning models has been drawing growing attention in the past few years due to its practical benefits and necessities. 
Given the evolution of computing capabilities of CPUs and GPUs, computation time in distributed settings is gradually dominated by the communication time in many circumstances~\citep{chilimbi2014project, mcmahan2017communication}. 
As a result, a large number of recent works have been focusing on reducing communication cost for distributed learning~\citep{alistarh2017qsgd,lin2017deep,wangni2018gradient,stich2018sparsified,wang2018atomo,tang2019doublesqueeze}. 
In the traditional parameter (central) server setting, where a parameter server is employed to manage communication in the whole network~\citep{Proc:Zhao_MLSys20}, many effective communication reductions have been proposed based on gradient compression~\citep{aji2017sparse} and quantization~\citep{chen2010approximate,jegou2010product,ge2013optimized,Proc:Xu_SIGMOD21} techniques. 
Despite these communication reduction techniques, its cost still, usually, scales linearly with the number of workers. 
Due to this limitation and with the sheer size of decentralized devices, the \emph{decentralized training paradigm}~\citep{duchi2011dual}, where the parameter server is removed and each node only communicates with its neighbors, is drawing attention. 
It has been shown in~\cite{lian2017can} that decentralized training algorithms can outperform parameter server-based algorithms when the training bottleneck is the communication cost. 
The decentralized paradigm is also preferred when a central parameter server is not available. 

In light of recent advances in nonconvex optimization, an effective way to accelerate training is by using adaptive gradient methods like AdaGrad~\citep{duchi2011adaptive}, Adam~\citep{kingma2014adam} or AMSGrad~\citep{reddi2019convergence}. 
Their popularity are due to their practical benefits in training neural networks, featured by faster convergence and ease of parameter tuning compared with Stochastic Gradient Descent (SGD)~\citep{robbins1951stochastic}.
Despite a large number of studies within the distributed optimization literature, few works have considered bringing adaptive gradient methods into distributed training, largely due to the lack of understanding of their convergence behaviors. 
Notably,~\citet{reddi2020adaptive} develop a decentralized ADAM method for distributed optimization problems with a direct application to federated learning.
An inner loop is employed to compute mini-batch gradients on each node and a global adaptive step is applied to update the global parameter at each outer iteration.
Yet, in the settings of our paper, nodes can only communicate \emph{to their neighbors} on a fixed communication graph while a server/worker communication is required in~\cite{reddi2020adaptive}.
Designing adaptive methods in such settings is highly non-trivial due to the already complex update rules and to the interaction between the effect of using adaptive learning rates and the decentralized communication protocols.
This paper is an attempt at bridging the gap between both realms in nonconvex optimization. 
Our \textbf{contributions} are summarized as follows:
\begin{itemize}
\item We investigate the use of  adaptive gradient methods in the decentralized training paradigm, where nodes have only a local view of the whole communication graph. We develop a general technique that converts an adaptive gradient method from a centralized method to its decentralized variant and highlight the importance of adaptive learning rate consensus. 
\item By using our proposed technique, we present a new decentralized optimization algorithm, called decentralized AMSGrad, as the decentralized counterpart of AMSGrad.
\item We provide a theoretical verification interface, in Theroem~\ref{thm: dagm_converge}, for analyzing the behavior of decentralized adaptive gradient methods obtained as a result of our technique.
Thus, we characterize the convergence rate of decentralized AMSGrad, which is the first convergent decentralized adaptive gradient method, to the best of our knowledge.
\end{itemize}
The paper is organized as follows. In Section~\ref{sec:prelim}, we show the importance of adaptive learning rate consensus by proving a divergent example for a recently proposed decentralized adaptive gradient method, DADAM~\citep{nazari2019dadam}. In Section~\ref{sec:main}, we develop our general framework for converting adaptive gradient methods into their decentralized counterparts along with convergence analysis and converted algorithms. Illustrative experiments are presented in Section~\ref{sec:numerical}. 
Section~\ref{sec:conclusion} concludes~our~work.

\vspace{0.1in}

 \noindent\textbf{Notations}: $x_{t,i}$ denotes variable $x$ at node $i$ and iteration $t$. $\|\cdot \|_{abs}$ denotes the entry-wise $L_1$ norm of a matrix, i.e., $\|A\|_{abs}= \sum_{i,j} |A_{i,j}|$. 
We introduce important notations used throughout the paper: for any $t>0$, $G_t := [g_{t,N}]$ where $[g_{t,N}]$ denotes the matrix $[g_{t,1}, g_{t,2}, \cdots, g_{t,N}]$ (where $g_{t,i}$ is a column vector), $M_t := [m_{t,N}]$, $X_t := [x_{t,N}]$, $\overline {\nabla f}(X_t) := \frac{1}{N}\sum_{i=1}^N \nabla f_i(x_{t,i})$, $U_t := [u_{t,N}]$, $ \tilde U_t := [\tilde u_{t,N}]$, $ V_t := [ v_{t,N}]$, $\hat V_t := [\hat v_{t,N}]$, $\overline X_t := \frac{1}{N}\sum_{i=1}^N x_{t,i} $, $\overline U_t := \frac{1}{N}\sum_{i=1}^N u_{t,i} $ and $\overline {\tilde U_t} := \frac{1}{N}\sum_{i=1}^N  \tilde u_{t,i} $.

\section{Decentralized Adaptive Training and Divergence of DADAM}\label{sec:prelim}

\subsection{Related Work}

\textbf{Decentralized optimization.}\  
Traditional decentralized optimization methods include well-know algorithms such as ADMM~\citep{boyd2011distributed}, Dual Averaging~\citep{duchi2011dual}, Distributed Subgradient Descent~\citep{nedic2009distributed}. 
More recent algorithms include Extra~\citep{shi2015extra}, Next~\citep{di2016next}, Prox-PDA~\citep{hong2017prox}, GNSD~\citep{lu2019gnsd}, and Choco-SGD~\citep{koloskova2019decentralized}.  
While these algorithms are commonly used in applications other than deep learning, recent algorithmic advances in the machine learning community have shown that decentralized optimization can also be useful for training deep models such as neural networks. 
\citet{lian2017can} demonstrate that a stochastic version of Decentralized Subgradient Descent can outperform parameter server-based algorithms when the communication cost is high. 
\citet{tang2018d} propose the D$^2$ algorithm improving the convergence rate over Stochastic Subgradient Descent.
\citet{assran2019stochastic} propose the Stochastic Gradient Push that is more robust to network failures for training neural networks. 
The study of decentralized training algorithms in the machine learning community is only at its initial stage. 
No existing work, to our knowledge, has seriously considered integrating \emph{adaptive gradient methods} in the setting of decentralized learning.
One noteworthy work~\citep{nazari2019dadam} proposes a decentralized version of AMSGrad~\citep{reddi2019convergence} and it is proven to satisfy some non-standard regret.

\vspace{0.1in}
\noindent\textbf{Adaptive gradient methods.}\   
Adaptive gradient methods have been popular in recent years due to their superior performance in training neural networks. 
Most commonly used adaptive methods include AdaGrad~\citep{duchi2011adaptive} or Adam~\citep{kingma2014adam} and their variants.  
Key features of such methods lie in the use of momentum and adaptive learning rates (which means that the learning rate is changing during the optimization and is anisotropic, i.e., depends on the dimension).
The method of reference, called Adam, has been analyzed in~\cite{reddi2019convergence} where the authors point out an error in previous convergence analyses. 
Since then, a variety of papers have been focusing on analyzing the convergence behavior of the numerous existing adaptive gradient methods. 
\citet{ward2019adagrad},~\citet{li2019convergence} derive convergence guarantees for a variant of AdaGrad without coordinate-wise learning rates. 
\citet{chen2018convergence} analyze the convergence behavior of a broad class of algorithms including AMSGrad and AdaGrad. \citet{zhou2018convergence} give a more refined analysis of AMSGrad with better convergence rate.
\citet{zou2018convergence} provide a unified convergence analysis for AdaGrad with momentum.
Noticeable recent works on adaptive gradient methods can be found in~\cite{agarwal2019efficient,luo2019adaptive,zaheer2018adaptive}.

\subsection{Decentralized Optimization }

In distributed optimization (with $N$ nodes), we aim at solving the following problem
\begin{align}\label{eq:minproblem}
\min_{x \in \mathbb{R}^d} \frac{1}{N}\sum_{i=1}^N f_i(x) \, ,
\end{align}
where $x$ is the vector of parameters and $f_i$ is only accessible by the $i$-th node. 
Through the prism of empirical risk minimization procedures, $f_i$ can be viewed as the average loss of the data samples located at node $i$, for $i \in [N]$. 
Throughout the paper, we make the following mild assumptions required for analyzing the convergence behavior of the different decentralized optimization algorithms introduced above:
\begin{assumptionA}\label{a:diff}
For all $i \in [N]$, $f_i$ is differentiable and the gradients are $L$-Lipschitz, i.e., for all $(x, y) \in \mathbb{R}^d$, $\|\nabla f_i(x) - \nabla f_i(y) \| \leq L\|x-y\|$.
\end{assumptionA}
\begin{assumptionA}\label{a:boundsto}
We assume that, at iteration $t$, node $i$ accesses a stochastic gradient $g_{t,i}$. The stochastic gradients and the gradients of $f_i$ have bounded $L_{\infty}$ norms, i.e., $\|g_{t,i}\| \leq G_{\infty}$, $\|\nabla f_i(x)\|_{\infty} \leq G_{\infty}$. 
\end{assumptionA}
\begin{assumptionA}\label{a:boundedvar}
The gradient estimators are unbiased and each coordinate has bounded variance, i.e., $\mathbb E [g_{t,i}] = \nabla f_i(x_{t,i}) $ and $\mathbb E [([g_{t,i} - f_i(x_{t,i})]_j)^2] \leq  \sigma^2, \forall t,i,j$ . 
\end{assumptionA}
Assumptions A\ref{a:diff} and A\ref{a:boundedvar} are standard in the distributed optimization literature. 
A\ref{a:boundsto} is slightly stronger than the traditional assumption stating that the estimator has bounded variance, yet, it is commonly used for the analysis of adaptive gradient methods~\citep{chen2018convergence,ward2019adagrad}. 
Note that the bounded gradient estimator assumption A\ref{a:boundsto} implies the bounded variance assumption A\ref{a:boundedvar}.
In decentralized optimization, the nodes are connected as a graph and each node only communicates to its neighbors. 
Hence, one usually constructs an $N \times N$ matrix $W$ for information sharing when designing new training algorithms. 
We denote by $\lambda_i$ its $i$-th largest eigenvalue and define $\lambda \triangleq \max (|\lambda_2|,|\lambda_N|)$.
The matrix $W$ cannot be arbitrary, its required key properties are listed in the following assumption:
\begin{assumptionA}\label{a:matrixW}
The matrix $W$ satisfies: \textsc{(i)} $\sum_{j=1}^N W_{i,j} = 1$,  $\sum_{i=1}^N W_{i,j} = 1$, $W_{i,j} \geq 0$, \textsc{(ii)} $\lambda_1 = 1$, $|\lambda_2| < 1$, $|\lambda_N| < 1 $ and \textsc{(iii)} $W_{i,j} = 0 $ if node $i$ and node $j$ are not neighbors.
\end{assumptionA}
 We now present the convergence failure of current decentralized adaptive method before introducing our general framework for decentralized adaptive gradient methods.

\subsection{Divergence of DADAM}\label{sec:divergence}

Recently,~\citet{nazari2019dadam} initiated an attempt to bring adaptive gradient methods into decentralized optimization with Decentralized ADAM (DADAM), shown in Algorithm~\ref{alg: dadam}.
DADAM is essentially a decentralized version of ADAM and the key modification is the use of a consensus step on the optimization variable $x$ to transmit information across the network, encouraging~its~convergence.

\begin{algorithm}[H]
	\caption{DADAM (with N nodes)}
	\label{alg: dadam}
	\begin{algorithmic}[1]
		\STATE {\bfseries Input:} $\alpha$, current point $X_t$, $u_{\frac{1}{2},i} = \hat v_{0,i} = \epsilon \mathbf{1}$, $m_0=0$ and mixing matrix $W$
		\FOR {$t = 1,2,\cdots, T$}
		\STATE \textbf{for all }$i \in [N]$ \textbf{do in parallel}
		\STATE \quad $g_{t,i}  \leftarrow \nabla f_i(x_{t,i}) + \xi_{t,i}$
		\STATE \quad $m_{t,i} = \beta_1 m_{t-1,i} + (1-\beta_1) g_{t,i}$ 
		\STATE \quad $v_{t,i} = \beta_2 v_{t-1,i}+(1-\beta_2)g_{t,i}^2$
		\STATE \quad $\hat v_{t,i} = \beta_3 \hat v_{t,i} + (1-\beta_3) \max(\hat v_{t-1,i},v_{t,i})$
		\STATE \quad $x_{t+\frac{1}{2},i} = \sum_{j=1}^N W_{ij}x_{t,j}$
		\STATE \quad $x_{t+1,i} = x_{t+\frac{1}{2},i} - \alpha \frac{m_{t,i}}{\sqrt{\hat v_{t,i}}}$
		\ENDFOR
	\end{algorithmic}
\end{algorithm}

The matrix $W$ is a doubly stochastic matrix (which satisfies A\ref{a:matrixW}) employed for achieving average consensus of $x$. 
Introducing such mixing matrix is standard while designing the extension of an algorithm to its decentralized variant, such as distributed gradient descent~\citep{nedic2009distributed, yuan2016convergence}. 
It is proven in~\cite{nazari2019dadam} that DADAM admits a non-standard regret bound in the online setting. Nevertheless, whether the algorithm can converge to stationary points in standard offline settings such training neural networks is still unknown.
The next theorem shows that DADAM may fail to converge in the offline  settings.

\begin{theorem}\label{thm: dadam_diverge}
There exists a problem satisfying A\ref{a:diff}-A\ref{a:matrixW} where DADAM fails to converge to a stationary points with $\nabla f(\bar X_t) = 0$.   
\end{theorem}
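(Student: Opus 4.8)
The plan is to exhibit a small, fully explicit instance — the cleanest choice is $d=1$ and $N=2$ — on which DADAM's iterates settle into a nonzero fixed point, and then verify that the gradient of the averaged objective at that point is bounded away from zero. The guiding intuition is the one the paper keeps emphasizing: DADAM performs consensus on $x$ but never on the adaptive denominator $\sqrt{\hat v_{t,i}}$, so the two nodes end up using \emph{different} effective learning rates. When the local objectives $f_1,f_2$ disagree, these mismatched rates create a persistent imbalance that the consensus step cannot wash out, and the system equilibrates at a point where $\nabla f(\bar X_t)=\tfrac12(\nabla f_1+\nabla f_2)\neq 0$.

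First I would pick deterministic gradients (set $\xi_{t,i}\equiv 0$, which is allowed under A\ref{a:boundsto}–A\ref{a:boundedvar} with $\sigma=0$), take $\beta_1=0$ so that $m_{t,i}=g_{t,i}$, and choose two simple quadratics, e.g. $f_1(x)=\tfrac{L}{2}(x-a)^2$ and $f_2(x)=\tfrac{L}{2}(x-b)^2$ with $a\neq b$, so the true minimizer of the average is $(a+b)/2$ and $\nabla f_i$ is $L$-Lipschitz as required by A\ref{a:diff}. For $W$ I would use the $2\times2$ doubly stochastic matrix $\begin{pmatrix}1-w & w\\ w & 1-w\end{pmatrix}$ with $w\in(0,1/2]$, which satisfies A\ref{a:matrixW} with $\lambda = |1-2w|<1$. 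Then I would look for a fixed point $(x_1^\star,x_2^\star)$ of the update: at such a point $\hat v_i^\star$ stabilizes to $(g_i^\star)^2$ (up to the $\epsilon$ floor and the $\beta_2,\beta_3$ averaging, which all have $(g_i^\star)^2$ as their common fixed value), so the stationarity equations become $x_i^\star = \sum_j W_{ij}x_j^\star - \alpha\,\frac{g_i^\star}{|g_i^\star|}= \sum_j W_{ij}x_j^\star - \alpha\,\sign(g_i^\star)$. Writing $g_i^\star = L(x_i^\star - c_i)$ with $c_1=a,c_2=b$, this is a pair of linear equations (once the signs are fixed) whose solution I can write down explicitly; the point is that generically $\tfrac12(g_1^\star+g_2^\star)\neq 0$, i.e. $\nabla f(\bar X^\star)\neq 0$.

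Next I would need to argue the iterates actually \emph{reach} (or converge to) this fixed point rather than merely that one exists. The cleanest route is to show the map $(x_1,x_2)\mapsto(\text{new }x_1,\text{new }x_2)$, together with the $\hat v$-recursions, is a contraction in a neighborhood of the fixed point for suitably small $\alpha$ (or $L$), or — even simpler — to choose parameters so that the sign pattern $\sign(g_{t,i})$ is constant along the whole trajectory from a well-chosen initialization, in which case the $x$-dynamics are affine and linear-algebra stability of the iteration matrix $W$ (spectral radius on the relevant subspace $<1$) gives convergence directly, while $\hat v_{t,i}$ converges monotonically-ish to its limit. Keeping the signs locked is exactly where the asymmetry bites: node $1$'s iterate is pulled toward a point near $a$ shifted by its own step, node $2$'s near $b$, and the consensus pull of strength $w$ is too weak to flip either sign.

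The main obstacle, and the part that needs genuine care rather than bookkeeping, is controlling the interaction between the $\hat v$ update and the $x$ update during the transient: $\hat v_{t,i}$ appears under a square root in the $x$-step, so if $g_{t,i}$ ever crosses zero the effective step size blows up relative to the gradient and the sign-locking argument breaks. I would circumvent this by choosing the initialization $x_{0,i}$ and the magnitudes of $a,b,\alpha,L$ so that $|g_{t,i}|$ stays bounded below by a fixed positive constant for all $t$ (e.g. start both nodes on the same side of both $a$ and $b$, far enough out), which simultaneously guarantees the $\epsilon$-floor in $\hat v$ is never active and makes $\sqrt{\hat v_{t,i}}$ a well-behaved, bounded-below quantity. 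Once that is in place, the convergence of $(X_t,\hat V_t)$ to the explicit fixed point and the computation $\nabla f(\bar X^\star) = \tfrac{L}{2}\big((x_1^\star-a)+(x_2^\star-b)\big)\neq 0$ are both routine, completing the construction of the promised counterexample.
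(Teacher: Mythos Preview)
Your construction has a concrete gap: pure quadratics $f_i(x)=\tfrac{L}{2}(x-c_i)^2$ have unbounded gradients, so they violate A\ref{a:boundsto}, which requires $\|\nabla f_i(x)\|_\infty\le G_\infty$ globally. The theorem asks for a problem \emph{satisfying} A\ref{a:diff}--A\ref{a:matrixW}, so as stated your example does not qualify. This is not cosmetic: once you Huberize the quadratics to restore A\ref{a:boundsto}, the gradients saturate and your fixed-point computation (which relies on $\hat v_i^\star=(g_i^\star)^2$ so that the step becomes $\alpha\,\sign(g_i^\star)$) changes character entirely.

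There is also a tension inside your plan. The fixed-point equations you write down, summed over $i$, force $\sign(g_1^\star)+\sign(g_2^\star)=0$, i.e.\ the limiting signs must be \emph{opposite}. But your proposed initialization ``on the same side of both $a$ and $b$'' gives \emph{equal} signs, so sign-locking cannot hold along the whole trajectory; a sign must flip, which is exactly the regime where $g_{t,i}\to 0$, $\hat v_{t,i}$ becomes delicate, and your contraction argument breaks. Separately, your claim that $\hat v_i^\star=(g_i^\star)^2$ is not forced: with $\beta_3=0$ the sequence $\hat v_{t,i}$ is nondecreasing and converges to its running maximum, which can exceed $(g_i^\star)^2$ by an amount depending on the transient.

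The paper sidesteps all of this with a much cleaner device. It takes Huber-type functions (smooth, convex, \emph{bounded} gradients $4$ and $2$), sets $\beta_1=\beta_2=\beta_3=0$, and initializes at a point where both $|g_{1,i}|$ already equal their global maxima. Then $\hat v_{t,i}$ freezes at $16$ and $4$ from the very first step, so DADAM reduces \emph{exactly} to decentralized gradient descent on the reweighted objective $\tfrac14 f_1+\tfrac12 f_2$. Known convergence of D-PSGD then sends $\bar x_t$ to the minimizer of that reweighted sum, which differs from the minimizer of $f$, giving $\nabla f(\bar x_t)\not\to 0$. No fixed-point or contraction analysis is needed.
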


\begin{proof}
Consider a two-node setting with objective function $f(x) =1/2 \sum_{i=1}^2 f_i(x)$ and $f_1(x) =  \mathbbm 1[|x|\leq 1] 2x^2 +  \mathbbm 1[|x|> 1] (4|x|-2)$, $f_2(x) =  \mathbbm 1[|x-1|\leq 1](x-1)^2 + \mathbbm 1[|x-1| > 1] (2|x-1|-1)$. We set the mixing matrix  $W = [0.5,0.5;0.5,0.5]$. The optimal solution is $x^* = 1/3$.
Both $f_1$ and $f_2$ are smooth and convex with bounded gradient norm 4 and 2, respectively. 
We also have $L = 4$ (defined in A\ref{a:diff}). 
If we initialize with $x_{1,1} = x_{1,2} = -1$ and run DADAM with $\beta_1 = \beta_2 =\beta_3 = 0$ and $\epsilon \leq 1$, we will get $\hat v_{1,1} = 16$ and $\hat v_{1,2} = 4$. 
Since $|g_{t,1}| \leq 4, |g_{t,2}| \leq 2$ due to bounded gradient, and $(\hat v_{t,1},\hat v_{t,2})$ are non-decreasing, we have $\hat v_{t,1} = 16, \hat v_{t,2}=4, \forall t \geq 1$. 
Thus, after $t=1$, DADAM is equivalent to running decentralized gradient descent (D-PSGD)~\citep{yuan2016convergence} with a re-scaled $f_1$ and $f_2$, i.e.,  running D-PSGD on
$f'(x) = \sum_{i=1}^2 f_i'(x)$ with $f_1'(x) =  0.25 f_1(x)$ and $f_2'(x) = 0.5  f_2(x)$, which unique optimal $x'=0.5$. 
Define $\bar x_t = (x_{t,1}+x_{t,2})/2$, then by Theorem 2 in~\cite{yuan2016convergence}, we have when $\alpha < 1/4$, $f'(\bar x_t) - f(x') = O(1/(\alpha t))$. 
Since $f'$ has a unique optima $x'$, the above bound implies $\bar x_t$ is converging to $x'=0.5$ which has non-zero gradient on function $\nabla f(0.5) = 0.5$.
\end{proof}

Theorem~\ref{thm: dadam_diverge} shows that, even though DADAM is proven to satisfy some regret bounds~\citep{nazari2019dadam}, it can fail to converge to stationary points in the nonconvex offline setting (common for training neural networks). 
We conjecture that this inconsistency in the convergence behavior of DADAM is due to the definition of the regret in~\citet{nazari2019dadam}. We want to remark that this is not the first time adaptive gradient methods are found to be divergent. 
For example, \citet{reddi2019convergence} constructs examples showing that ADAM is divergent and \citet{chen2020toward} exhibits a naive application of adaptive gradient methods under the federated learning settings that can potentially fail to converge. 
All these examples contribute to our motivation to rigorously study the convergence of adaptive gradient methods in the decentralized setting. 
The next section presents decentralized adaptive gradient methods that are guaranteed to converge to stationary points under  assumptions and provide a characterization of that convergence in finite-time and independently of the initialization.

\section{On the Convergence of Decentralized Adaptive Gradient Methods}\label{sec:main}

In this section, we discuss the difficulties of designing adaptive gradient methods in decentralized optimization and introduce an algorithmic framework that can turn existing convergent adaptive gradient methods into their decentralized counterparts. 
We also develop the first convergent decentralized adaptive gradient method, converted from AMSGrad, \emph{as an instance of this  framework}.

\subsection{Importance and Difficulties of Consensus on Adaptive Learning Rates}

The divergent example provided in the previous section implies that one should synchronize the adaptive learning rates on different nodes. 
This can easily be achieved in the parameter server setting where all the nodes are sending their gradients to a central server at each iteration.
The parameter server can then exploit the received gradients to maintain a sequence of synchronized adaptive learning rates when updating the parameters, see~\cite{reddi2020adaptive} for further details.
However, in our decentralized setting, every node can only communicate with its neighbors and such central server does not exist.
Under that setting, the information for updating the adaptive learning rates can only be shared locally instead of broadcasted over the whole network.
This makes it impossible to obtain, in a single iteration, a synchronized adaptive learning rate update using all the information in the network. 

\textit{Systemic Approach:} 
On a systemic level, one way to alleviate this bottleneck is to design communication protocols in order to give each node access to the same aggregated gradients over the whole network, at least periodically if not at every iteration.
Therefore, the nodes can update their individual adaptive learning rates based on the same shared information. 
However, such solution may introduce an extra communication cost since it involves broadcasting the information over the whole network.

\textit{Algorithmic Approach:} 
Our contributions being on an algorithmic level, another way to solve the aforementioned problem is by letting the sequences of adaptive learning rates, present on different nodes, to gradually \emph{consent}, through the iterations. 
Intuitively, if the adaptive learning rates can consent fast enough, the difference among the adaptive learning rates on different nodes will not affect the convergence behavior of the algorithm.  
Consequently, no extra communication costs need to be introduced.
We now develop this exact idea within the existing adaptive methods stressing on the need for a relatively low-cost and easy-to-implement consensus of adaptive learning rates. 

Below is main archetype of the adaptive rates consensus mechanism within a decentralized framework that we propose in this paper.

\subsection{Unifying Decentralized Adaptive Gradient Framework}

While each node can have different $\hat v_{t,i}$ in DADAM (Algorithm~\ref{alg: dadam}), one can keep track of the min/max/average of these adaptive learning rates and use that latter quantity as the new adaptive learning rate. 
The upstream definition of some convergent lower and upper bounds may also lead to a gradual synchronization of the adaptive learning rates on different nodes as developed for AdaBound in~\cite{luo2019adaptive}.

\begin{algorithm}[H]
	\caption{Decentralized Adaptive Gradient Method (with N nodes)}
	\label{alg: dadaptive}
	\begin{algorithmic}[1]
		\STATE {\bfseries Input:}  $\alpha$, initial point $x_{1,i} = x_{init}, u_{\frac{1}{2},i} = \hat v_{0,i}, m_{0,i}=0$, mixing matrix $W$
		\FOR{$t=1,2,\cdots,T$}
		\STATE \textbf{for all }$i \in [N]$ \textbf{do in parallel}
		\STATE \quad $g_{t,i}  \leftarrow \nabla f_i(x_{t,i}) + \xi_{t,i}$
		\STATE \quad $m_{t,i} = \beta_1 m_{t-1,i} + (1-\beta_1) g_{t,i}$ 
		\STATE  \quad $\hat v_{t,i} = r_t(g_{1,i},\cdots,g_{t,i})$
		\STATE \quad $x_{t+\frac{1}{2},i} = \sum_{j=1}^N W_{ij}x_{t,j}$
	    \STATE \quad $\tilde u_{t,i} = \sum_{j=1}^N W_{ij} \tilde u_{t-\frac{1}{2},j}$
	    \STATE  \quad $u_{t,i} = \max(\tilde u_{t,i}, \epsilon)$
		\STATE \quad $x_{t+1,i} = x_{t+\frac{1}{2},i} - \alpha \frac{m_{t,i}}{\sqrt{u_{t,i}}}$
		\STATE \quad $\tilde u_{t+\frac{1}{2},i} = \tilde u_{t,i} - \hat v_{t-1,i} + \hat v_{t,i}$
		\ENDFOR
	\end{algorithmic}
\end{algorithm}

In this paper, we present an algorithm framework for decentralized adaptive gradient methods as Algorithm~\ref{alg: dadaptive}, which uses average consensus of $\hat v_{t,i}$ (see consensus update in line 8 and 11) to help convergence.  
Algorithm~\ref{alg: dadaptive} can become different adaptive gradient methods by specifying $r_t$ as different functions. E.g., when we choose $\hat v_{t,i} = {\frac{1}{t}\sum_{k=1}^t g_{k,i}^2}$ , Algorithm~\ref{alg: dadaptive} becomes a decentralized version of AdaGrad. When one chooses $\hat v_{t,i}$ to be the adaptive learning rate for AMSGrad, we get decentralized AMSGrad (Algorithm~\ref{alg: damsgrad}).
The intuition of using average consensus is that for adaptive gradient methods such as AdaGrad or Adam, $\hat v_{t,i}$ approximates the second moment of the gradient estimator, the average of the estimations of those second moments from different nodes is an estimation of second moment on the whole network.  
Also, this design will not introduce any extra hyperparameters that can potentially complicate the tuning process ($\epsilon$ in line 9 is important for numerical stability as in vanilla Adam).  
The following result gives a finite-time convergence rate for our framework described in Algorithm~\ref{alg: dadaptive}.

\begin{theorem}\label{thm: dagm_converge}
Assume A\ref{a:diff}-A\ref{a:matrixW}. 
	When $\alpha  \leq \frac{\epsilon^{0.5}}{16L} $, 
	  Algorithm~\ref{alg: dadaptive} yields the following regret bound
	  {\small
	\begin{align}\label{eq: thm11}
\frac{1}{T}\sum_{t=1}^T  \mathbb E \left [\left\|\frac{\nabla f( \overline X_{t})}{\overline U_{t}^{1/4}}\right\|^2  \right]
&	\leq   C_1\left(\frac{1}{T\alpha} ( \mathbb E  [f( Z_{1})]  -  \min_x  f(x)) +  \alpha  \frac{d\sigma^2}{N}\right) +  C_2 \alpha^2 d 
	\nonumber \\
    &+ C_3 \alpha^3d  + \frac{1}{T\sqrt{N}} (C_4 +  C_5 \alpha)  \mathbb E \left[ \sum_{t=1}^{T}   \|    (- \hat V_{t-2} + \hat V_{t-1} ) \|_{abs} \right] 
	\end{align}
	}%
where $\| \cdot\|_{abs}$  denotes the entry-wise $L_1$ norm of a matrix (i.e $\| A\|_{abs} = \sum_{i,j}{|A_{ij}|}$). The constants $C_1 =  \max (4, 4{L/\epsilon})$,
	$C_2 =  6 (( \beta_1/(1-\beta_1))^2 + 1/(1-\lambda)^2 )L  G_{\infty}^2 /\epsilon^{1.5}$,
	$C_3 =  16L^2  (1-\lambda) G_{\infty}^2/\epsilon^2$,
	$C_4 =   2/ (\epsilon^{1.5}(1-\lambda)) (     \lambda + \beta_1/(1-\beta_1)) G_{\infty}^2$,
	$C_5 =   2/ (\epsilon^{2}(1-\lambda))   L  (\lambda + \beta_1/(1-\beta_1)) G_{\infty}^2  + 4/ (\epsilon^{2}(1-\lambda))   L    G_{\infty}^2$ are independent of $d$, $T$ and $N$. In addition, $\frac{1}{N}\sum_{i=1}^N\left\| {  x_{t,i} -   \overline X_{t}}  \right\|^2   \leq \alpha^2 \left (\frac{1}{1-\lambda} \right)^2 d G_{\infty}^2 \frac{1}{\epsilon}$ which quantifies the consensus error.
\end{theorem}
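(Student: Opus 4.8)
The plan is to run the standard virtual-iterate argument for decentralized momentum methods, but to carry the node-dependent adaptive denominators $u_{t,i}$ through the descent estimate and control them by a separate consensus recursion. Write $X_t=[x_{t,1},\dots,x_{t,N}]$ and $D_t=[m_{t,i}/\sqrt{u_{t,i}}]$, so that $X_{t+1}=X_tW\T-\alpha D_t$. Since $W$ is doubly stochastic (A\ref{a:matrixW}), averaging columns gives $\overline X_{t+1}=\overline X_t-\alpha\overline D_t$ with $\overline D_t=\frac1N\sum_i m_{t,i}/\sqrt{u_{t,i}}$, and the $\tilde u$-recursion telescopes to $\overline{\tilde U}_t=\overline{\hat V}_{t-1}$ (up to the initialization), hence $\overline U_t=\frac1N\sum_i\max(\tilde u_{t,i},\epsilon)\ge\max(\overline{\hat V}_{t-1},\epsilon)\ge\epsilon$ and every $u_{t,i}\ge\epsilon$; this is what makes all the $1/\sqrt{\cdot}$ factors Lipschitz with constant $O(\epsilon^{-3/2})$ and keeps $\|m_{t,i}/\sqrt{u_{t,i}}\|_\infty\le G_\infty/\sqrt\epsilon$ by A\ref{a:boundsto}.

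First I would prove the two consensus bounds. Unrolling $X_t-\overline X_t\mathbf{1}\T=(X_{t-1}-\overline X_{t-1}\mathbf{1}\T)W\T-\alpha D_{t-1}(I-\frac1N\mathbf{1}\mathbf{1}\T)$, using $\|D_k\|_F\le\sqrt{Nd}\,G_\infty/\sqrt\epsilon$ and the spectral-gap contraction $\|(W\T-\frac1N\mathbf{1}\mathbf{1}\T)^k\|_2=\lambda^k$, the geometric series yields $\frac1N\sum_i\|x_{t,i}-\overline X_t\|^2\le\alpha^2(1-\lambda)^{-2}dG_\infty^2/\epsilon$, which is the last displayed claim of the theorem. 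An entirely analogous unrolling of $\tilde U_{t+1}-\overline{\tilde U}_{t+1}\mathbf{1}\T=(\tilde U_t-\overline{\tilde U}_t\mathbf{1}\T)W\T+(\hat V_t-\hat V_{t-1})(W\T-\frac1N\mathbf{1}\mathbf{1}\T)$ shows the adaptive-rate consensus error is, after summing over $t$, controlled by $\frac{1}{1-\lambda}\sum_t\|\hat V_{t-1}-\hat V_{t-2}\|_{abs}$; this is exactly the origin of the $\|-\hat V_{t-2}+\hat V_{t-1}\|_{abs}$ term in \eqref{eq: thm11}.

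Next I would introduce the virtual sequence $Z_t=\frac1{1-\beta_1}\overline X_t-\frac{\beta_1}{1-\beta_1}\overline X_{t-1}$ (with $Z_1=\overline X_1$), for which $Z_{t+1}-Z_t=-\frac{\alpha}{1-\beta_1}(\overline D_t-\beta_1\overline D_{t-1})$; using $m_{t,i}-\beta_1 m_{t-1,i}=(1-\beta_1)g_{t,i}$ and paying for the substitution $u_{t-1,i}\to u_{t,i}$ inside $\overline D_{t-1}$ rewrites this as $Z_{t+1}-Z_t=-\alpha\frac1N\sum_i g_{t,i}/\sqrt{u_{t,i}}+\alpha e_t$, where $\|e_t\|$ is bounded by $O(\epsilon^{-3/2}G_\infty)$ times $\|\hat V_{t-1}-\hat V_{t-2}\|_{abs}$-type quantities plus the momentum remainder. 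The key measurability point is that $u_{t,i}$ depends only on $\hat v_{k,i}$ for $k\le t-1$, hence only on $g_{1,i},\dots,g_{t-1,i}$, so it is $\cF_t$-measurable and $\E[g_{t,i}/\sqrt{u_{t,i}}\mid\cF_t]=\nabla f_i(x_{t,i})/\sqrt{u_{t,i}}$. Applying the descent lemma (A\ref{a:diff}) to $f$ along $Z_t$ and taking conditional expectation, in the first-order term I would chain $\nabla f(Z_t)\to\nabla f(\overline X_t)$ (error $\le L\|Z_t-\overline X_t\|=O(\alpha)$), $\nabla f_i(x_{t,i})\to\nabla f_i(\overline X_t)$ (error $\le L\|x_{t,i}-\overline X_t\|=O(\alpha)$ by Step 1), and $1/\sqrt{u_{t,i}}\to1/\sqrt{\overline U_t}$ (error $=O(\epsilon^{-3/2})|u_{t,i}-\overline U_t|$, controlled by the adaptive-rate consensus error), so the leading contribution becomes $-\alpha\langle\nabla f(\overline X_t),\nabla f(\overline X_t)/\sqrt{\overline U_t}\rangle=-\alpha\|\nabla f(\overline X_t)/\overline U_t^{1/4}\|^2$. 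The second-order term $\frac L2\E\|Z_{t+1}-Z_t\|^2$ splits into a bias part of size $O(\alpha^2 dG_\infty^2/\epsilon)$ and a variance part $\frac1{N^2}\sum_i\E\|(g_{t,i}-\nabla f_i(x_{t,i}))/\sqrt{u_{t,i}}\|^2\le d\sigma^2/(N\epsilon)$ via A\ref{a:boundedvar}, independence across nodes, and $u_{t,i}\ge\epsilon$.

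Finally I would sum over $t=1,\dots,T$, telescope $f(Z_1)-\E[f(Z_{T+1})]\le f(Z_1)-\min_x f(x)$, divide by $T\alpha$, and collect terms: the nested $O(\alpha)$ consensus and momentum bounds assemble (after multiplying two or three of them together) into the $C_2\alpha^2 d+C_3\alpha^3 d$ block, the variance contribution into $C_1\alpha d\sigma^2/N$, and the adaptive-rate consensus contributions into $\frac1{T\sqrt N}(C_4+C_5\alpha)\E\sum_t\|-\hat V_{t-2}+\hat V_{t-1}\|_{abs}$; the stepsize restriction $\alpha\le\epsilon^{1/2}/(16L)$ is precisely what lets one absorb the stray $O(L\alpha\epsilon^{-1/2})\|\nabla f(\overline X_t)/\overline U_t^{1/4}\|^2$ term (produced while passing between $Z_t$ and $\overline X_t$, and in the $\frac L2\E\|\overline X_{t+1}-\overline X_t\|^2$ estimate) into the negative leading term, the resulting $1/(1-c)\le 4$ factor being hidden in $C_1=\max(4,4L/\epsilon)$. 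The main obstacle I anticipate is the bookkeeping around the node-dependent denominators — getting the errors from $1/\sqrt{u_{t,i}}\to1/\sqrt{\overline U_t}$ and from $u_{t-1,i}\to u_{t,i}$ inside $Z_{t+1}-Z_t$ expressed cleanly as $\|\hat V_{t-1}-\hat V_{t-2}\|_{abs}$ with constants that do not blow up, since this is exactly where the adaptive-rate consensus machinery and the optimization dynamics genuinely interact.
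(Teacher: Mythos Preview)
Your proposal is correct and follows essentially the same route as the paper's proof: the virtual sequence $Z_t$, the two consensus recursions (for $X_t$ and for $\tilde U_t$), the descent lemma for $f$ along $Z_t$ with the chain of substitutions $Z_t\to\overline X_t$, $x_{t,i}\to\overline X_t$, $u_{t,i}\to\overline U_t$, the bias/variance split exploiting the $\cF_t$-measurability of $u_{t,i}$, and the absorption of the stray $O(L\alpha\epsilon^{-1/2})\|\nabla f(\overline X_t)/\overline U_t^{1/4}\|^2$ term via the stepsize restriction are exactly the paper's steps. The one technical point you gloss over is that controlling $|u_{t,i}-\overline U_t|$ by the $\tilde U_t$-consensus error is not literally the $1$-Lipschitzness of $\max(\cdot,\epsilon)$ (the means are different); the paper handles this with a short lemma showing that coordinatewise thresholding cannot increase the total deviation from the mean, but a triangle-inequality workaround through $|u_{t,i}-u_{t,j}|\le|\tilde u_{t,i}-\tilde u_{t,j}|$ gives the same bound up to a factor of $2$, so this is not a genuine gap.
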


\newpage

In addition, one can specify $\alpha$ to show convergence in terms of $T$, $d$, and $N$. 
An immediate result, shown in Corollary~\ref{corl: adm_convergence}, is by setting $\alpha = \sqrt{N}/\sqrt{Td}$:
\begin{corollary}\label{corl: adm_convergence}
Assume A\ref{a:diff}-A\ref{a:matrixW}. Set $\alpha = \sqrt{N}/\sqrt{Td}$.
When $\alpha  \leq \frac{\epsilon^{0.5}}{16L} $, Algorithm~\ref{alg: dadaptive} yields:
 {\small
	\begin{align}\label{eq: thm1}
	\frac{1}{T}\sum_{t=1}^T  \mathbb E \left [\left\|\frac{\nabla f( \overline X_{t})}{\overline U_{t}^{1/4}}\right\|^2  \right]
&	\leq   C_1 \frac{\sqrt{d}}{\sqrt{TN}} \left(( \mathbb E  [f( Z_{1})]  -  \min_x  f(x)) +    \sigma^2 \right)  +  C_2 \frac{N}{T} \nonumber \\
	&+  C_3 \frac{N^{1.5}}{T^{1.5}d^{0.5}} 
    +  \left(C_4 \frac{1}{T\sqrt{N}} +  C_5   \frac{1}{T^{1.5}d^{0.5}}\right) \mathbb E \left[\mathcal V_T \right] 
	\end{align}
	}
	where $ \mathcal{V}_T : = \sum_{t=1}^{T}   \|    (- \hat V_{t-2} + \hat V_{t-1} ) \|_{abs}$ and $C_1$, $C_2$, $C_3$, $C_4$,  $C_5$ are defined in Theorem~\ref{thm: dagm_converge}.
\end{corollary}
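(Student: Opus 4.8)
The plan is to obtain Corollary~\ref{corl: adm_convergence} as an immediate specialization of Theorem~\ref{thm: dagm_converge} to the step size $\alpha = \sqrt{N}/\sqrt{Td}$. First I would note that this choice is admissible under the corollary's hypothesis, which retains the requirement $\alpha \leq \epsilon^{0.5}/(16L)$ that Theorem~\ref{thm: dagm_converge} needs; hence inequality~\eqref{eq: thm11} applies verbatim with this $\alpha$, and the constants $C_1,\ldots,C_5$ are unchanged since by construction they depend on none of $\alpha$, $d$, $T$, $N$.

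Next I would substitute $\alpha = \sqrt{N}/\sqrt{Td}$ into each term of~\eqref{eq: thm11} and simplify the resulting powers of $T$, $d$, $N$. In the first group one has $\frac{1}{T\alpha} = \frac{\sqrt{d}}{\sqrt{TN}}$ and $\alpha\,\frac{d\sigma^2}{N} = \frac{\sqrt{d}}{\sqrt{TN}}\,\sigma^2$, so the two pieces inside the $C_1$ bracket share the common factor $\sqrt{d}/\sqrt{TN}$ and combine into $C_1\frac{\sqrt{d}}{\sqrt{TN}}\big((\mathbb E[f(Z_1)] - \min_x f(x)) + \sigma^2\big)$. For the remaining deterministic terms, $\alpha^2 d = N/T$ produces the $C_2\,N/T$ term and $\alpha^3 d = N^{1.5}/(T^{1.5}d^{0.5})$ produces the $C_3$ term. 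Finally, in the last summand of~\eqref{eq: thm11}, the identity $\frac{\alpha}{T\sqrt{N}} = \frac{1}{T^{1.5}d^{0.5}}$ gives $\frac{1}{T\sqrt{N}}(C_4 + C_5\alpha) = \frac{C_4}{T\sqrt{N}} + \frac{C_5}{T^{1.5}d^{0.5}}$, which multiplies $\mathbb E[\mathcal{V}_T]$ with $\mathcal{V}_T = \sum_{t=1}^T \|(-\hat V_{t-2} + \hat V_{t-1})\|_{abs}$. Assembling these four contributions reproduces~\eqref{eq: thm1} exactly.

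Because every step is a direct substitution into the already-established inequality of Theorem~\ref{thm: dagm_converge}, there is no substantive obstacle; the only care required is the bookkeeping of the exponents of $T$, $d$, $N$ so that the $\sqrt{d/(TN)}$, $N/T$, and $N^{1.5}/(T^{1.5}d^{0.5})$ scalings are attached to the correct terms, together with the trivial check that the admissibility condition on $\alpha$ is preserved under the chosen rate.
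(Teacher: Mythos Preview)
Your proposal is correct and matches the paper's own treatment: the paper presents the corollary as an immediate consequence of Theorem~\ref{thm: dagm_converge} obtained ``by setting $\alpha = \sqrt{N}/\sqrt{Td}$'' with no further argument, and your substitution-and-bookkeeping of the exponents of $T$, $d$, $N$ is exactly what that entails.
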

Corollary~\ref{corl: adm_convergence} indicates that if  $\mathbb E [\mathcal{V}_T  ]  = o(T)$ and $\bar U_t$ is bounded from above, then Algorithm~\ref{alg: dadaptive} is guaranteed to converge to stationary points of the loss function. 
Intuitively, this means that if the adaptive learning rates on different nodes do not change too fast, the algorithm can converge. In convergence analysis, the term $  \mathbb E [\mathcal{V}_T  ] $ upper bounds the total bias in update direction caused by the correlation between $m_{t,i}$ and $\hat v_{t,i}$.
It is shown in~\cite{chen2018convergence} that when $N=1$, $  \mathbb E [\mathcal{V}_T  ]  = \tilde O (d) $ for AdaGrad and AMSGrad. Besides, $\mathbb E [\mathcal{V}_T  ]  = \tilde O (Td) $ for Adam which do not converge.  {Later, we will show convergence of decentralized versions of AMSGrad and AdaGrad by bounding this term as $O(Nd)$ and $O(Nd \log (T))$, respectively.}
 Corollary~\ref{corl:  adm_convergence} also conveys the benefits of using more nodes in the graph employed. 
When $T$ is large enough such that the term $O(\sqrt{d}/\sqrt{TN})$ dominates the right hand side of \eqref{eq: thm1}, then linear speedup can be achieved by increasing the number of nodes $N$.   

Another point worth discussion is the choice of $W$ since the convergence rate depends on $\lambda$ which is depedent on $W$. A common way to set $W$ for undirected graph is the maximum-degree method (MDM) in \cite{boyd2004fastest}. Denote $d_i$ as degree of vertex $i$ and $d_{\max} = \max_i d_i$, MDM sets $W_{i,i} = 1-d_i/d_{\max}$, $W_{i,j} = 1/d_{\max}$ if $i\neq j$ and $(i,j)$ is an edge,  and $W_{i,j} = 0$ otherwise. This $W$ ensures Assumption A\ref{a:matrixW} for many common connected graph types, so does the variant $\gamma I + (1-\gamma) W$ for any $\gamma \in [0,1)$.  
A more refined choice of $W$ coupled with a comprehensive discussion on $\lambda$ in our Theorem~\ref{thm: dagm_converge} can be found in \cite{boyd2009fastest}, e.g., $1-\lambda =O(1/N^2)$ for cycle graphs, $1-\lambda =O(1/\log(N))$ for hypercube graphs, $\lambda = 0$ for fully connected graph. 
Intuitively, $\lambda$ can be close to 1 for sparse graphs and to 0 for dense graphs.
This is consistent \eqref{eq: thm11}, whose RHS is large for $\lambda$ close to 1 and small for $\lambda $ close to 0 since average consensus on sparser graphs is expected to take longer time.

\subsection{Application to AMSGrad algorithm}\label{sec:amsgrad}

We now present, in Algorithm~\ref{alg: damsgrad}, a notable special case of our algorithmic framework, namely Decentralized AMSGrad, which is a decentralized variant of AMSGrad.
Compared with DADAM, the above algorithm exhibits a dynamic average consensus mechanism to keep track of the average of $\{\hat v_{t,i}\}_{i=1}^N$, stored as $\tilde u_{t,i}$ on $i$-th node, and uses $u_{t,i} := \max(\tilde u_{t,i}, \epsilon)$ for updating the adaptive learning rate for $i$-th node. 
As the number of iteration grows, even though $\hat v_{t,i}$ on different nodes can converge to different constants, the $u_{t,i}$ will converge to the same number $ \lim \limits_{t \rightarrow \infty} \frac{1}{N} \sum_{i=1}^N\hat v_{t,i} $ if the limit exists. 

\begin{algorithm}[t]
	\caption{Decentralized AMSGrad (N nodes)}
	\label{alg: damsgrad}
	\begin{algorithmic}[1]
		\STATE {\bfseries Input:} learning rate $\alpha$, initial point $x_{1,i} = x_{init}, u_{\frac{1}{2},i} = \hat v_{0,i} = \epsilon \mathbf 1\ (\text{with } \epsilon \geq 0), m_{0,i}=0$, mixing matrix $W$ 
		\FOR{$t = 1,2,\cdots,T$}
		\STATE \textbf{for all }$i \in [N]$ \textbf{do in parallel}
		\STATE \quad $g_{t,i}  \leftarrow \nabla f_i(x_{t,i}) + \xi_{t,i}$
		\STATE \quad $m_{t,i} = \beta_1 m_{t-1,i} + (1-\beta_1) g_{t,i}$ 
		\STATE \quad  $ v_{t,i} = \beta_2 v_{t-1,i} + (1-\beta_2) g_{t,i}^2 $
		\STATE  \quad $\hat v_{t,i} = \max (\hat v_{t-1,i}, v_{t,i} )$
		\STATE \quad $x_{t+\frac{1}{2},i} = \sum_{j=1}^N W_{ij}x_{t,j}$
		\STATE \quad $\tilde u_{t,i} = \sum_{j=1}^N W_{ij}\tilde u_{t-\frac{1}{2},j}$
	    \STATE \quad $u_{t,i} = \max(\tilde u_{t,i}, \epsilon)$
		\STATE \quad $x_{t+1,i} = x_{t+\frac{1}{2},i} - \alpha \frac{m_{t,i}}{\sqrt{u_{t,i}}}$
		\STATE \quad $\tilde u_{t+\frac{1}{2},i} = \tilde u_{t,i} - \hat v_{t-1,i} + \hat v_{t,i}$
		\ENDFOR
	\end{algorithmic}
\end{algorithm}

This average consensus mechanism enables the consensus of adaptive learning rates on different nodes, which accordingly guarantees the convergence of the method to stationary points. 
The consensus of adaptive learning rates is the key difference between decentralized AMSGrad and DADAM and is the reason why decentralized AMSGrad is  convergent while DADAM is not.


\newpage

One may notice that decentralized AMSGrad does not reduce to AMSGrad for $N=1$ since the quantity $u_{t,i}$ in line 10 is calculated based on $v_{t-1,i}$ instead of $v_{t,i}$.
This design encourages the execution of gradient computation and communication in a parallel manner. 
Specifically, line 4-7 (line 4-6) in Algorithm~\ref{alg: damsgrad} (Algorithm~\ref{alg: dadaptive}) can be executed in parallel with line 8-9 (line 7-8) to overlap communication and computation time. 
If $u_{t,i}$ depends on $v_{t,i}$ which in turn depends on $g_{t,i}$, the gradient computation must finish before the consensus step of the adaptive learning rate in line 9. 
This can slow down the running time per-iteration of the algorithm. 
To avoid such delayed adaptive learning, adding $\tilde u_{t-\frac{1}{2},i} = \tilde u_{t,i} - \hat v_{t-1,i} + \hat v_{t,i}$ before line 9 and getting rid of line 12 in Algorithm~\ref{alg: dadaptive} is an option.
Similar convergence guarantees will hold since one can easily modify our proof of Theorem~\ref{thm: dagm_converge} for such update rule. 
As stated above, Algorithm~\ref{alg: damsgrad} converges, with the following rate:
\begin{theorem}\label{thm: dams_converge}
Assume A\ref{a:diff}-A\ref{a:matrixW}.
Set $\alpha = 1/\sqrt{Td}$. When $\alpha  \leq \frac{\epsilon^{0.5}}{16L} $, then Algorithm~\ref{alg: damsgrad} satisfies:
 
	  \begin{align}\notag
	  \frac{1}{T}\sum_{t=1}^T  \mathbb E \left [\left\|\frac{\nabla f( \overline X_{t})}{\overline U_{t}^{1/4}}\right\|^2  \right]
	  \leq  C_1' \frac{\sqrt{d}}{\sqrt{TN}} \left(D_f +    \sigma^2 \right) +  C_2' \frac{N}{T}  +  C_3' \frac{N^{1.5}}{T^{1.5}d^{0.5}} 
+  C_4' \frac{\sqrt{N}d}{T} +  C_5'  \frac{Nd^{0.5}}{T^{1.5}}  \ ,
	  \end{align}
	  
	where $D_f := \mathbb E  [f( Z_{1})]  -  \min_x  f(x)$, $C_1' = C_1$, $C_2' = C_2$, $C_3' = C_3$, $C_4' = C_4G_{\infty}^2$ and $C_5' = C_5 G_{\infty}^2 $. $C_1,C_2, C_3, C_4, C_5$ are independent of $d$, $T$ and $N$ defined in Theorem~\ref{thm: dagm_converge}. In addition, the consensus of variables at different nodes is given by $\frac{1}{N}\sum_{i=1}^N\left\| {  x_{t,i} -   \overline X_{t}}  \right\|^2   \leq \frac{N}{T} \left (\frac{1}{1-\lambda} \right)^2  G_{\infty}^2 \frac{1}{\epsilon}$. 
\end{theorem}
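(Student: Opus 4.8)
The plan is to derive Theorem~\ref{thm: dams_converge} directly from the generic guarantee of Theorem~\ref{thm: dagm_converge} (equivalently, from Corollary~\ref{corl: adm_convergence}). The first step is to observe that Decentralized AMSGrad (Algorithm~\ref{alg: damsgrad}) is literally an instantiation of the unifying framework (Algorithm~\ref{alg: dadaptive}): it is obtained by taking $r_t(g_{1,i},\dots,g_{t,i})$ to be the AMSGrad adaptive rate, i.e. $v_{t,i}=\beta_2 v_{t-1,i}+(1-\beta_2)g_{t,i}^2$ followed by $\hat v_{t,i}=\max(\hat v_{t-1,i},v_{t,i})$ with $\hat v_{0,i}=\epsilon\mathbf 1$, while the momentum line, the $x$-consensus step (line~8), the dynamic average consensus on $\tilde u$ (lines~9 and~12), the clipping $u_{t,i}=\max(\tilde u_{t,i},\epsilon)$, and the update (line~11) all match verbatim. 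Consequently the bound~\eqref{eq: thm11} and the consensus-error estimate of Theorem~\ref{thm: dagm_converge} apply as stated, and the whole task reduces to (i) upper-bounding the drift term $\mathcal V_T=\sum_{t=1}^T\|-\hat V_{t-2}+\hat V_{t-1}\|_{abs}$ for this particular $r_t$, and (ii) substituting the prescribed step size and collecting terms.

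For step (i), the one fact special to AMSGrad is that $\{\hat v_{t,i}\}_t$ is coordinate-wise non-decreasing, because of the $\max$ in line~7 of Algorithm~\ref{alg: damsgrad}. Hence every entry of $\hat V_{t-1}-\hat V_{t-2}$ is nonnegative and the sum over $t$ telescopes:
\begin{align*}
\sum_{t=1}^T\big\|-\hat V_{t-2}+\hat V_{t-1}\big\|_{abs}
&=\sum_{i=1}^N\sum_{j=1}^d\sum_{t=1}^T\big[(\hat v_{t-1,i})_j-(\hat v_{t-2,i})_j\big]\\
&=\sum_{i=1}^N\sum_{j=1}^d\big[(\hat v_{T-1,i})_j-(\hat v_{-1,i})_j\big]\;\le\;\sum_{i=1}^N\sum_{j=1}^d(\hat v_{T-1,i})_j .
\end{align*}
By Assumption~A\ref{a:boundsto}, $\|g_{t,i}\|_\infty\le G_\infty$, so $g_{t,i}^2\le G_\infty^2\mathbf 1$ entrywise; since $v_{t,i}$ is a convex combination of the initial value and past squared gradients, $v_{t,i}\le G_\infty^2\mathbf 1$ (using the mild normalization $\epsilon\le G_\infty^2$), and therefore $\hat v_{t,i}\le G_\infty^2\mathbf 1$ for every $t$. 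Plugging this in gives $\mathcal V_T\le NdG_\infty^2$ \emph{deterministically}, hence $\mathbb E[\mathcal V_T]\le NdG_\infty^2=O(Nd)$ --- precisely the $O(Nd)$ bound announced after Corollary~\ref{corl: adm_convergence}, and the only AMSGrad-specific ingredient of the whole argument.

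For step (ii), I would feed $\mathbb E[\mathcal V_T]\le NdG_\infty^2$ into the last line of~\eqref{eq: thm11}, turning $\tfrac{1}{T\sqrt N}(C_4+C_5\alpha)\,\mathbb E[\mathcal V_T]$ into $C_4G_\infty^2\tfrac{\sqrt N d}{T}+C_5G_\infty^2\tfrac{\sqrt N d}{T}\alpha$, and handle the first three lines exactly as in Corollary~\ref{corl: adm_convergence} by inserting the chosen $\alpha$ so that $1/(T\alpha)$ and $\alpha d/N$ become $\sqrt d/\sqrt{TN}$, $\alpha^2 d$ becomes $N/T$, and $\alpha^3 d$ becomes $N^{1.5}/(T^{1.5}d^{0.5})$; the $C_5$ term then reads $C_5G_\infty^2 Nd^{0.5}/T^{1.5}$. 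Collecting the five terms produces the claimed bound with $C_1'=C_1$, $C_2'=C_2$, $C_3'=C_3$, $C_4'=C_4G_\infty^2$, $C_5'=C_5G_\infty^2$. The consensus estimate is immediate from the corresponding line of Theorem~\ref{thm: dagm_converge}, namely $\tfrac1N\sum_i\|x_{t,i}-\overline X_t\|^2\le\alpha^2(\tfrac{1}{1-\lambda})^2 dG_\infty^2/\epsilon$, upon substituting $\alpha$; and the hypothesis $\alpha\le\epsilon^{0.5}/(16L)$ required by Theorem~\ref{thm: dagm_converge} is exactly the standing assumption here.

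I do not anticipate a genuine obstacle: all the analytic work is absorbed into Theorem~\ref{thm: dagm_converge}, which we may assume. The only delicate points are bookkeeping: fixing the boundary convention for $\hat V_{-1}$ in the telescoping sum (it is subtracted and itself bounded by $G_\infty^2\mathbf 1$, so it only helps), and checking that the mild normalization $\epsilon\le G_\infty^2$ --- harmless, since $\epsilon$ is a pure numerical stabilizer --- is what makes the advertised constants come out as clean multiples of $G_\infty^2$. If one prefers not to normalize $\epsilon$, the same argument gives $\mathcal V_T\le Nd\max(G_\infty^2,\epsilon)$, which is still $O(Nd)$ and affects only the numerical constants.
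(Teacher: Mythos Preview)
Your proposal is correct and follows essentially the same route as the paper: invoke Theorem~\ref{thm: dagm_converge}/Corollary~\ref{corl: adm_convergence} for the generic bound, use the coordinate-wise monotonicity of $\hat v_{t,i}$ to telescope $\mathcal V_T$ and bound it by $NdG_\infty^2$, then substitute $\alpha=\sqrt{N}/\sqrt{Td}$ (the statement's $\alpha=1/\sqrt{Td}$ is a typo, as you implicitly recognized when matching $1/(T\alpha)$ and $\alpha d/N$ to $\sqrt d/\sqrt{TN}$) and read off the consensus estimate from $\alpha^2 d=N/T$. Your bookkeeping remarks on $\hat V_{-1}$ and on $\epsilon\le G_\infty^2$ are exactly the minor points the paper glosses over.
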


Theorem~\ref{thm: dams_converge} shows that Algorithm~\ref{alg: damsgrad} converges with a rate of  $\mathcal{O}(\sqrt{d}/\sqrt{T})$ when $T$ is large, which is the best known convergence rate under the given assumptions. 
Note that in some related works, SGD admits a convergence rate of $\mathcal{O}(1/\sqrt{T})$ without any dependence on the dimension of the problem.
Such improved convergence rate is derived under the assumption that the gradient estimator have a bounded $L_2$ norm, which can thus hide a dependency of $\sqrt{d}$ in the final convergence rate. {Another remark is the convergence measure can be converted to $\frac{1}{T}\sum_{t=1}^T  \mathbb E \left [\left\|{\nabla f( \overline X_{t})}\right\|^2  \right]$
using the fact that $\|\overline U_{t}\|_{\infty} \leq G_{\infty}^2$  (by update rule of Algorithm~\ref{alg: damsgrad}), for the ease of comparison with existing literature.}

\newpage

\noindent\textbf{Proof Sketch of Theorem~\ref{thm: dagm_converge}:} The detailed proofs are reported in the appendix of this paper.

\vspace{0.1in}

\textsl{Step 1: Reparameterization.} \hspace{0.01in} Similarly to~\cite{yan2018unified, chen2018convergence} with SGD (with momentum) and centralized adaptive gradient methods, define the following auxiliary sequence:
 $
 Z_{t} = \overline X_t + \frac{\beta_1}{1-\beta_1} (\overline X_t - \overline X_{t-1}) \, ,
 $
with $\overline X_{0} \triangleq \overline X_1$.
Such an auxiliary sequence can help us deal with the bias brought by the momentum and simplifies the convergence analysis. 

\vspace{0.1in}
 
 \textsl{Step 2: Bounding gradient.} \hspace{0.01in} With the help of $Z_t$, we can remove the complicated update dependence on $m_t$, and perform convergence analysis to bound gradient of $Z_t$. Then bound gradient of $\overline X_t$ by smoothness of gradient, which yields:
  
 \begin{align} \label{eq: exp_telescope_sketchmain}
 \frac{1}{T}\sum_{t=1}^T  \mathbb E \left [\left\|\frac{\nabla f( \overline X_{t})}{\overline U_{t}^{1/4}}\right\|^2  \right] \leq \frac{2}{T\alpha}  \mathbb E  [\Delta_f] 
 + \frac{2}{T}\frac{\beta_1 D_1}{1-\beta_1} 
  + \frac{2 D_2}{T}  + \frac{3 D_3}{T} + \frac{L}{T\alpha} \sum_{t=1}^T\mathbb E\left[\| Z_{t+1}-  Z_{t}\|^2 \right]\, ,
 \end{align}
where $\Delta_f : = \mathbb E [f( Z_{1})] - \mathbb E [f( Z_{T+1})]$ $D_1, D_2$ and $D_3$ are three terms, defined in Appendix~\ref{app: proof_thm_adm}, and can be tightly bounded from above. 
We first bound $D_3$ using the following quantities of interest:
 
 \begin{align}\notag
\sum_{t=1}^T\left\|  Z_{t} -  \overline X_{t}\right\|^2 \leq T \left( \frac{\beta_1}{1-\beta_1}\right)^2 \alpha^2 d \frac{G_{\infty}^2}{\epsilon} \text{\ \ and\ \ } 
\sum_{t=1}^T\frac{1}{N}\sum_{i=1}^N\left\| {  x_{t,i} -   \overline X_{t}}  \right\|^2 \leq T \alpha^2 \left (\frac{1}{1-\lambda} \right)^2 d G_{\infty}^2 \frac{1}{\epsilon} \,.
 \end{align}
 where $\lambda = \max (|\lambda_2|,|\lambda_N|)$ and recall that $\lambda_i$ is $i$-th largest eigenvalue of $W$.

Then, bounding $D_1$ and $D_2$ give rise to the terms related to $    \mathbb E \left [    \sum_{t=1}^T  \| ( - \hat V_{t-2} + \hat V_{t-1}) \|_{abs}   \right]$.

\vspace{0.1in}

\textsl{Step 3: Bounding the drift term variance.}\hspace{0.01in} An important term that needs upper bounding in our proof is the variance of the gradients multiplied (element-wise) by the adaptive learning rate, $ \mathbb E\left[ \left\| \frac{1}{N} \sum_{i=1}^N \frac{g_{t,i}}{\sqrt{u_{t,i}}} \right\|^2 \right]  \leq   \mathbb E [\| \Gamma_{u}^f\|^2 ] + \frac{d}{N}   \frac{ \sigma^2 }{\epsilon}$,
 where $ \Gamma_{u}^f := 1/N \sum_{i=1}^N \nabla f_i(x_{t,i})/\sqrt{u_{t,i}} $. We can then transform $\mathbb E [\| \Gamma_{u}^f\|^2 ]$ into $\mathbb E [\|\Gamma_{\overline{U}}^f \|^2]$ by splitting out two error terms, then bounding the error terms as operated for $D_2$ and $D_3$. Then, by plugging it into \eqref{eq: exp_telescope_sketchmain}, we obtain the desired bound in Theorem~\ref{thm: dagm_converge}.

\vspace{0.1in}

\noindent\textbf{Proof of Theorem~\ref{thm: dams_converge}:} 
Recall the bound in \eqref{eq: thm1} of Theorem~\ref{thm: dagm_converge}.
Since Algorithm~\ref{alg: damsgrad} is a special case of Algorithm~\ref{alg: dadaptive}, the remaining of the proof consists of characterizing the growth rate of $\mathbb E [ \sum_{t=1}^{T}   \|    (- \hat V_{t-2} + \hat V_{t-1} ) \|_{abs} ]$.
By construction, $\hat V_t$ is non decreasing, so that
$
\mathbb E [ \sum_{t=1}^{T}   \|    (- \hat V_{t-2} + \hat V_{t-1} ) \|_{abs} ] = \mathbb E [   \sum_{i=1}^N \sum_{j=1}^d    (- [\hat v_{0,i}]_j + [\hat v_{T-1,i}]_j ) ]
$.
We can also prove $|[v_{t,i}]_j| \leq G^2_{\infty}$ using $\|g_{t,i}\|_{\infty} \leq G_{\infty}$.
Then we have $\mathbb E \left[ \sum_{t=1}^{T}   \|    (- \hat V_{t-2} + \hat V_{t-1} ) \|_{abs} \right] \leq  \sum_{i=1}^N \sum_{j=1}^d  \mathbb E[G_{\infty}^2]=  Nd G_{\infty}^2$.
Substituting into \eqref{eq: thm1} yields the desired convergence bound for Algorithm~\ref{alg: damsgrad}.

\subsection{Application to AdaGrad algorithm}\label{sec:adagrad}

In this section, we provide a decentralized version of AdaGrad~\citep{duchi2011adaptive} (optionally with momentum) converted by Algorithm~\ref{alg: dadaptive}, further supporting the usefulness of our decentralization framework. 
The required modification for decentralized AdaGrad is to specify line 4 of Algorithm~\ref{alg: dadaptive} as follows: $ \hat v_{t,i} = \frac{t-1}{t} \hat v_{t-1,i} + \frac{1}{t} g_{t,i}^2 $, which is equivalent to  $\hat v_{t,i} = {\frac{1}{t}\sum_{k=1}^t g_{k,i}^2}$. 
In this section, we call this algorithm decentralized AdaGrad.

\newpage

\begin{algorithm}[t]
	\caption{Decentralized AdaGrad (with N nodes)}
	\label{alg: dadagrad}
	\begin{algorithmic}[1]
		\STATE {\bfseries Input:} learning rate $\alpha$, initial point $x_{1,i} = x_{init}, u_{\frac{1}{2},i} = \hat v_{0,i} = \epsilon \mathbf 1\ (\text{with } \epsilon \geq 0), m_{0,i}=0$, mixing matrix $W$ 
 		\FOR{$t = 1,2,\cdots,T$}
		\STATE \textbf{for all }$i \in [N]$ \textbf{do in parallel}
 		\STATE \quad $g_{t,i}  \leftarrow \nabla f_i(x_{t,i}) + \xi_{t,i}$
 		\STATE \quad $m_{t,i} = \beta_1 m_{t-1,i} + (1-\beta_1) g_{t,i}$ 
 		\STATE \quad $ \hat v_{t,i} = \frac{t-1}{t} \hat v_{t-1,i} + \frac{1}{t} g_{t,i}^2 $
 		\STATE \quad $x_{t+\frac{1}{2},i} = \sum_{j=1}^N W_{ij}x_{t,j}$
 		\STATE \quad $\tilde u_{t,i} = \sum_{j=1}^N W_{ij}\tilde u_{t-\frac{1}{2},j}$
 	    \STATE \quad $u_{t,i} = \max(\tilde u_{t,i}, \epsilon)$
 		\STATE \quad $x_{t+1,i} = x_{t+\frac{1}{2},i} - \alpha \frac{m_{t,i}}{\sqrt{u_{t,i}}}$
 		\STATE \quad $\tilde u_{t+\frac{1}{2},i} = \tilde u_{t,i} - \hat v_{t-1,i} + \hat v_{t,i}$
 		\ENDFOR
 	\end{algorithmic}
 \end{algorithm}

The pseudo code of the algorithm is shown in Algorithm~\ref{alg: dadagrad}. 
There are two details in Algorithm~\ref{alg: dadagrad} worth mentioning.  The first one is that the introduced framework leverages momentum $m_{t,i}$ in updates, while original AdaGrad does not use momentum. 
The momentum can be turned off by setting $\beta_1 = 0$ and the convergence results will still hold. 
The other one is that in Decentralized AdaGrad, we use the average instead of the sum in the term $\hat v_{t,i}$. 
In other words, we write $\hat v_{t,i} = {\frac{1}{t}\sum_{k=1}^t g_{k,i}^2}$. 
This latter point is different from the original AdaGrad which actually uses $\hat v_{t,i} = {\sum_{k=1}^t g_{k,i}^2}$.

The reason is that in the original AdaGrad, a constant stepsize ($\alpha$ independent of $t$ or $T$) is used with  $\hat v_{t,i} = {\sum_{k=1}^t g_{k,i}^2}$. 
This is equivalent to using a well-known decreasing stepsize sequence $\alpha_t = \frac{1}{\sqrt{t}}$ with  $\hat v_{t,i} = {\frac{1}{t}\sum_{k=1}^t g_{k,i}^2}$. 
In our convergence analysis, which can be found below, we use a constant stepsize $\alpha = O(\frac{1}{\sqrt{T}})$ to replace the decreasing stepsize sequence $\alpha_t =  O(\frac{1}{\sqrt{t}})$. 
Such a replacement is popularly used in Stochastic Gradient Descent analysis for the sake of simplicity and to achieve a better convergence rate. 
In addition, it is easy to modify our theoretical framework to include decreasing stepsize sequences such as $\alpha_t =  O(\frac{1}{\sqrt{
t}})$.
The convergence analysis for decentralized AdaGrad is shown in Theorem~\ref{thm: dadagrad_converge}.
\begin{theorem}\label{thm: dadagrad_converge}
Assume A\ref{a:diff}-A\ref{a:matrixW}.
Set $\alpha = \sqrt{N}/\sqrt{Td}$. When $\alpha  \leq \frac{\epsilon^{0.5}}{16L} $, decentralized AdaGrad yields the following regret bound
	  \begin{align}\notag
 \frac{1}{T}\sum_{t=1}^T  \mathbb E \left [\left\|\frac{\nabla f( \overline X_{t})}{\overline U_{t}^{1/4}}\right\|^2  \right]
	  \leq     \frac{C_1' \sqrt{d}}{\sqrt{TN}} D_f'    +  \frac{ C_2'}{T} +  \frac{C_3' N^{1.5}}{T^{1.5}d^{0.5}}   + \frac{\sqrt{N}(1+\log (T))}{T} ( d C_4' + \frac{\sqrt{d} }{T^{0.5} }  C_5')\, ,
	  \end{align}
	where $D_f' := \mathbb E  [f( Z_{1})]  - \min_{z} f(z)]  + \sigma^2$, $C_1' = C_1$, $C_2' = C_2$, $C_3' = C_3$, $C_4' = C_4G_{\infty}^2$ and $C_5' = C_5 G_{\infty}^2 $. $C_1,C_2, C_3, C_4, C_5$ are  defined in Theorem~\ref{thm: dagm_converge} independent of $d$, $T$ and $N$. In addition, the consensus of variables at different nodes is given by $\frac{1}{N}\sum_{i=1}^N\left\| {  x_{t,i} -   \overline X_{t}}  \right\|^2   \leq \frac{N}{T} \left (\frac{1}{1-\lambda} \right)^2  G_{\infty}^2 \frac{1}{\epsilon}$. 
\end{theorem}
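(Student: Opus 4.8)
The plan is to exploit the fact that decentralized AdaGrad (Algorithm~\ref{alg: dadagrad}) is exactly Algorithm~\ref{alg: dadaptive} with the rate function $r_t(g_{1,i},\dots,g_{t,i}) = \frac{1}{t}\sum_{k=1}^t g_{k,i}^2$, so that Corollary~\ref{corl: adm_convergence}, specialized to $\alpha = \sqrt{N}/\sqrt{Td}$, applies verbatim. That corollary already delivers the first three terms of the claimed bound, namely $C_1'\sqrt{d}/\sqrt{TN}\,D_f' + C_2'/T + C_3' N^{1.5}/(T^{1.5}d^{0.5})$, together with the consensus estimate $\frac{1}{N}\sum_i\|x_{t,i}-\overline X_t\|^2 \le \alpha^2(1/(1-\lambda))^2 dG_\infty^2/\epsilon$, which becomes $\frac{N}{T}(1/(1-\lambda))^2 G_\infty^2/\epsilon$ since $\alpha^2 d = N/T$. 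Hence the whole theorem reduces to bounding the drift-bias functional $\mathbb E[\mathcal V_T] = \mathbb E\big[\sum_{t=1}^T \|-\hat V_{t-2}+\hat V_{t-1}\|_{abs}\big]$ for this particular $r_t$, mirroring how Theorem~\ref{thm: dams_converge} was deduced from Corollary~\ref{corl: adm_convergence} for AMSGrad.

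To bound $\mathbb E[\mathcal V_T]$, I would fix a node $i$ and a coordinate $j$, set $a_k := [g_{k,i}]_j^2 \in [0,G_\infty^2]$ by Assumption A\ref{a:boundsto}, and write $[\hat v_{s,i}]_j = \frac{1}{s}\sum_{k=1}^s a_k$ for $s\ge 1$. A one-line computation gives $[\hat v_{s,i}]_j - [\hat v_{s-1,i}]_j = \frac{a_s}{s} - \frac{1}{s(s-1)}\sum_{k=1}^{s-1} a_k$ for $s\ge 2$, whence $|[\hat v_{s,i}]_j - [\hat v_{s-1,i}]_j| \le \frac{2G_\infty^2}{s}$; the two boundary increments around $s=0,1$ (where $\hat v_{0,i}=\hat v_{-1,i}=\epsilon\mathbf{1}$) are bounded by a constant multiple of $\max(G_\infty^2,\epsilon)$. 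Summing over $s\le T$ produces the harmonic sum $\sum_{s} 1/s \le 1+\log T$, and then summing over the $d$ coordinates and $N$ nodes yields $\mathbb E[\mathcal V_T] \le 2NdG_\infty^2(1+\log T) + O(Nd)$, i.e. the $\tilde O(Nd)$ growth promised in the text. Here the logarithm is intrinsic: unlike AMSGrad, $\hat V_t$ is not monotone, so the crude telescoping used in Theorem~\ref{thm: dams_converge} is unavailable and one genuinely needs the per-step estimate.

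Finally I would substitute this into \eqref{eq: thm1}: the two remaining terms $\big(C_4\frac{1}{T\sqrt{N}} + C_5\frac{1}{T^{1.5}d^{0.5}}\big)\mathbb E[\mathcal V_T]$ collapse, after absorbing the numerical factor into the constants and setting $C_4' = C_4 G_\infty^2$, $C_5' = C_5 G_\infty^2$, to $\frac{\sqrt{N}(1+\log T)}{T}\big(dC_4' + \frac{\sqrt d}{T^{0.5}}C_5'\big)$, which is the fourth term of the stated bound; the first three terms and the consensus estimate are carried over unchanged. The only real obstacle is the estimate on the successive increments $|[\hat v_{s,i}]_j-[\hat v_{s-1,i}]_j|$ and the careful treatment of the first few indices — once that harmonic bound is in hand, the theorem follows by direct substitution into Corollary~\ref{corl: adm_convergence}.
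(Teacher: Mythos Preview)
Your plan is essentially the paper's own proof: invoke Corollary~\ref{corl: adm_convergence} with $\alpha=\sqrt{N}/\sqrt{Td}$, then bound $\mathbb E[\mathcal V_T]$ for the AdaGrad rate function coordinate-by-coordinate via the harmonic estimate, and substitute. The consensus bound and the handling of the boundary indices $t=1,2$ (using $\hat V_{-1}=\hat V_0$) are identical in spirit.

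There is one small slip that prevents you from landing on the stated constants. You bound $\big|[\hat v_{s,i}]_j-[\hat v_{s-1,i}]_j\big|=\big|\tfrac{a_s}{s}-\tfrac{1}{s(s-1)}\sum_{k<s}a_k\big|$ by the triangle inequality, obtaining $2G_\infty^2/s$. The paper instead observes that the two summands have opposite signs, so for nonnegative $A,B$ one has $|{-A+B}|\le\max(A,B)$; since both $A=\tfrac{1}{s(s-1)}\sum_{k<s}a_k$ and $B=\tfrac{a_s}{s}$ are at most $G_\infty^2/s$, this yields $G_\infty^2/s$ without the factor $2$. Summing gives $\mathbb E[\mathcal V_T]\le NdG_\infty^2(1+\log T)$ exactly, which is what you need to set $C_4'=C_4G_\infty^2$ and $C_5'=C_5G_\infty^2$ as written. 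Your phrase ``absorbing the numerical factor into the constants and setting $C_4'=C_4G_\infty^2$'' is self-contradictory: with your bound you would get $2C_4G_\infty^2$, not $C_4G_\infty^2$. Replace the triangle inequality by the $\max$ trick and the constants match.
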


\newpage

\section{Numerical Experiments} \label{sec:numerical}

In this section, we conduct some experiments to test the performance of Decentralized AMSGrad, developed in Algorithm~\ref{alg: damsgrad}, on both \emph{homogeneous} data and \emph{heterogeneous} data distribution (i.e., the data generating distribution on different nodes are assumed to be different). 
Comparison with DADAM and the decentralized parallel stochastic gradient descent (D-PSGD) developed in~\cite{lian2017can} are conducted. 
We train a Convolutional Neural Network (CNN) with 3 convolution layers followed by a fully connected layer on MNIST~\citep{lecun1998mnist}.
We set $\epsilon = 10^{-6}$ for both Decentralized AMSGrad and DADAM.
The learning rate is chosen from the grid $[10^{-1}, 10^{-2}, 10^{-3}, 10^{-4}, 10^{-5}, 10^{-6}]$ based on validation accuracy for all algorithms. 
In the following experiments, the graph contains 5 nodes and each node can only communicate with its two adjacent neighbors forming a cycle.
Regarding the mixing matrix $W$, we set $W_{ij} = 1/3$ if nodes $i$ and $j$ are neighbors and $W_{ij} = 0$ otherwise. The implementation was based on the PaddlePaddle deep learning platform. 

\subsection{Effect of heterogeneity}
\begin{figure}[b!]
\begin{center}
\mbox{
    \begin{subfigure}[b]{5.4in}
    \mbox{
      \includegraphics[width=2.7in]{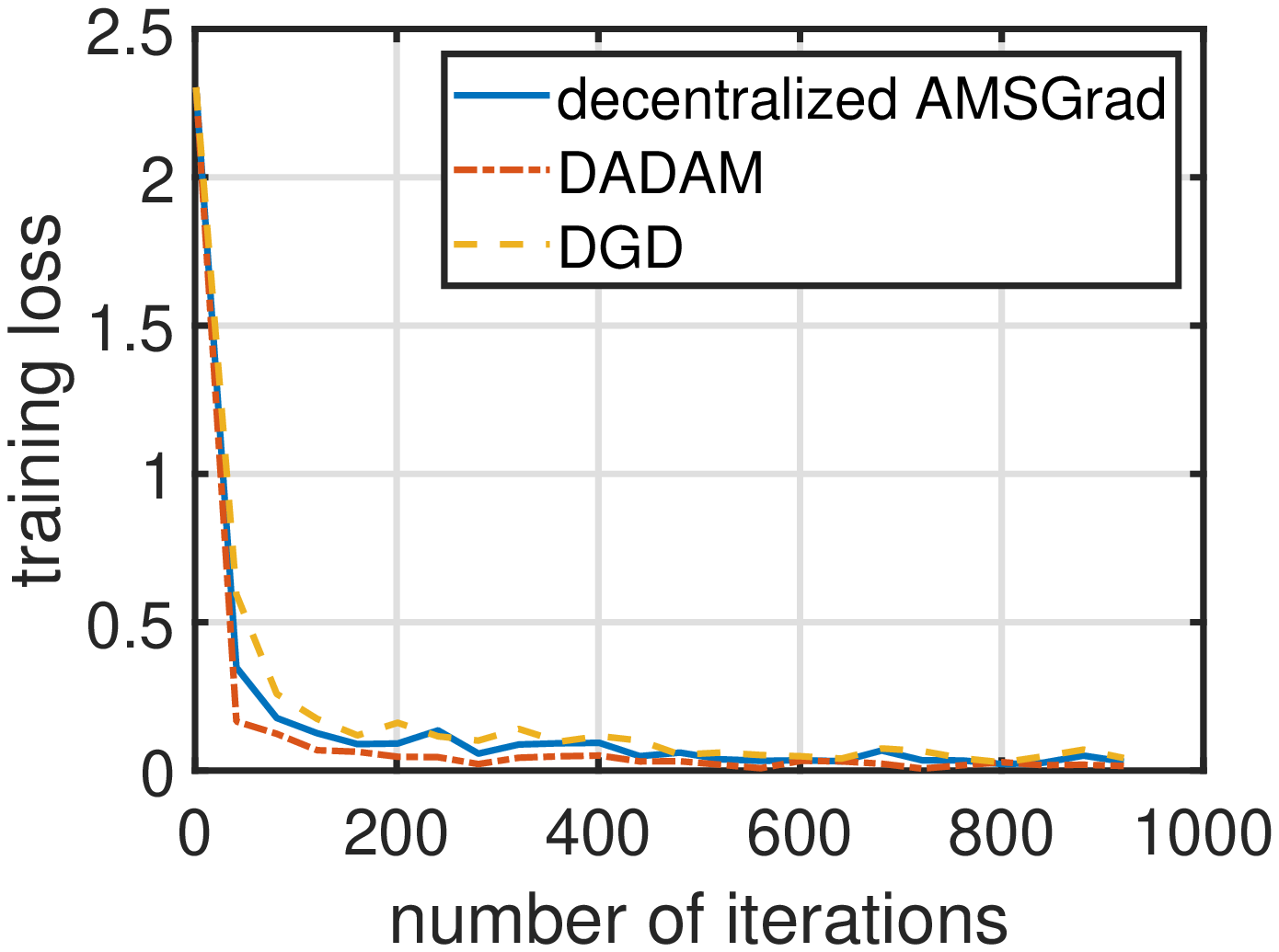}
            \includegraphics[width=2.7in]{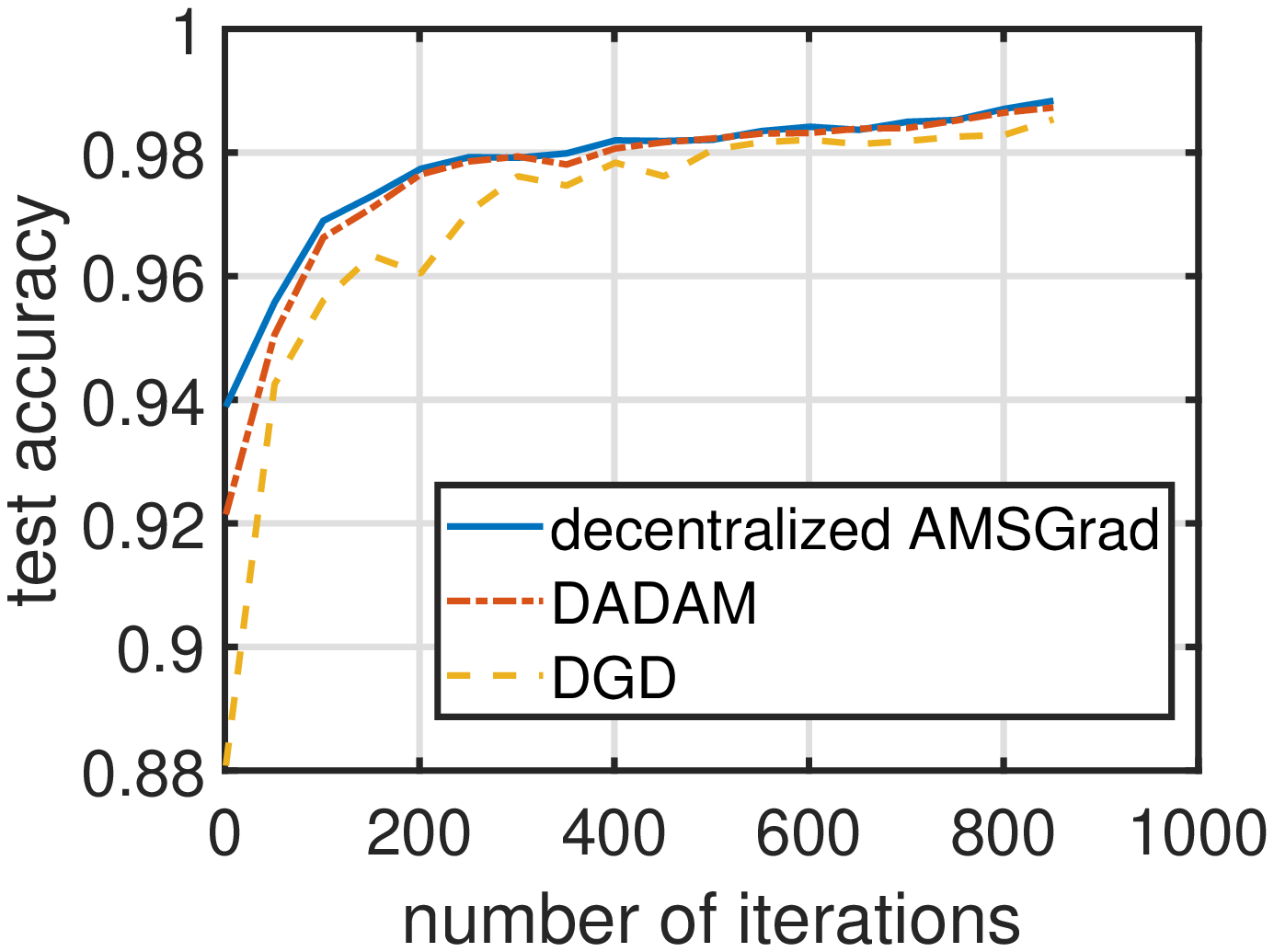}
            }
      \caption{ Homogeneous data} 
    \end{subfigure}
}

\mbox{
    \begin{subfigure}[b]{5.4in}
    \mbox{
      \includegraphics[width=2.7in]{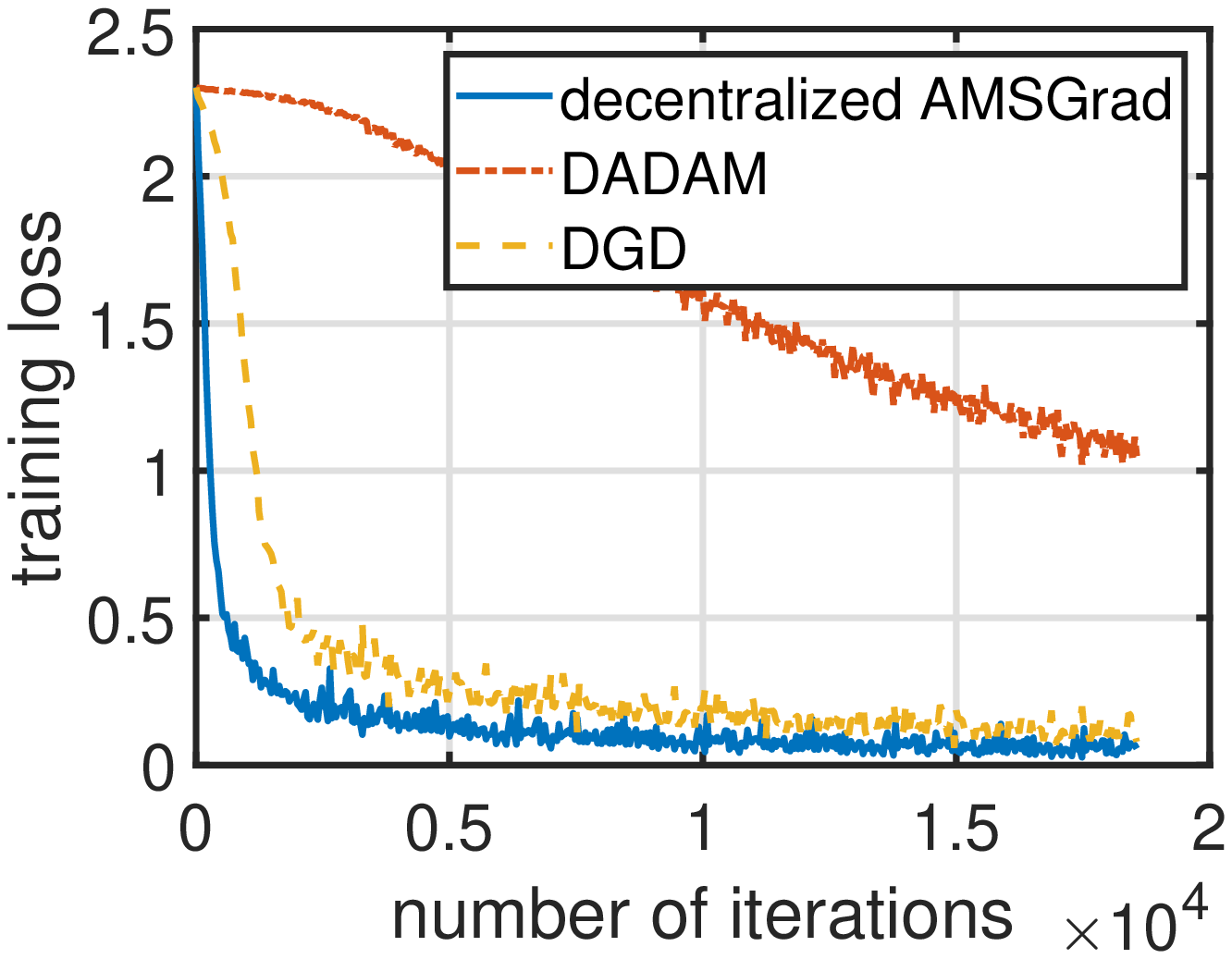}
            \includegraphics[width=2.7in]{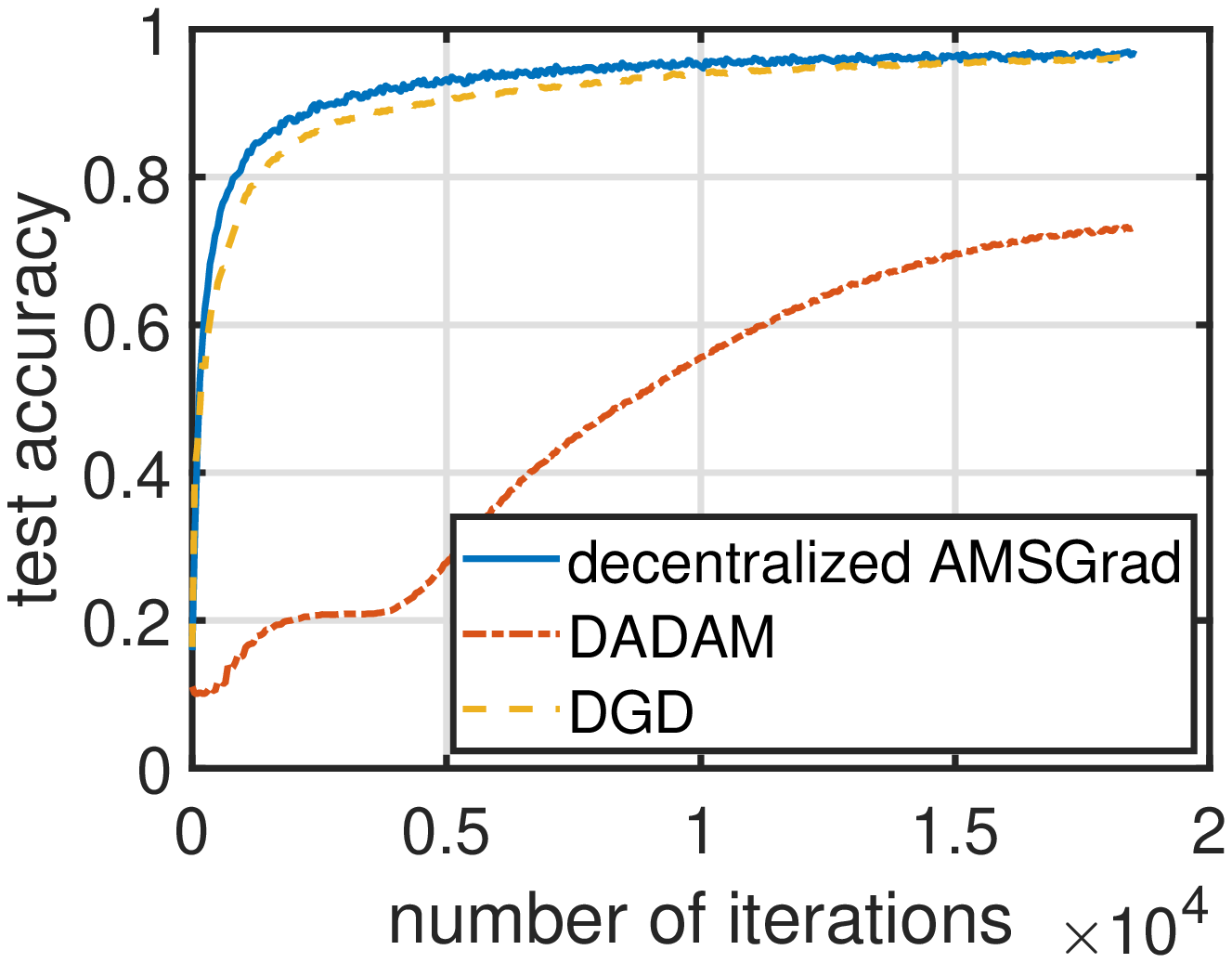}
            }
      \caption{Heterogeneous data}
    \end{subfigure}
}
\end{center}
\caption{Training loss and Testing accuracy for homogeneous and heterogeneous data}
	\label{fig: homo_data}

  \end{figure}

\noindent\textbf{Homogeneous data:}
The whole dataset is shuffled and evenly split into different nodes. {Such a setting is possible when the nodes are in a computer cluster.}
We see, Figure~\ref{fig: homo_data}(a), that decentralized AMSGrad and DADAM perform quite similarly while D-PSGD (labelled as DGD) is much slower both in terms of training loss and test accuracy. 
Though the (possible) non convergence of DADAM, mentioned in this paper, its performance are empirically good on homogeneous data. 
The reason is that the adaptive learning rates tend to be similar on different nodes in presence of homogeneous data distribution. 
We thus compare these algorithms under the heterogeneous regime. 

\vspace{0.1in}
\noindent\textbf{Heterogeneous data:}
Here, each node only contains training data with two labels out of ten. {Such a setting is common when data shuffling is prohibited, such as in  federated learning and other privacy-sensitive scenarios.}
We can see that each algorithm converges significantly slower than with homogeneous data. 
Especially, the performance of DADAM deteriorates significantly. 
Decentralized AMSGrad achieves the best training and testing performance in that setting as observed in Figure~\ref{fig: homo_data}(b). These experiments show that although DADAM is shown to have good performance on homogeneous data in 
 \citet{nazari2019dadam}, heterogeneous data can be detrimental to its performance. On the contrary, decentralized AMSGrad is less impacted by heterogeneous data distribution as a convergent variant, and it enjoys some benefits of adaptive gradient methods.
 
\subsection{Sensitivity to the Learning Rate}

We compare the training loss and testing accuracies of different D-PSGD, DADAM, and our proposed decentralized AMSGrad,  with different stepsizes on \emph{heterogeneous} data distribution. 
We use 5 nodes and the heterogeneous data distribution is created by assigning each node with data of only two labels.
Note that there are no overlapping labels between different nodes.

\begin{figure}[h]

\centering
\mbox{
\begin{subfigure}[b]{2.7in}
		\includegraphics[width=\textwidth]{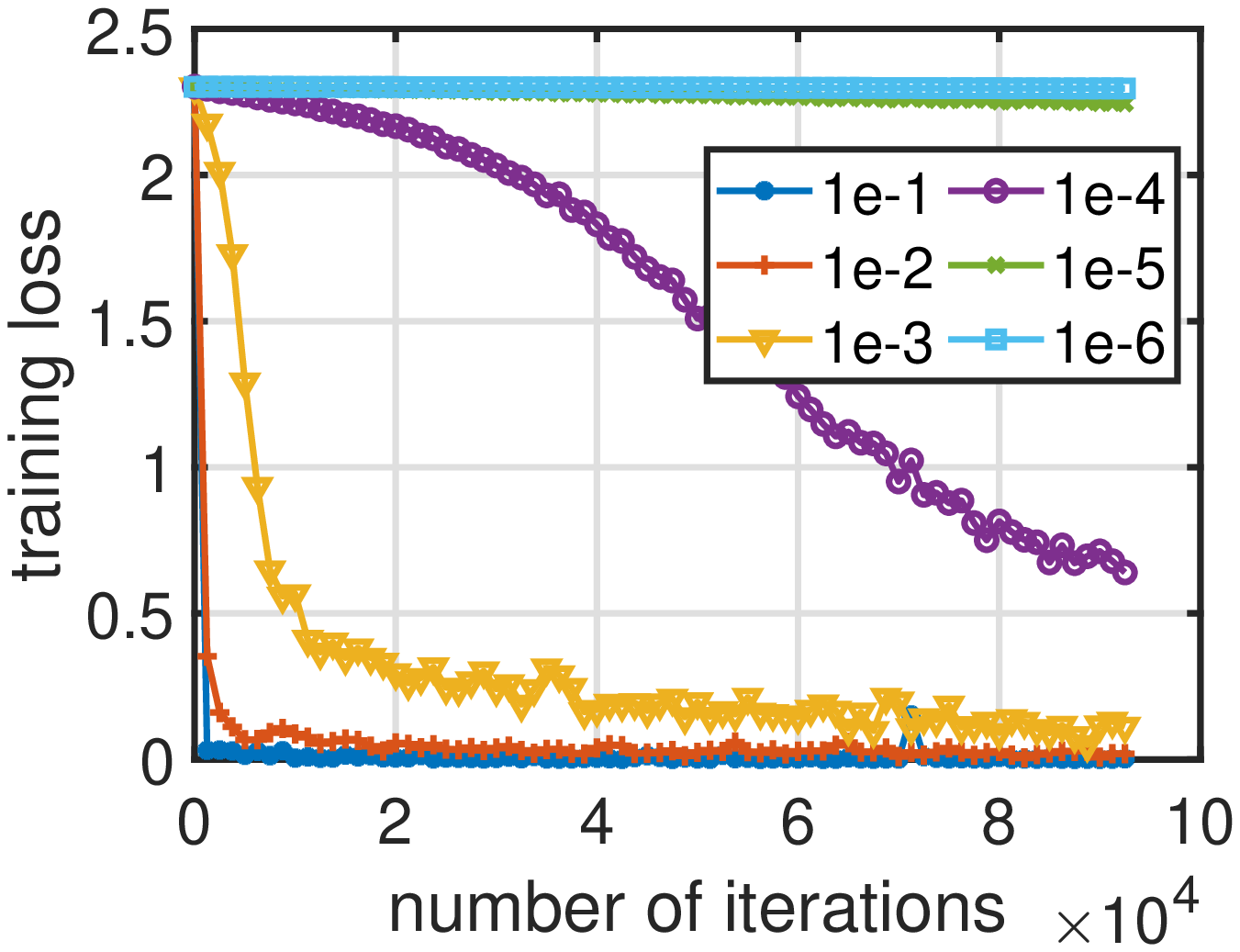}
      \caption{ D-PSGD loss}
    \end{subfigure}
    \begin{subfigure}[b]{2.7in}
		\includegraphics[width=\textwidth]{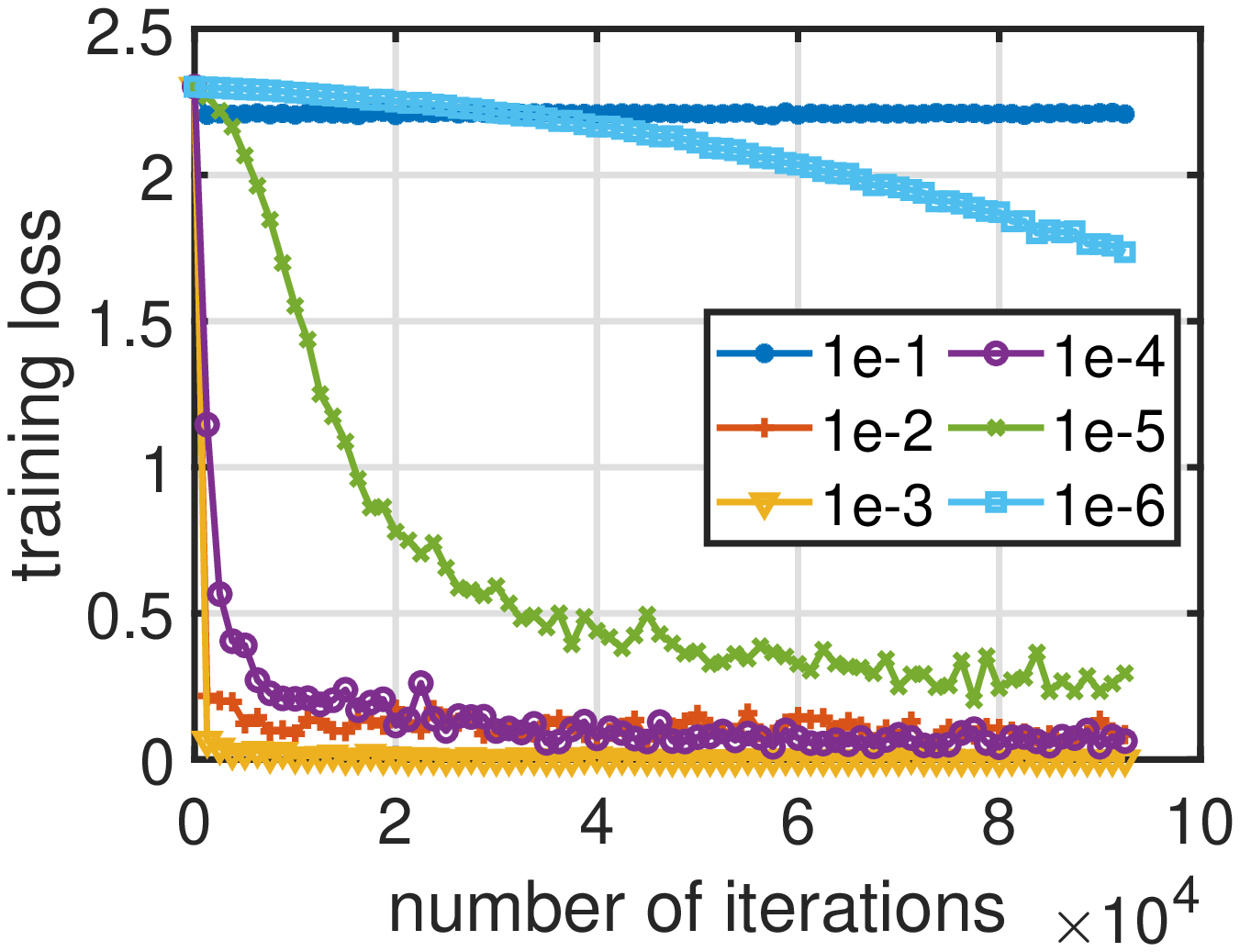}
      \caption{Decentralized AMS loss}
    \end{subfigure}
}

\mbox{    
        \begin{subfigure}[b]{2.7in}
		\includegraphics[width=\textwidth]{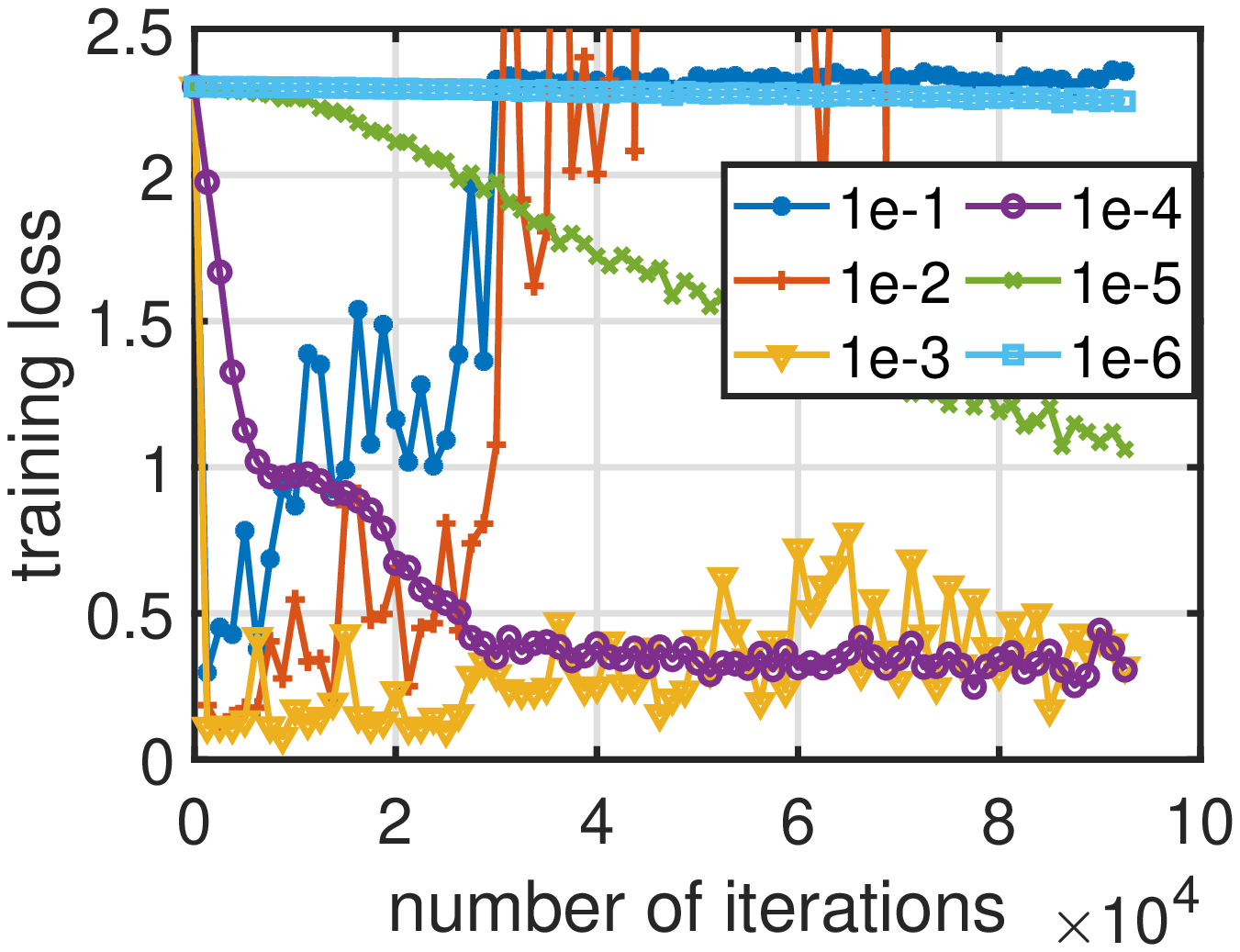}
      \caption{Decentralized Adam loss}
    \end{subfigure}
 
    \begin{subfigure}[b]{2.7in}
		\includegraphics[width=\textwidth]{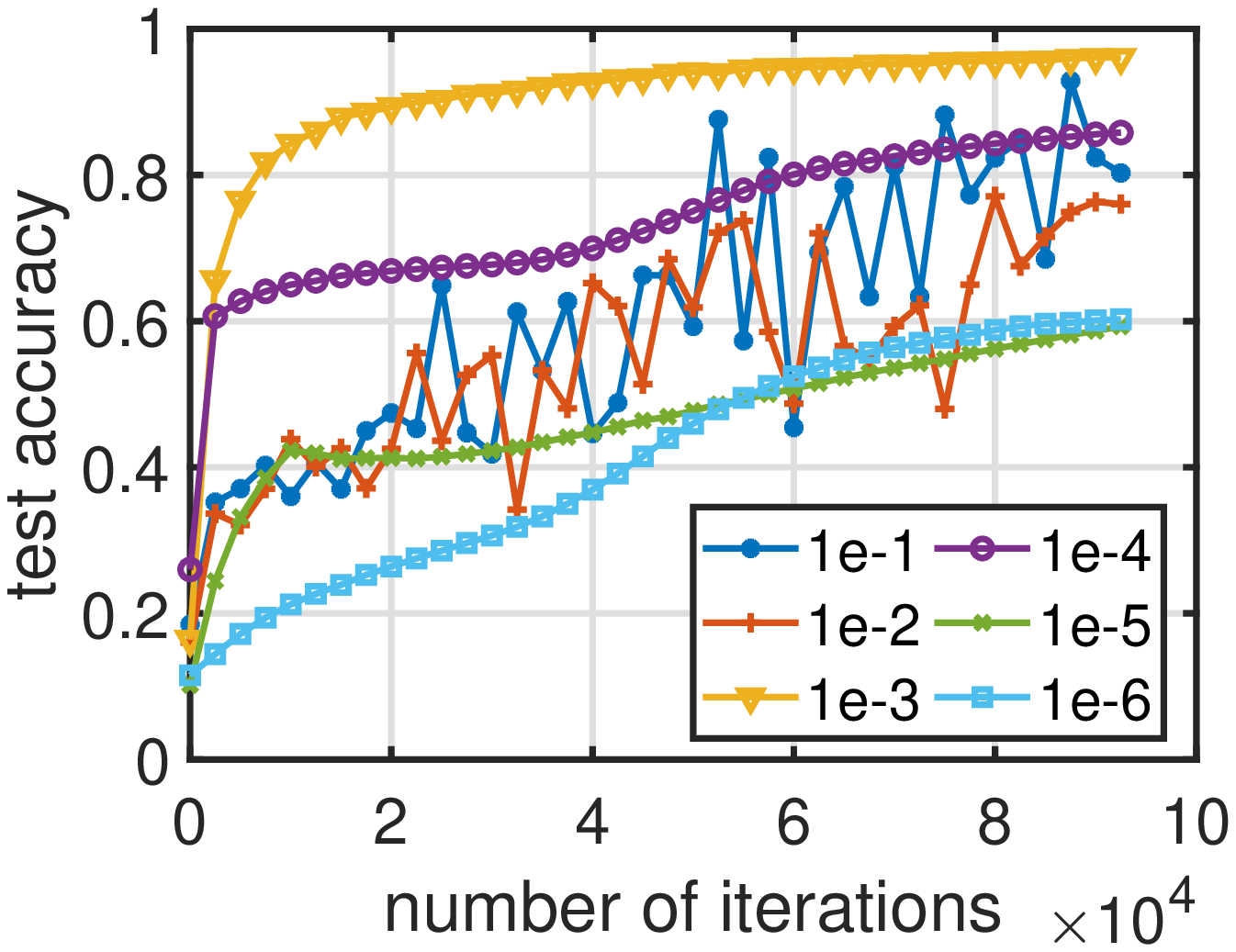}
      \caption{ D-PSGD accuracy}
    \end{subfigure}
}

\mbox{
    \begin{subfigure}[b]{2.7in}
		\includegraphics[width=\textwidth]{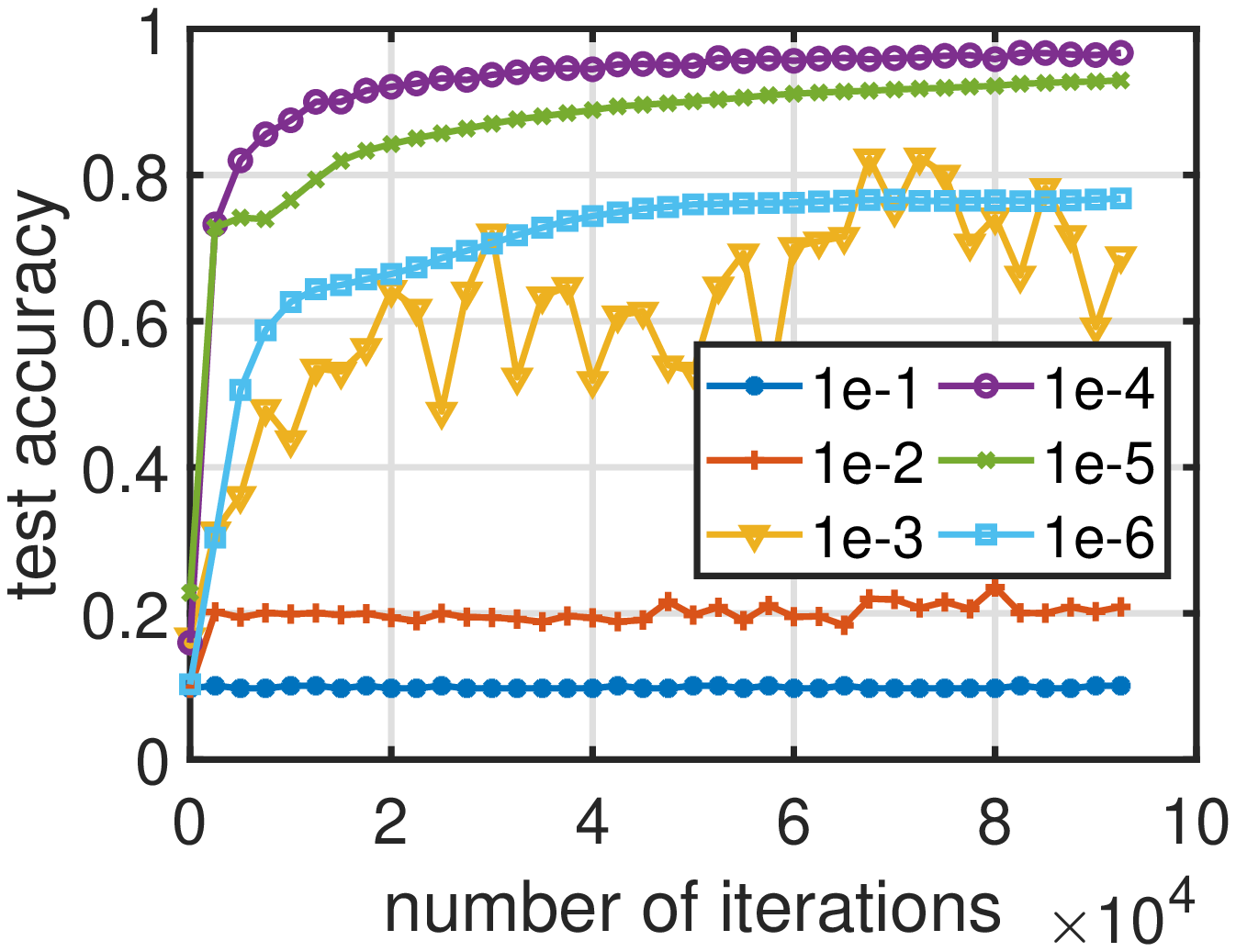}
      \caption{Decentralized AMS accuracy}
    \end{subfigure}
        \begin{subfigure}[b]{2.7in}
		\includegraphics[width=\textwidth]{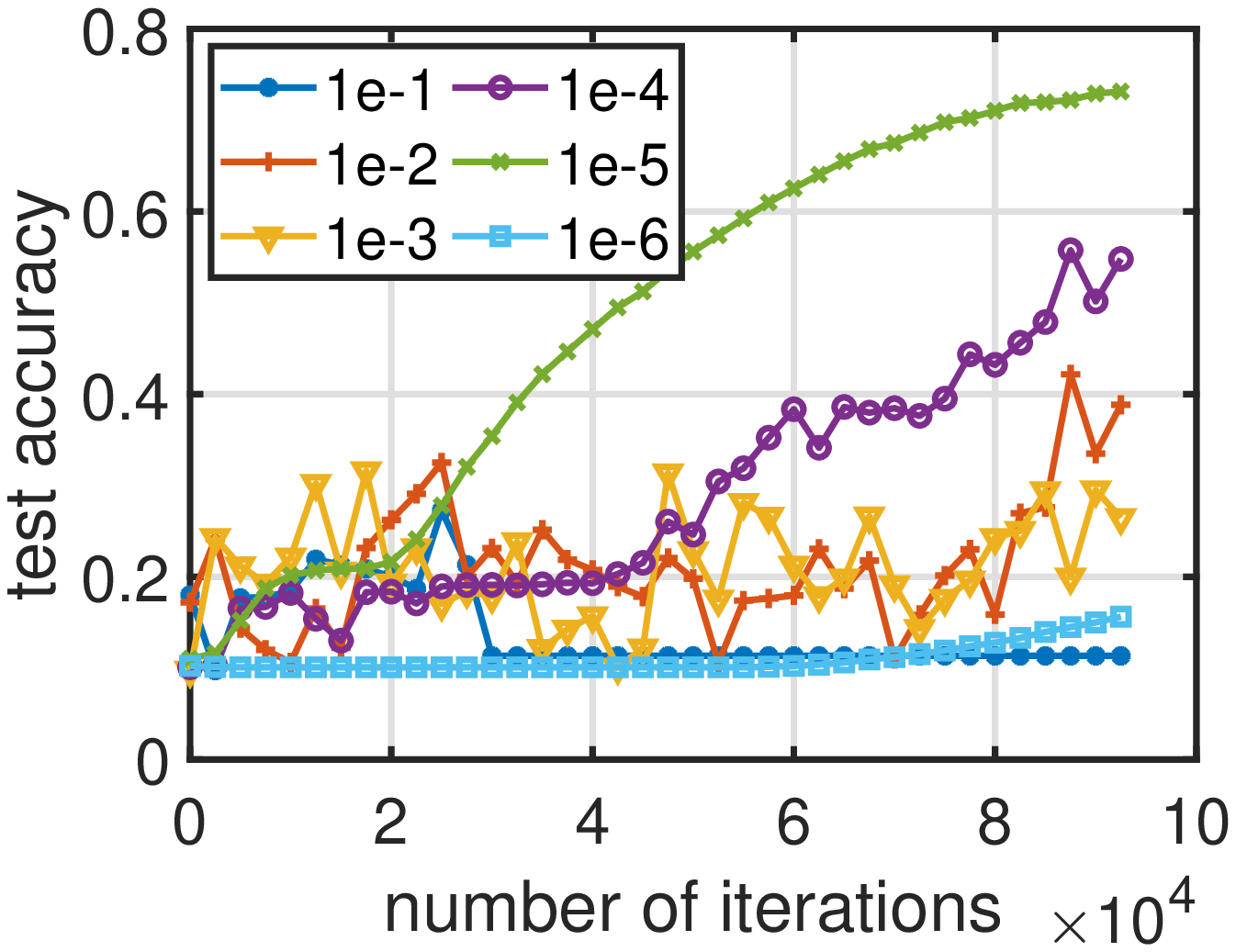}
      \caption{Decentralized Adam accuracy}
    \end{subfigure}
}
	\caption{Training loss and testing accuracy comparison of different stepsizes for various methods}
	\label{fig: stepsize}
	
  \end{figure}

\vspace{0.05in}

We observe Figure\ref{fig: stepsize}(a) and (d) that the stepsize $10^{-3}$ works best for D-PSGD in terms of test accuracy and $10^{-1}$ works best in terms of training loss. 
This difference is caused by the inconsistency among the model parameters on different nodes when the stepsize is large.

\vspace{0.05in}

Figure~\ref{fig: stepsize}(b) and (e) shows the performance of decentralized AMSGrad with different stepsizes.
We see that its best performance is better than the one of D-PSGD and the performance is more stable (the test performance is less sensitive to stepsize tuning).
As expected, the performance of DADAM is not as good as D-PSGD or decentralized AMSGrad, see Figure~\ref{fig: stepsize}(c) and (f).
Its divergence characteristic, highlighted Section~\ref{sec:divergence}, coupled with the heterogeneity in the data amplify its non-convergence issue in our experiments.  
From the experiments above, we can see the advantages of decentralized AMSGrad in terms of both performance and ease of parameter tuning, and the importance of ensuring the theoretical convergence of any newly proposed methods in the presented~setting.

\section{Conclusion}\label{sec:conclusion}

This paper studies the problem of designing adaptive gradient methods for decentralized training. 
We propose a unifying algorithmic framework that can convert existing adaptive gradient methods to decentralized settings. 
With rigorous convergence analysis, we show that if the original algorithm converges under some minor conditions, the converted algorithm obtained using our proposed framework is guaranteed to converge to stationary points of the regret function. 
By applying our framework to AMSGrad, we propose the first convergent adaptive gradient methods, namely Decentralized AMSGrad. 
We also give an extension to a decentralized variant of AdaGrad for completeness of our converting scheme.
Experiments show that the proposed algorithm achieves better performance than the baselines.

\clearpage 
\newpage
\appendix


\noindent\textbf{\LARGE Appendix}\\

We provide the proofs for our convergence analysis. After having established several important Lemmas in Section~\ref{app: proof_lemmas}, we provide a proof for  Theorem~\ref{thm: dagm_converge} in Section~\ref{app: proof_thm_adm}.
Section~\ref{app: proof_ams} and Section~\ref{app: proof_adagrad} correspond to the proofs for the extension and application of Theorem~\ref{thm: dagm_converge} to the AMSGrad and AdaGrad algorithms used as prototypes of our general class of decentralized adaptive gradient~methods.

\section{Proof of Auxiliary Lemmas} \label{app: proof_lemmas}

Similarly to~\cite{yan2018unified, chen2018convergence} with SGD (with momentum) and centralized adaptive gradient methods, define the following auxiliary sequence:
 \begin{align}\label{eq: seq_z_sketchapp}
 Z_{t} = \overline X_t + \frac{\beta_1}{1-\beta_1} (\overline X_t - \overline X_{t-1}) \, ,
 \end{align}
with $\overline X_{0} \triangleq \overline X_1$.
Such an auxiliary sequence can help us deal with the bias brought by the momentum and simplifies the convergence analysis. 

 \begin{lemma}\label{lem: z_diff} 
	For the sequence defined in \eqref{eq: seq_z_sketchapp}, we have
	\begin{align}\notag
	Z_{t+1} - Z_t = \alpha \frac{\beta_1}{1-\beta_1}  \frac{1}{N} \sum_{i=1}^N m_{t-1	,i} \odot (\frac{1}{\sqrt{u_{t-1,i}}} - \frac{1}{\sqrt{u_{t,i}}}) 
	- \alpha \frac{1}{N} \sum_{i=1}^N \frac{g_{t,i}}{\sqrt{u_{t,i}}} \, .
	\end{align}
\end{lemma}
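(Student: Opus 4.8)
The plan is a direct computation starting from the update rules of Algorithm~\ref{alg: dadaptive} together with the definition \eqref{eq: seq_z_sketchapp} of the auxiliary sequence. First I would merge lines 7 and 10 of the algorithm into the single relation $x_{t+1,i} = \sum_{j=1}^N W_{ij} x_{t,j} - \alpha\, m_{t,i}/\sqrt{u_{t,i}}$, average it over $i \in [N]$, and invoke the column-stochasticity $\sum_{i=1}^N W_{ij} = 1$ from A\ref{a:matrixW} to collapse the mixing term, since $\frac1N\sum_i\sum_j W_{ij}x_{t,j} = \frac1N\sum_j x_{t,j} = \overline X_t$. This yields the clean recursion $\overline X_{t+1} - \overline X_t = -\alpha\,\frac1N\sum_{i=1}^N m_{t,i}/\sqrt{u_{t,i}}$, which I abbreviate as $\delta_t$ (with $\delta_0 = 0$ by the convention $\overline X_0 \triangleq \overline X_1$).

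Next I would rewrite the increment of $Z_t$ purely in terms of consecutive $\delta$'s: from $Z_t = \overline X_t + \frac{\beta_1}{1-\beta_1}(\overline X_t - \overline X_{t-1})$ one gets $Z_{t+1} - Z_t = \delta_t + \frac{\beta_1}{1-\beta_1}(\delta_t - \delta_{t-1}) = \frac{1}{1-\beta_1}\delta_t - \frac{\beta_1}{1-\beta_1}\delta_{t-1}$. Then I substitute the momentum recursion $m_{t,i} = \beta_1 m_{t-1,i} + (1-\beta_1)g_{t,i}$ into $\delta_t$, which splits $\frac{1}{1-\beta_1}\delta_t$ into $-\frac{\alpha\beta_1}{1-\beta_1}\frac1N\sum_i m_{t-1,i}/\sqrt{u_{t,i}} \;-\; \alpha\frac1N\sum_i g_{t,i}/\sqrt{u_{t,i}}$, while $-\frac{\beta_1}{1-\beta_1}\delta_{t-1} = \frac{\alpha\beta_1}{1-\beta_1}\frac1N\sum_i m_{t-1,i}/\sqrt{u_{t-1,i}}$. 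Adding the two expressions and grouping the $m_{t-1,i}$ terms coordinate-wise produces exactly $\alpha\frac{\beta_1}{1-\beta_1}\frac1N\sum_i m_{t-1,i}\odot\bigl(\frac{1}{\sqrt{u_{t-1,i}}} - \frac{1}{\sqrt{u_{t,i}}}\bigr) - \alpha\frac1N\sum_i g_{t,i}/\sqrt{u_{t,i}}$, the claimed identity.

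I do not expect a genuine obstacle here, since the statement is an exact algebraic consequence of the update rules; the only care needed is bookkeeping: applying double stochasticity in the correct direction (column sums when averaging $\overline X$), keeping $1/(1-\beta_1)$ and $\beta_1/(1-\beta_1)$ straight, and checking the base case $t=1$, where $\delta_0 = 0$ and the input $m_{0,i} = 0$ make the $m_{t-1,i}$ term vanish so the formula still holds. I would also remark that nothing in this derivation uses the specific form of $r_t$ defining $\hat v_{t,i}$, so the lemma holds verbatim for every instantiation of the framework (decentralized AMSGrad, decentralized AdaGrad, etc.).
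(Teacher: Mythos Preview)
Your proposal is correct and follows essentially the same route as the paper: average the per-node update using column-stochasticity of $W$ to obtain $\overline X_{t+1}-\overline X_t = -\alpha\frac1N\sum_i m_{t,i}/\sqrt{u_{t,i}}$, expand $Z_{t+1}-Z_t$ as $\frac{1}{1-\beta_1}(\overline X_{t+1}-\overline X_t) - \frac{\beta_1}{1-\beta_1}(\overline X_t-\overline X_{t-1})$, then substitute the momentum recursion and regroup. Your added remarks on the base case $t=1$ and on independence from the choice of $r_t$ are correct and not in the paper's proof, but they do not constitute a different approach.
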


\noindent\textbf{Proof:} By update rule of Algorithm~\ref{alg: dadaptive}, we first have
\begin{align}
\overline X_{t+1}  = & \frac{1}{N}\sum_{i=1}^N x_{t+1,i}  
=  \frac{1}{N}\sum_{i=1}^N \left( x_{t+0.5,i} - \alpha \frac{m_{t,i}}{\sqrt{u_{t,i}}}\right) \nonumber   \\
= & \frac{1}{N}\sum_{i=1}^N \left(  \sum_{j=1}^N W_{ij}x_{t,j} - \alpha \frac{m_{t,i}}{\sqrt{u_{t,i}}}\right) \nonumber    \\
\overset{(i)}{=} &  \left(\frac{1}{N} \sum_{j=1}^N x_{t,j} \right) -\frac{1}{N} \sum_{i=1}^N   \alpha \frac{m_{t,i}}{\sqrt{u_{t,i}}}  \nonumber \\
= & \overline X_t - \frac{1}{N} \sum_{i=1}^N   \alpha \frac{m_{t,i}}{\sqrt{u_{t,i}}} \, , \nonumber 
\end{align}
where (i) is due to an interchange of summation and $\sum_{i=1} W_{ij} = 1$.
Then, we have 
\begin{align}
Z_{t+1} - Z_t =& \overline X_{t+1} - \overline X_{t} + \frac{\beta_1}{1-\beta_1} (\overline X_{t+1}- \overline X_t) - \frac{\beta_1}{1-\beta_1} (\overline X_{t+1}- \overline X_t) \nonumber   \\
= &  \frac{1}{1-\beta_1} (\overline X_{t+1}- \overline X_t) - \frac{\beta_1}{1-\beta_1} (\overline X_{t+1}- \overline X_t)  \nonumber  \\
= & \frac{1}{1-\beta_1} \left(- \frac{1}{N} \sum_{i=1}^N   \alpha \frac{m_{t,i}}{\sqrt{u_{t,i}}}\right) - \frac{\beta_1}{1-\beta_1} \left(- \frac{1}{N} \sum_{i=1}^N   \alpha \frac{m_{t-1,i}}{\sqrt{u_{t-1,i}}}\right) \nonumber  \\
= & \frac{1}{1-\beta_1} \left(- \frac{1}{N} \sum_{i=1}^N   \alpha \frac{\beta_1 m_{t-1,i} + (1-\beta_1) g_{t,i}}{\sqrt{u_{t,i}}}\right) - \frac{\beta_1}{1-\beta_1} \left(- \frac{1}{N} \sum_{i=1}^N   \alpha \frac{m_{t-1,i}}{\sqrt{u_{t-1,i}}}\right)  \nonumber  \\
= & \alpha \frac{\beta_1}{1-\beta_1}  \frac{1}{N} \sum_{i=1}^N m_{t-1	,i} \odot (\frac{1}{\sqrt{u_{t-1,i}}} - \frac{1}{\sqrt{u_{t,i}}}) - \alpha \frac{1}{N} \sum_{i=1}^N \frac{g_{t,i}}{\sqrt{u_{t,i}}} \, , \nonumber 
\end{align}
which is the desired result. \hfill $\square$

\begin{lemma}\label{lem: mean_after_max}
	Given  a set of numbers $a_1,\cdots,a_n$ and denote their mean to be $\bar a = \frac{1}{n}\sum_{i=1}^n a_i$. Define $b_i(r) \triangleq = \max(a_i,r)$ and $\bar b (r) =  \frac{1}{n}\sum_{i=1}^n b_i(r)$. For any $r$ and $r'$ with $r' \geq r$ we have 
	\begin{align}\label{eq: r_decrease}
	\sum_{i=1}^n |b_i(r) - \bar b(r)| \geq \sum_{i=1}^n |b_i(r') - \bar b(r')|
	\end{align}
	and when $r \leq \min_{i \in [n]} a_i$, we have
	\begin{align}\label{eq: r_reduce}
	\sum_{i=1}^n |b_i(r) - \bar b(r)| =   \sum_{i=1}^n |a_i - \bar a| \, .
	\end{align}
\end{lemma}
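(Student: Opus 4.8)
\textbf{Proof plan for Lemma~\ref{lem: mean_after_max}.}

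The plan is to treat the two claims separately, starting with the easier identity \eqref{eq: r_reduce}. When $r \leq \min_{i} a_i$, each $b_i(r) = \max(a_i,r) = a_i$ holds, so $\bar b(r) = \bar a$, and \eqref{eq: r_reduce} follows immediately from substitution. This requires no real work.

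For the monotonicity claim \eqref{eq: r_decrease}, the natural approach is to show that the function $\phi(r) := \sum_{i=1}^n |b_i(r) - \bar b(r)|$ is nonincreasing in $r$. First I would observe that $\phi$ is continuous and piecewise linear in $r$ (the breakpoints being the values $a_1,\dots,a_n$), so it suffices to check that on each linear piece the slope is $\le 0$, and that $\phi$ has no upward jumps (it has none, being continuous). Fix $r$ avoiding the breakpoints and let $S = \{i : a_i \le r\}$ be the "clipped" indices, with $k = |S|$; for $i \in S$ we have $b_i(r) = r$ and $\frac{d b_i}{dr} = 1$, while for $i \notin S$ we have $b_i(r) = a_i$ and $\frac{d b_i}{dr} = 0$. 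Hence $\frac{d\bar b}{dr} = k/n$. The key structural fact to extract is that the clipped value $r$ is always at or below the mean $\bar b(r)$: indeed $\bar b(r) = \frac{1}{n}(kr + \sum_{i\notin S} a_i) \ge \frac{1}{n}(kr + (n-k)r) = r$, using $a_i > r$ for $i \notin S$. Therefore every clipped coordinate lies weakly below the mean, so $|b_i(r) - \bar b(r)| = \bar b(r) - r$ for $i \in S$. Splitting the sum, one writes $\phi(r) = \sum_{i\in S}(\bar b(r) - r) + \sum_{i\notin S} |a_i - \bar b(r)|$ and differentiates, handling the two groups in $S^c$ — those with $a_i \ge \bar b(r)$ and those with $r < a_i < \bar b(r)$ — according to the sign of $a_i - \bar b(r)$; in each case $\frac{d}{dr}|a_i - \bar b(r)| = \pm(0 - k/n)$. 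Collecting terms, the slope of $\phi$ becomes $k(1 - k/n) + (\text{contribution from } S^c)$, and a short count of how many $S^c$-indices sit above versus below the mean shows the total is $\le 0$; I expect the cleanest bookkeeping to come from noting $\sum_i (b_i(r) - \bar b(r)) = 0$, so the number of terms (weighted) above the mean balances those below.

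The main obstacle is the slope computation on a generic linear piece: one must carefully track the sign of $b_i(r) - \bar b(r)$ for the \emph{unclipped} coordinates, since some satisfy $a_i \ge \bar b(r)$ and others $r < a_i < \bar b(r)$, and the mean itself is moving at rate $k/n$. An alternative, possibly slicker, route that avoids differentiation entirely is to couple two thresholds $r \le r'$ directly: write $b_i(r') = \max(b_i(r), r')$, i.e. applying the clip at $r'$ to the already-clipped data $(b_i(r))_i$, so that by a one-step version of the same argument it suffices to prove the statement for the single comparison "$r = -\infty$ versus $r' = $ anything," i.e. that clipping a sequence from below at any level never increases its mean absolute deviation from the mean. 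That reduced statement can be proved by a direct exchange argument: moving a single coordinate $a_i$ up to the common clip level $r'$ (while $r'$ stays below the current mean) moves both that coordinate and the mean, and one checks by cases that $\sum_j |a_j - \bar a|$ weakly decreases at each such step. I would present whichever of these two arguments turns out shorter once the case analysis is written out, but I anticipate the piecewise-linear/slope argument is the most transparent.
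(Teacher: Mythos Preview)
Your approach is essentially identical to the paper's: both treat $\phi(r)$ as a continuous, piecewise-linear function of $r$, split the indices into clipped, unclipped-below-mean, and unclipped-above-mean groups, differentiate on each linear piece, and verify the slope is nonpositive (the paper sorts the $a_i$ and uses indices $l(r),s(r)$ in place of your sets, but the bookkeeping is the same). One small slip to fix when you write it out: the clipped contribution to the slope is $k(k/n-1)$, not $k(1-k/n)$; with that correction the total slope collapses to $-2kp/n\le 0$, where $p$ is the number of unclipped indices above $\bar b(r)$, exactly as in the paper.
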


\noindent{\textbf{Proof}:}
Without loss of generality, assume $a_i \leq a_j$ when $i < j$, i.e., $a_i$ is a non-decreasing sequence. Define 
\begin{align}\notag
h(r) = \sum_{i=1}^n |b_i(r) - \bar b(r)| = \sum_{i=1}^n |\max (a_i,r) - \frac{1}{n}\sum_{j=1}^n \max(a_j,r)|\, .
\end{align}
We need to prove that $h$ is a non-increasing function of $r$. 
First,
it is easy to see that $h$ is a continuous function of $r$ with non-differentiable points $r = a_i, i \in [n]$, thus $h$ is a piece-wise linear function.

Next, we will prove that $h(r)$ is  non-increasing in each piece.
Define $ l(r)$ to be the largest index with $a(l(r)) < r$, and $s(r)$ to be the largest index with $a_{s(r)} < \bar b(r)$. Note that we have for $i \leq l(r)$, $b_i(r) = r$ and for $i \leq s(r)$ $b_i(r) - \bar b(r) \leq 0$ since $a_i$ is a non-decreasing sequence. 
Therefore, we have 
\begin{align}\notag
h(r) = \sum_{i=1}^{l(r)} (\bar b(r) - r) + \sum_{i= l(r)+1} ^{s(r) } (\bar b(r) - a_i) + \sum_{i= s(r)+1}^n (a_i - \bar b(r))
\end{align}
and 
\begin{align}\notag
\bar b(r) = \frac{1}{n}\left (l(r) r + \sum_{i=l(r)+1}^n a_i\right) \, .
\end{align}
Taking derivative of the above form, we know the derivative of $h(r)$ at differentiable points is
\begin{align}
h'(r) = &l(r) (\frac{l(r)}{n}-1) + (s(r)-l(r)) \frac{l(r)}{n} - (n-s(r)) \frac{l(r)}{n} \nonumber \\
= & \frac{l(r)}{n} ((l(r) - n) + (s(r) - l(r)) - (n-s(r))) \, . \nonumber
\end{align}

Since we have $s(r) \leq n$ we know $(l(r) - n) + (s(r) - l(r)) - (n-s(r)) \leq 0$ and thus
\begin{align}\notag
h'(r) \leq 0 \, ,
\end{align}
which means $h(r)$ is non-increasing in each piece. Combining with the fact that $h(r)$ is continuous, \eqref{eq: r_decrease} is proven.
When $r \leq a(i)$, we have $b(i) = \max(a_i,r) = r$, for all $r \in [n]$ and $\bar b(r) = \frac{1}{n}\sum_{i=1}^n a_i = \bar a$ which proves \eqref{eq: r_reduce}.
\hfill $\square$

\clearpage

\section{Proof of Theorem~\ref{thm: dagm_converge}}\label{app: proof_thm_adm}

To prove convergence of the algorithm, we first define an auxiliary sequence 
\begin{align}\label{eq: seq_z}
Z_{t} = \overline X_t + \frac{\beta_1}{1-\beta_1} (\overline X_t - \overline X_{t-1}) \, ,
\end{align}
with $\overline X_{0} \triangleq \overline X_1$.
Since $\mathbb E[g_{t,i}] = \nabla f(x_{t,i})$ and $u_{t,i}$ is a function of $G_{1:t-1}$ (which denotes $G_1,G_2,\cdots,G_{t-1}$), we have 
\begin{align}\notag
\mathbb E_{G_t|G_{1:t-1}} \left[\frac{1}{N} \sum_{i=1}^N \frac{g_{t,i}}{\sqrt{u_{t,i}}}\right] =\frac{1}{N} \sum_{i=1}^N \frac{\nabla f_i(x_{t,i})}{\sqrt{u_{t,i}}} \, .
\end{align}

Assuming smoothness (A\ref{a:diff}) we have 
\begin{align}\notag
f( Z_{t+1}) \leq f( Z_{t}) + \langle \nabla f( Z_{t}),  Z_{t+1}-  Z_{t} \rangle + \frac{L}{2}\| Z_{t+1}-  Z_{t}\|^2 \, .
\end{align}

Using Lemma~\ref{lem: z_diff} into the above inequality and take expectation over $G_{t}$ given $G_{1:t-1}$, we have 
\begin{align}
&\mathbb E_{G_t|G_{1:t-1}} [f( Z_{t+1})] \nonumber \\
\leq & f( Z_{t})  - \alpha  \left \langle \nabla f( Z_{t}), \frac{1}{N} \sum_{i=1}^N \frac{\nabla f_i(x_{t,i})}{\sqrt{u_{t,i}}}  \right \rangle + \frac{L}{2} \mathbb E_{G_t|G_{1:t-1}}\left[\| Z_{t+1}-  Z_{t}\|^2 \right]  \nonumber \\
&+ \alpha \frac{\beta_1}{1-\beta_1}  \mathbb E_{G_t|G_{1:t-1}} \left [\left \langle \nabla f( Z_{t}) , \frac{1}{N} \sum_{i=1}^N m_{t-1	,i} \odot (\frac{1}{\sqrt{u_{t-1,i}}} - \frac{1}{\sqrt{u_{t,i}}}) \right \rangle \right] \, . \nonumber
\end{align}

Then take expectation over $G_{1:t-1}$ and rearrange, we have 
\begin{align}\label{eq: exp_lip}
& \alpha  \mathbb E\left[\left \langle \nabla f( Z_{t}), \frac{1}{N} \sum_{i=1}^N \frac{\nabla f_i(x_{t,i})}{\sqrt{u_{t,i}}}  \right \rangle \right] \nonumber \\
\leq & \mathbb E  [f( Z_{t})]  -  \mathbb E [f( Z_{t+1})] + \frac{L}{2} \mathbb E\left[\| Z_{t+1}-  Z_{t}\|^2 \right] \nonumber  \\
&+ \alpha \frac{\beta_1}{1-\beta_1}  \mathbb E \left [\left \langle \nabla f( Z_{t}) , \frac{1}{N} \sum_{i=1}^N m_{t-1	,i} \odot (\frac{1}{\sqrt{u_{t-1,i}}} - \frac{1}{\sqrt{u_{t,i}}}) \right \rangle \right] \, .
\end{align}

In addition, we have 
\begin{align}\label{eq: u_to_u_bar}
&\left \langle \nabla f( Z_{t}), \frac{1}{N} \sum_{i=1}^N \frac{\nabla f_i(x_{t,i})}{\sqrt{u_{t,i}}}  \right \rangle  \nonumber \\
= &  \left \langle \nabla f( Z_{t}), \frac{1}{N} \sum_{i=1}^N \frac{\nabla f_i( x_{t,i})}{\sqrt{\overline U_{t}}}  \right \rangle  +\left \langle \nabla f( Z_{t}), \frac{1}{N} \sum_{i=1}^N \nabla f_i( x_{t,i})\odot \left(\frac{1}{\sqrt{u_{t,i}}} - \frac{1}{\sqrt{\overline U_{t}}}  \right)  \right \rangle 
\end{align}
and the first term on RHS of the equality can be lower bounded as 
\newpage

\begin{align} \label{eq: split_1}
&\left \langle \nabla f( Z_{t}), \frac{1}{N} \sum_{i=1}^N \frac{\nabla f_i( x_{t,i})}{\sqrt{\overline U_{t}}}  \right \rangle \nonumber \\
= &\frac{1}{2} \left\|\frac{\nabla f( Z_{t})}{\overline U_{t}^{1/4}}\right\|^2 + \frac{1}{2}\left\|  \frac{\frac{1}{N}\sum_{i=1}^N \nabla f_i( x_{t,i}) }{\overline U_{t}^{1/4}}  \right\|^2 - \frac{1}{2 }\left\| \frac{\nabla f( Z_{t}) -\frac{1}{N}\sum_{i=1}^N \nabla f_i( x_{t,i})}{\overline U_{t}^{1/4}} \right\|^2 \nonumber \\
\geq & \frac{1}{4} \left\|\frac{\nabla f( \overline X_{t})}{\overline U_{t}^{1/4}}\right\|^2 + \frac{1}{4}\left\|  \frac{ \nabla f( \overline X_{t})}{\overline U_{t}^{1/4}}  \right\|^2 - \frac{1}{2 }\left\| \frac{\nabla f( Z_{t}) -\frac{1}{N}\sum_{i=1}^N \nabla f_i( x_{t,i})}{\overline U_{t}^{1/4}} \right\|^2  \nonumber \\
&- \frac{1}{2} \left\|\frac{\nabla f( Z_{t}) -\nabla f( \overline X_{t})}{\overline U_{t}^{1/4}}\right\|^2 - \frac{1}{2} \left\|  \frac{ \frac{1}{N}\sum_{i=1}^N \nabla f_i( x_{t,i}) -  \nabla f( \overline X_{t})}{\overline U_{t}^{1/4}}  \right\|^2 \nonumber \\
\geq & \frac{1}{2} \left\|\frac{\nabla f( \overline X_{t})}{\overline U_{t}^{1/4}}\right\|^2   - \frac{3}{2} \left\|\frac{\nabla f( Z_{t}) -\nabla f( \overline X_{t})}{\overline U_{t}^{1/4}}\right\|^2 - \frac{3}{2} \left\|  \frac{ \frac{1}{N}\sum_{i=1}^N \nabla f_i( x_{t,i}) -  \nabla f( \overline X_{t})}{\overline U_{t}^{1/4}}  \right\|^2 \, ,
\end{align}
where the inequalities are all due to Cauchy-Schwartz.
Substituting \eqref{eq: split_1} and \eqref{eq: u_to_u_bar} into \eqref{eq: exp_lip}, yields
\begin{align}
\frac{1}{2} \alpha \mathbb E \left [\left\|\frac{\nabla f( \overline X_{t})}{\overline U_{t}^{1/4}}\right\|^2  \right]
\leq & \mathbb E  [f( Z_{t})]  -  \mathbb E [f( Z_{t+1})] + \frac{L}{2} \mathbb E\left[\| Z_{t+1}-  Z_{t}\|^2 \right] \nonumber  \\
&+ \alpha \frac{\beta_1}{1-\beta_1}  \mathbb E \left [\left \langle \nabla f( Z_{t}) , \frac{1}{N} \sum_{i=1}^N m_{t-1	,i} \odot (\frac{1}{\sqrt{u_{t-1,i}}} - \frac{1}{\sqrt{u_{t,i}}}) \right \rangle \right] \nonumber \\
& - \alpha \mathbb E \left [ \left \langle \nabla f( Z_{t}), \frac{1}{N} \sum_{i=1}^N \nabla f_i( x_{t,i})\odot \left(\frac{1}{\sqrt{u_{t,i}}} - \frac{1}{\sqrt{\overline U_{t}}}  \right)  \right \rangle \right] \nonumber \\
& + \frac{3}{2} \alpha \mathbb E \left [ \left\|  \frac{ \frac{1}{N}\sum_{i=1}^N \nabla f_i( x_{t,i}) -  \nabla f( \overline X_{t})}{\overline U_{t}^{1/4}}  \right\|^2 + \left\|\frac{\nabla f( Z_{t}) -\nabla f( \overline X_{t})}{\overline U_{t}^{1/4}}\right\|^2 \right] \, . \nonumber
\end{align}

Then sum over the above inequality from $t= 1$ to $T$ and divide both sides by $T\alpha/2$, we have
\begin{align}\label{eq: exp_telescope}
 \frac{1}{T}\sum_{t=1}^T  \mathbb E \left [\left\|\frac{\nabla f( \overline X_{t})}{\overline U_{t}^{1/4}}\right\|^2  \right]
\leq & \frac{2}{T\alpha} ( \mathbb E  [f( Z_{1})]  -  \mathbb E [f( Z_{T+1})]) + \frac{L}{T\alpha} \sum_{t=1}^T\mathbb E\left[\| Z_{t+1}-  Z_{t}\|^2 \right] \nonumber   \\
&+ \frac{2}{T}\frac{\beta_1}{1-\beta_1} \underbrace{\sum_{t=1}^T   \mathbb E \left [\left \langle \nabla f( Z_{t}) , \frac{1}{N} \sum_{i=1}^N m_{t-1	,i} \odot (\frac{1}{\sqrt{u_{t-1,i}}} - \frac{1}{\sqrt{u_{t,i}}}) \right \rangle \right]}_{D_1} \nonumber  \\
& + \frac{2}{T} \underbrace{\sum_{t=1}^T \mathbb E \left [ \left \langle \nabla f( Z_{t}), \frac{1}{N} \sum_{i=1}^N \nabla f_i( x_{t,i})\odot \left( \frac{1}{\sqrt{\overline U_{t}}} -\frac{1}{\sqrt{u_{t,i}}}  \right)  \right \rangle \right] }_{D_2} \nonumber  \\
& + \frac{3}{T} \underbrace{\sum_{t=1}^T \mathbb E \left [ \left\|  \frac{ \frac{1}{N}\sum_{i=1}^N \nabla f_i( x_{t,i}) -  \nabla f( \overline X_{t})}{\overline U_{t}^{1/4}}  \right\|^2 + \left\|\frac{\nabla f( Z_{t}) -\nabla f( \overline X_{t})}{\overline U_{t}^{1/4}}\right\|^2 \right]}_{D_3} \, . 
\end{align}

Next we need to upper bound all the terms on RHS of the above inequality to obtain the convergence rate.
For the terms composing $D_3$ in \eqref{eq: exp_telescope}, we can upper bound them by
\begin{align}\notag
\left\| \frac{\nabla f( Z_{t}) -  \nabla f( \overline X_{t})}{\overline U_{t}^{1/4}}\right\|^2 & \leq \frac{1}{\min_{j \in [d]}[\overline U_{t}^{1/2}]_j}\left\| \nabla f( Z_{t}) -  \nabla f( \overline X_{t})\right\|^2  \\
& \leq   L \frac{1}{\min_{j \in [d]}[\overline U_{t}^{1/2}]_j} \underbrace{\left\|  Z_{t} -  \overline X_{t}\right\|^2}_{D_4} 
\end{align}
and 
\begin{align}\label{eq: T_3_bound_first}
\left\| \frac{\frac{1}{N}\sum_{i=1}^N \nabla f_i( x_{t,i}) -  \nabla f( \overline X_{t})}{\overline U_{t}^{1/4}}  \right\|^2 
\leq & \frac{1}{\min_{j \in [d]}[\overline U_{t}^{1/2}]_j}  \frac{1}{N} \sum_{i=1}^N\left\| { \nabla f_i( x_{t,i}) -  \nabla f( \overline X_{t})}  \right\|^2 \nonumber \\
\leq & L  \frac{1}{\min_{j \in [d]}[\overline U_{t}^{1/2}]_j}  \frac{1}{N} \underbrace{\sum_{i=1}^N\left\| {  x_{t,i} -   \overline X_{t}}  \right\|^2}_{D_5} \, ,
\end{align}
using Jensen's inequality, Lipschitz continuity of $f_i$, and the fact that $f = \frac{1}{N}\sum_{i=1}^N {f_i}$. 
Next we need to bound $D_4$ and $D_5$.
Recall the update rule of $X_t$, we have
\begin{align} \label{eq: update_X}
X_t = X_{t-1} W - \alpha  \frac{M_{t-1}}{\sqrt{U_{t-1}}} = X_{1} W^{t-1} -\alpha \sum_{k=0}^{t-2} \frac{M_{t-k-1}}{\sqrt{U_{t-k-1}}}  W^{k}  \, ,
\end{align}
where we define $W^0 = \mathbf I$.
Since $W$ is a symmetric matrix, we can decompose it as $W = Q \Lambda Q^T$ where $Q$ is a orthonormal matrix and $\Lambda$ is a diagonal matrix whose diagonal elements correspond to eigenvalues of $W$ in an descending order, i.e., $\Lambda_{ii} = \lambda_i$ with $\lambda_i$ being $i$-th largest eigenvalue of $W$. In addition, because $W$ is a doubly stochastic matrix, we know $\lambda_{1} = 1$ and $q_1 = \frac{\mathbf 1_N}{\sqrt{N}}$.
With eigen-decomposition of $W$, we can rewrite $D_5$ as 
\begin{align}\label{eq: t2_matrix}
\sum_{i=1}^N\left\| {  x_{t,i} -   \overline X_{t}}  \right\|^2 =  \|X_t - \overline X_t \mathbf 1^T_N\|_F^2 =  \|X_tQ Q^T -  X_t \frac{1}{N} \mathbf 1_N \mathbf 1^T_N\|_F^2  = \sum_{l=2}^N \|X_t q_l\|^2  \, .
\end{align}
In addition, we can rewrite \eqref{eq: update_X} as 
\begin{align}\label{eq: update_x_decom}
X_t = X_{1} W^{t-1} -\alpha \sum_{k=0}^{t-2} \frac{M_{t-k-1}}{\sqrt{U_{t-k-1}}}  W^{k}   =  X_{1}  -\alpha \sum_{k=0}^{t-2} \frac{M_{t-k-1}}{\sqrt{U_{t-k-1}}} Q\Lambda^{k} Q^T  \, ,
\end{align}
where the last equality is because $x_{1,i} = x_{1,j}$, for all $i,j $ and thus $X_1 W = X_1$.
Then we have when $l > 1$,
\begin{align}\label{eq: x_ql}
X_t q_l = (X_{1}  -\alpha \sum_{k=0}^{t-2} \frac{M_{t-k-1}}{\sqrt{U_{t-k-1}}} Q\Lambda^{k} Q^T ) q_l =  -\alpha \sum_{k=0}^{t-2} \frac{M_{t-k-1}}{\sqrt{U_{t-k-1}}} q_l \lambda_l^{k} \, ,
\end{align}
since $Q$ is orthonormal and $X_1 q_l = x_{1,1} \mathbf 1_N^T q_l = x_{1,1} \sqrt{N} q_1^T q_l  = 0$, for all $l \neq 1$ .

\newpage

Combining \eqref{eq: t2_matrix} and \eqref{eq: x_ql} yields
\begin{align} \notag
D_5 =& 	\sum_{i=1}^N\left\| {  x_{t,i} -   \overline X_{t}}  \right\|^2
= \sum_{l=2}^N \|X_t q_l\|^2 \\\notag
=&  \sum_{l=2}^N \alpha^2 \left \| \sum_{k=0}^{t-2} \frac{M_{t-k-1}}{\sqrt{U_{t-k-1}}} \lambda_{l}^{k}  q_l\right\|^2 \\\label{eq: T_5_bound}
 \leq& \alpha^2 \left (\frac{1}{1-\lambda} \right)^2 Nd G_{\infty}^2 \frac{1}{\epsilon} \,  ,
\end{align}
where the last inequality follows from the fact that $g_{t,i} \leq G_{\infty}$, $\|q_l\| = 1$, and $|\lambda_l| \leq \lambda < 1$.
Now let us turn to $D_4$, it can be rewritten as 
\begin{align}\notag
\left\|  Z_{t} -  \overline X_{t}\right\|^2
=& \left\| \frac{\beta_1}{1-\beta_1} (\overline X_t - \overline X_{t-1}) \right \|^2\\\notag
=&\left( \frac{\beta_1}{1-\beta_1}\right)^2 \alpha^2 \left \|\frac{1}{N}\sum_{i=1}^N \frac{m_{t-1,i}}{\sqrt{u_{t-1,i}}}\right\|^2\\\notag
 \leq& \left( \frac{\beta_1}{1-\beta_1}\right)^2 \alpha^2 d \frac{G_{\infty}^2}{\epsilon}\, .
\end{align}
Now we know both $D_4$ and $D_5$ are in the order of  $\mathcal{O}(\alpha^2)$ and thus $D_3$ is in the order of  $\mathcal{O}(\alpha^2)$.
Next we will bound $D_2$ and $D_1$. Define  $G_1   \triangleq \max_{t \in [T]} \max_{i \in [N]} \|\nabla f_i(x_{t,i})\|_{\infty}$, $G_2   \triangleq \max_{t \in [T]}  \|\nabla f(Z_t)\|_{\infty}$, $G_3  \triangleq \max_{t \in [T]} \max_{i \in [N]} \|g_{t,i}\|_{\infty}$ and $G_{\infty} = \max(G_1,G_2,G_3)$.
Then we have 
\begin{align}\label{eq:T_2_bound}
D_2 =& \sum_{t=1}^T \mathbb E \left [ \left \langle \nabla f( Z_{t}), \frac{1}{N} \sum_{i=1}^N \nabla f_i( x_{t,i})\odot \left( \frac{1}{\sqrt{\overline U_{t}}} -\frac{1}{\sqrt{u_{t,i}}}  \right)  \right \rangle \right] \nonumber \\
\leq & \sum_{t=1}^T \mathbb E \left [  G_{\infty}^2  \frac{1}{N} \sum_{i=1}^N \sum_{j=1}^d \left| \frac{1}{\sqrt{[\overline U_{t}]_j}} -\frac{1}{\sqrt{[u_{t,i}]_{j}}}  \right| \right] \nonumber \\
= & \sum_{t=1}^T \mathbb E \left [  G_{\infty}^2  \frac{1}{N} \sum_{i=1}^N \sum_{j=1}^d \left| \frac{1}{\sqrt{[\overline U_{t}]_j}} -\frac{1}{\sqrt{[u_{t,i}]_{j}}}  \right| \frac{\sqrt{[\overline U_{t}]_j} + \sqrt{[u_{t,i}]_{j}} }{\sqrt{[\overline U_{t}]_j} + \sqrt{[u_{t,i}]_{j}}} \right] \nonumber \\
= & \sum_{t=1}^T \mathbb E \left [  G_{\infty}^2  \frac{1}{N} \sum_{i=1}^N \sum_{j=1}^d \left| \frac{[\overline U_{t}]_j - [u_{t,i}]_{j} }{{[\overline U_{t}]_j}\sqrt{[u_{t,i}]_{j}} + \sqrt{[\overline U_{t}]_j}{[u_{t,i}]_{j}}}  \right| \right] \nonumber \\
\leq &   \mathbb E \bigg [ \underbrace{ \sum_{t=1}^T  G_{\infty}^2  \frac{1}{N} \sum_{i=1}^N \sum_{j=1}^d \left| \frac{[\overline U_{t}]_j - [u_{t,i}]_{j} }{2 \epsilon^{1.5}}  \right| }_{D_6} \bigg ] \, , 
\end{align}
where the last inequality is due to $[u_{t,i}]_j \geq \epsilon$, for all $t,i,j$.

\newpage

To simplify notations, define $\|A\|_{abs} = \sum_{i,j} |A_{ij}|$ to be the entry-wise $L_1$ norm of a matrix $A$, then we obtain
\begin{align}\notag
D_6 \leq&  \frac{G_{\infty}^2}{N}\sum_{t=1}^T \frac{1}{2\epsilon^{1.5}} \|\overline U_{t} \mathbf{1}^T - U_t \|_{abs} \\\notag
\leq & \frac{G_{\infty}^2}{N}\sum_{t=1}^T \frac{1}{2\epsilon^{1.5}} \|\overline{ \tilde U}_{t} \mathbf{1}^T - \tilde{U}_t \|_{abs} \nonumber  \\
= & \frac{G_{\infty}^2}{N}\sum_{t=1}^T \frac{1}{2\epsilon^{1.5}} \|  \tilde U_{t} \frac{1}{N} \mathbf{1}_N\mathbf{1}_N^T - \tilde U_t Q Q^T \|_{abs}\nonumber   \\
= & \frac{G_{\infty}^2}{N}\sum_{t=1}^T \frac{1}{2\epsilon^{1.5}}  \| - \sum_{l=2}^N   \tilde U_t q_l q_l^T \|_{abs}  \, , \nonumber 
\end{align}
where the second inequality is due to Lemma~\ref{lem: mean_after_max}, introduced Section~\ref{app: proof_lemmas}, and the fact that $U_t = \max(\tilde U_t,\epsilon)$ (element-wise max operator).
Recall from update rule of $U_t$, by defining $\hat V_{-1} \triangleq \hat V_{0}$ and $U_0 \triangleq U_{1/2}$, we have for all $t \geq 0$, $\tilde U_{t+1} = (\tilde U_t  - \hat V_{t-1} + \hat V_{t})W$.
Thus, we obtain
\begin{equation}\notag
\tilde U_{t} = \tilde U_0 W^t + \sum_{k=1}^t (- \hat V_{t-1-k} + \hat V_{t-k} ) W^k =  \tilde U_0 + \sum_{k=1}^t (- \hat V_{t-1-k} + \hat V_{t-k} ) Q \Lambda^k Q^T \, .
\end{equation}
Then we further obtain when $l \neq 1$,
\begin{equation}\notag
\tilde U_t q_l = (\tilde U_0 + \sum_{k=1}^t (- \hat V_{t-1-k} + \hat V_{t-k} ) Q \Lambda^k Q^T) q_l =   \sum_{k=1}^t (- \hat V_{t-1-k} + \hat V_{t-k} ) q_l \lambda_l^k \, ,
\end{equation}
where the last equality is due to the definition $\tilde U_0 \triangleq U_{1/2} =  \epsilon \mathbf{1_d} \mathbf 1_N^T = \sqrt{N}  \epsilon \mathbf{1_d} \mathbf 1_N^T$ (recall that $q_1 = \frac{1}{\sqrt{N}}\mathbf 1_N^T$) and $q_i^T q_j = 0$ when $i \neq j$.
Note that by definition of $\|\cdot \|_{abs}$, we have for all $A, B, \|A+B\|_{abs} \leq \|A\|_{abs} + \|B\|_{abs} $, then
\begin{align}\label{eq: T_6_bound}
D_6 \leq & \frac{G_{\infty}^2}{N}\sum_{t=1}^T \frac{1}{2\epsilon^{1.5}}  \| - \sum_{l=2}^N  \tilde U_t q_l q_l^T \|_{abs} \nonumber \\
=  & \frac{G_{\infty}^2}{N}\sum_{t=1}^T \frac{1}{2\epsilon^{1.5}}  \| -   \sum_{k=1}^t (- \hat V_{t-1-k} + \hat V_{t-k} )  \sum_{l=2}^N q_l \lambda_l^k q_l^T \|_{abs} \nonumber  \\ 
\leq &  \frac{G_{\infty}^2}{N}\sum_{t=1}^T \frac{1}{2\epsilon^{1.5}}  \sum_{k=1}^t  \sum_{j=1}^d \| \sum_{l=2}^N q_l \lambda_l^k q_l^T \|_{1}  \|     (- \hat V_{t-1-k} + \hat V_{t-k} )^T e_j \|_1  \nonumber \\
\leq &  \frac{G_{\infty}^2}{N}\sum_{t=1}^T \frac{1}{2\epsilon^{1.5}}  \sum_{k=1}^t  \sum_{j=1}^d  \sqrt{N}\| \sum_{l=2}^N q_l \lambda_l^k q_l^T \|_{2}  \|     (- \hat V_{t-1-k} + \hat V_{t-k} )^T e_j \|_1  \nonumber \\
\leq  & \frac{G_{\infty}^2}{N}\sum_{t=1}^T \frac{1}{2\epsilon^{1.5}}  \sum_{k=1}^t \sum_{j=1}^d \|    (- \hat V_{t-1-k} + \hat V_{t-k} )^T e_j\|_1 \sqrt{N} \lambda^k \nonumber  \\
=  & \frac{G_{\infty}^2}{N}\sum_{t=1}^T \frac{1}{2\epsilon^{1.5}}  \sum_{k=1}^t  \|    (- \hat V_{t-1-k} + \hat V_{t-k} ) \|_{abs} \sqrt{N} \lambda^k \nonumber \\
=  & \frac{G_{\infty}^2}{N}\frac{1}{2\epsilon^{1.5}} \sum_{o=0}^{T-1} \sum_{t=o+1}^T     \|    (- \hat V_{o-1} + \hat V_{o} ) \|_{abs} \sqrt{N} \lambda^{t-o} \nonumber  \\ 
\leq & \frac{G_{\infty}^2}{\sqrt{N}}\frac{1}{2\epsilon^{1.5}} \sum_{o=0}^{T-1} \frac{\lambda}{1-\lambda}     \|    (- \hat V_{o-1} + \hat V_{o} ) \|_{abs} \, ,
\end{align}
where $\lambda = \max (|\lambda_2|,|\lambda_N|)$.
Combining \eqref{eq:T_2_bound} and \eqref{eq: T_6_bound}, we have
\begin{equation}\notag
D_2 \leq  \frac{G_{\infty}^2}{\sqrt{N}}\frac{1}{2\epsilon^{1.5}} \frac{\lambda}{1-\lambda}   \mathbb E \left[ \sum_{o=0}^{T-1}     \|    (- \hat V_{o-1} + \hat V_{o} ) \|_{abs} \right]  \, .
\end{equation}
Now we need to bound $D_1$, we have
\begin{align}\label{eq: T_1}
D_1 = & \sum_{t=1}^T   \mathbb E \left [\left \langle \nabla f( Z_{t}) , \frac{1}{N} \sum_{i=1}^N m_{t-1	,i} \odot (\frac{1}{\sqrt{u_{t-1,i}}} - \frac{1}{\sqrt{u_{t,i}}}) \right \rangle \right] \nonumber \\
\leq & \sum_{t=1}^T   \mathbb E \left [   G_{\infty}^2 \frac{1}{N} \sum_{i=1}^N \sum_{j=1}^d \bigg|\frac{1}{\sqrt{[u_{t-1,i}]_j}} - \frac{1}{\sqrt{[u_{t,i}]_j}}\bigg|   \right] \nonumber \\
= & \sum_{t=1}^T   \mathbb E \left [   G_{\infty}^2 \frac{1}{N} \sum_{i=1}^N \sum_{j=1}^d \left|\left(\frac{1}{\sqrt{[u_{t-1,i}]_j}} - \frac{1}{\sqrt{[u_{t,i}]_j}}\right) \frac{\sqrt{[u_{t,i}]_j}+\sqrt{[u_{t-1,i}]_j}}{\sqrt{[u_{t,i}]_j}+\sqrt{[u_{t-1,i}]_j}}\right|    \right] \nonumber \\
\leq & \sum_{t=1}^T   \mathbb E \left [   G_{\infty}^2 \frac{1}{N} \sum_{i=1}^N \sum_{j=1}^d \left|\frac{1}{2\epsilon^{1.5}}\left({{[u_{t-1,i}]_j}} - {{[u_{t,i}]_j}}\right) \right|    \right] \nonumber \\
\overset{(a)}{\leq} & \sum_{t=1}^T   \mathbb E \left [   G_{\infty}^2 \frac{1}{N} \sum_{i=1}^N \sum_{j=1}^d\frac{1}{2\epsilon^{1.5}} \left|\left({{[\tilde u_{t-1,i}]_j}} - {{[\tilde u_{t,i}]_j}}\right) \right|    \right]  \nonumber \\
= &  G_{\infty}^2 \frac{1}{2\epsilon^{1.5}} \frac{1}{N}   \mathbb E \left [  \sum_{t=1}^T   \|{{\tilde U_{t-1}}} - {{\tilde U_{t}}\|_{abs}}    \right] \, ,  
\end{align}
where $(a)$ is due to $[\tilde u_{t-1,i}]_j = \max ([u_{t-1,i}]_j,\epsilon)$ and the function $\max(\cdot,\epsilon)$ is 1-Lipschitz.
In addition, by update rule of $U_t$, we have 
\begin{align}\label{eq: diff_u_t}
 \sum_{t=1}^T   \|{{\tilde U_{t-1}}} - {{\tilde U_{t}}\|_{abs}} 
= &       \sum_{t=1}^T   \|{{\tilde U_{t-1}}} - (\tilde U_{t-1}  - \hat V_{t-2} + \hat V_{t-1})W \|_{abs}   \nonumber   \\
= &   \sum_{t=1}^T   \|\tilde U_{t-1}(QQ^T-Q\Lambda Q^T)  + (- \hat V_{t-2} + \hat V_{t-1})W \|_{abs}  \nonumber \\
= &  \sum_{t=1}^T   \|\tilde U_{t-1}(\sum_{l=2}^N q_l (1-\lambda_l)q_l^T)  + (- \hat V_{t-2} + \hat V_{t-1})W\|_{abs}   \nonumber   \\
\leq &  \sum_{t=1}^T   \| \sum_{k=1}^{t-1} (- \hat V_{t-2-k} + \hat V_{t-1-k} ) \sum_{l=2}^N q_l \lambda_l^k  (1-\lambda_l)q_l^T  \|_{abs} + \sum_{t=1}^T  \| (- \hat V_{t-2} + \hat V_{t-1})W \|_{abs}   \nonumber   \\
\leq &   \sum_{t=1}^T  \left(  \sum_{k=1}^{t-1} \|- \hat V_{t-2-k} + \hat V_{t-1-k}\|_{abs} \sqrt{N}\lambda^k \right)   + \sum_{t=1}^T  \| ( - \hat V_{t-2} + \hat V_{t-1}) \|_{abs} \nonumber  \\
=  &  \sum_{t=1}^T  \left(  \sum_{o=1}^{t-1} \|- \hat V_{o-2} + \hat V_{o-1}\|_{abs} \sqrt{N}\lambda^{t-o} \right)   + \sum_{t=1}^T  \| ( - \hat V_{t-2} + \hat V_{t-1}) \|_{abs}    \nonumber  \\
=  &\sum_{o=1}^{T-1}  \sum_{t=o+1}^T  \left(   \|- \hat V_{o-2} + \hat V_{o-1}\|_{abs} \sqrt{N}\lambda^{t-o} \right)   + \sum_{t=1}^T  \| ( - \hat V_{t-2} + \hat V_{t-1}) \|_{abs}  \nonumber  \\
\leq &\sum_{o=1}^{T-1} \frac{\lambda}{1-\lambda}   \left(   \|- \hat V_{o-2} + \hat V_{o-1}\|_{abs} \sqrt{N}  \right)   + \sum_{t=1}^T  \| ( - \hat V_{t-2} + \hat V_{t-1}) \|_{abs}   \nonumber \\
\leq & \frac{1}{1-\lambda}   \sum_{t=1}^T  \| ( - \hat V_{t-2} + \hat V_{t-1}) \|_{abs}  \sqrt{N}    \, .
\end{align}
Combining \eqref{eq: T_1} and \eqref{eq: diff_u_t} yields
\begin{align}
D_1 \leq G_{\infty}^2 \frac{1}{2\epsilon^{1.5}} \frac{1}{N}   \mathbb E \left [  \frac{1}{1-\lambda}   \sum_{t=1}^T  \| ( - \hat V_{t-2} + \hat V_{t-1}) \|_{abs}  \sqrt{N} \right]\, .
\end{align}

What remains is to bound $\sum_{t=1}^T \mathbb E\left[\| Z_{t+1}-  Z_{t}\|^2 \right]$. By update rule of $Z_t$, we have
\begin{align}\notag
\| Z_{t+1}-  Z_{t}\|^2 
 = &  \left\| \alpha \frac{\beta_1}{1-\beta_1}  \frac{1}{N} \sum_{i=1}^N m_{t-1	,i} \odot (\frac{1}{\sqrt{u_{t-1,i}}} - \frac{1}{\sqrt{u_{t,i}}}) - \alpha \frac{1}{N} \sum_{i=1}^N \frac{g_{t,i}}{\sqrt{u_{t,i}}} \right\|^2 \nonumber \\
\leq & 2 \alpha^2 \left\|  \frac{\beta_1}{1-\beta_1}  \frac{1}{N} \sum_{i=1}^N m_{t-1	,i} \odot (\frac{1}{\sqrt{u_{t-1,i}}} - \frac{1}{\sqrt{u_{t,i}}})\right\|^2 + 2 \alpha^2 \left\| \frac{1}{N} \sum_{i=1}^N \frac{g_{t,i}}{\sqrt{u_{t,i}}} \right\|^2 \nonumber \\
\leq & 2 \alpha^2 \left ( \frac{\beta_1}{1-\beta_1} \right)^2    G_{\infty} ^2 \frac{1}{N} \sum_{i=1}^N \sum_{j=1}^d   \frac{1}{\sqrt{\epsilon}}\left|\frac{1}{\sqrt{[u_{t-1,i}]_j}} - \frac{1}{\sqrt{[u_{t,i}]_j}}\right| + 2 \alpha^2 \left\| \frac{1}{N} \sum_{i=1}^N \frac{g_{t,i}}{\sqrt{u_{t,i}}} \right\|^2 \nonumber \\
\leq & 2 \alpha^2 \left ( \frac{\beta_1}{1-\beta_1} \right)^2    G_{\infty} ^2 \frac{1}{N} \sum_{i=1}^N \sum_{j=1}^d   \frac{1}{\sqrt{\epsilon}}\left|\frac{[u_{t,i}]_j - [u_{t-1,i}]_j }{2\epsilon^{1.5}}\right| + 2 \alpha^2 \left\| \frac{1}{N} \sum_{i=1}^N \frac{g_{t,i}}{\sqrt{u_{t,i}}} \right\|^2 \nonumber \\
\leq & 2 \alpha^2 \left ( \frac{\beta_1}{1-\beta_1} \right)^2    G_{\infty} ^2 \frac{1}{N} \sum_{i=1}^N \sum_{j=1}^d   \frac{1}{{2\epsilon^2}}\left|{[\tilde u_{t,i}]_j - [\tilde u_{t-1,i}]_j }\right| + 2 \alpha^2 \left\| \frac{1}{N} \sum_{i=1}^N \frac{g_{t,i}}{\sqrt{u_{t,i}}} \right\|^2 \nonumber \\
= & 2 \alpha^2 \left ( \frac{\beta_1}{1-\beta_1} \right)^2    G_{\infty} ^2 \frac{1}{N}   \frac{1}{{2\epsilon^2}}\|\tilde U_t - \tilde U_{t-1} \|_{abs} + 2 \alpha^2 \left\| \frac{1}{N} \sum_{i=1}^N \frac{g_{t,i}}{\sqrt{u_{t,i}}} \right\|^2 \, ,
\end{align}
where the last inequality is again due to the definition that $[\tilde u_{t,i}]_j = \max ([ u_{t,i}]_j ,\epsilon )$ and the fact that $\max(\cdot, \epsilon)$ is 1-Lipschitz. 
Then, we have
\begin{align}
& \sum_{t=1}^T \mathbb E [\| Z_{t+1}-  Z_{t}\|^2]  \nonumber  \\
\leq & 2 \alpha^2 \left ( \frac{\beta_1}{1-\beta_1} \right)^2    G_{\infty} ^2 \frac{1}{N}   \frac{1}{{2\epsilon^2}}  \mathbb E \left [\sum_{t=1}^T    \|\tilde U_t - \tilde U_{t-1} \|_{abs} \right] +  2 \alpha^2  \sum_{t=1}^T   \mathbb E \left[ \left\| \frac{1}{N} \sum_{i=1}^N \frac{g_{t,i}}{\sqrt{u_{t,i}}} \right\|^2 \right] \nonumber  \\
\leq &  \alpha^2 \left ( \frac{\beta_1}{1-\beta_1} \right)^2   \frac{ G_{\infty} ^2 }{\sqrt{N}}   \frac{1}{{\epsilon^2}} \frac{1}{1-\lambda}  \mathbb E \left [ \sum_{t=1}^T  \| ( - \hat V_{t-2} + \hat V_{t-1}) \|_{abs}\right] + 2 \alpha^2 \sum_{t=1}^T  \mathbb E\left[ \left\| \frac{1}{N} \sum_{i=1}^N \frac{g_{t,i}}{\sqrt{u_{t,i}}} \right\|^2 \right] \, , \nonumber 
\end{align}
where the last inequality is due to \eqref{eq: diff_u_t}.

We now bound the last term on RHS of the above inequality. A trivial bound can be
\begin{align}\notag
& \sum_{t=1}^T \left\| \frac{1}{N} \sum_{i=1}^N \frac{g_{t,i}}{\sqrt{u_{t,i}}} \right\|^2  
\leq  \sum_{t=1}^T d G_{\infty}^2 \frac{1}{\epsilon}\, ,
\end{align}
due to $\|g_{t,i}\| \leq G_{\infty}$ and $[u_{t,i}]_j \geq \epsilon$, for all $j$ (verified from update rule of  $u_{t,i}$ and the assumption that $[v_{t,i}]_j \geq \epsilon$, for all $i$). 
However, the above bound is independent of $N$, to get a better bound, we need a more involved analysis to show its dependency on $N$. To do this, we first notice that
\begin{align}
&\mathbb E_{G_t| G_{1:t-1}} \left[ \left\| \frac{1}{N} \sum_{i=1}^N \frac{g_{t,i}}{\sqrt{u_{t,i}}} \right\|^2 \right] \nonumber  \\
= & \mathbb E_{G_t| G_{1:t-1}} \left[  \frac{1}{N^2} \sum_{i=1}^N 
\sum_{j=1}^N \left \langle \frac{\nabla f_i(x_{t,i}) + \xi_{t,i }}{\sqrt{u_{t,i}}}, \frac{\nabla f_j(x_{t,j}) + \xi_{t,j }}{\sqrt{u_{t,j}}} \right \rangle \right]  \nonumber  \\
\overset{(a)}{=}&\mathbb E_{G_t| G_{1:t-1}} \left[ \left\| \frac{1}{N} \sum_{i=1}^N \frac{\nabla f_i(x_{t,i})}{\sqrt{u_{t,i}}} \right\|^2 \right]  +  \mathbb E_{G_t| G_{1:t-1}} \left[  \frac{1}{N^2} \sum_{i=1}^N
\left \| \frac{ \xi_{t,i }}{\sqrt{u_{t,i}}}\right \|^2 \right] \nonumber  \\
\overset{(b)}{=}&  \left\| \frac{1}{N} \sum_{i=1}^N \frac{\nabla f_i(x_{t,i})}{\sqrt{u_{t,i}}} \right\|^2   +  \frac{1}{N^2} \sum_{i=1}^N  \sum_{l=1}^d
\frac{ \mathbb E_{G_t| G_{1:t-1}} [[\xi_{t,i}]_l^2] }{[u_{t,i}]_l}  \nonumber  \\
\overset{(c)}{\leq} & \left\| \frac{1}{N} \sum_{i=1}^N \frac{\nabla f_i(x_{t,i})}{\sqrt{u_{t,i}}} \right\|^2   +    \frac{d}{N}  
\frac{ \sigma^2 }{\epsilon} \, ,\nonumber 
\end{align}
where (a) is due to $\mathbb E_{G_t | G_{1:t-1}} [\xi_{t,i}] = 0 $ and $\xi_{t,i}$ is independent of $x_{t,j}$, $u_{t,j}$ for all $j$, and $ \xi_{j}$, for all $j \neq i$, (b) comes from the fact that $x_{t,i}$, $u_{t,i}$ are fixed given $G_{1:t}$, (c) is due to $\mathbb E_{G_t| G_{1:t-1}} [[\xi_{t,i}]_l^2 \leq \sigma^2$ and $[u_{t.i}]_l \geq \epsilon$ by definition.
Then we have 
\begin{align}\label{eq: split_var}
\mathbb E\left[ \left\| \frac{1}{N} \sum_{i=1}^N \frac{g_{t,i}}{\sqrt{u_{t,i}}} \right\|^2 \right] =  & \mathbb E_{ G_{1:t-1}} \left[ \mathbb E_{G_t| G_{1:t-1}} \left[ \left\| \frac{1}{N} \sum_{i=1}^N \frac{g_{t,i}}{\sqrt{u_{t,i}}} \right\|^2 \right] \right] \nonumber \\
\leq & \mathbb E_{ G_{1:t-1}} \left[  \left\| \frac{1}{N} \sum_{i=1}^N \frac{\nabla f_i(x_{t,i})}{\sqrt{u_{t,i}}} \right\|^2   +    \frac{d}{N}  
\frac{ \sigma^2 }{\epsilon} \right] \nonumber \\
= &  \mathbb E \left[  \left\| \frac{1}{N} \sum_{i=1}^N \frac{\nabla f_i(x_{t,i})}{\sqrt{u_{t,i}}} \right\|^2     \right] + \frac{d}{N}  
\frac{ \sigma^2 }{\epsilon}  \, .
\end{align}

\noindent In standard analysis of SGD-like distributed algorithms, the term corresponding to $ \mathbb E \left[  \left\| \frac{1}{N} \sum_{i=1}^N \frac{\nabla f_i(x_{t,i})}{\sqrt{u_{t,i}}} \right\|^2     \right] $ will be merged with the first order descent when the stepsize is chosen to be small enough. However, in our case, the term cannot be merged because it is different from the first order descent in our algorithm. A brute-force upper bound is possible but this will lead to a worse convergence rate in terms of $N$. Thus, we need a more detailed analysis for the term in the following.

\newpage

\begin{align}\notag
\mathbb E \left[  \left\| \frac{1}{N} \sum_{i=1}^N \frac{\nabla f_i(x_{t,i})}{\sqrt{u_{t,i}}} \right\|^2     \right]  
=& \mathbb E \left[  \left\|\frac{1}{N} \sum_{i=1}^N \frac{\nabla f_i(x_{t,i})}{\sqrt{\overline U_t}  } +   \frac{1}{N} \sum_{i=1}^N \nabla f_i(x_{t,i}) \odot \left( \frac{1}{\sqrt{u_{t,i}}} - \frac{1}{\sqrt{\overline U_{t}}} \right) \right\|^2     \right] \nonumber  \\
\leq & 2 \mathbb E \left[  \left\|\frac{1}{N} \sum_{i=1}^N \frac{\nabla f_i(x_{t,i})}{\sqrt{\overline U_t}  } \right\|^2 \right] + 2\mathbb E \left[ \left\|  \frac{1}{N} \sum_{i=1}^N \nabla f_i(x_{t,i}) \odot \left( \frac{1}{\sqrt{u_{t,i}}} - \frac{1}{\sqrt{\overline U_{t}}} \right) \right\|^2     \right] \nonumber \\
\leq & 2 \mathbb E \left[  \left\|\frac{1}{N} \sum_{i=1}^N \frac{\nabla f_i(x_{t,i})}{\sqrt{\overline U_t}  } \right\|^2 \right] + 2\mathbb E \left[  \frac{1}{N} \sum_{i=1}^N \left\|    \nabla f_i(x_{t,i}) \odot \left( \frac{1}{\sqrt{u_{t,i}}} - \frac{1}{\sqrt{\overline U_{t}}} \right) \right\|^2     \right] \nonumber \\
\leq & 2 \mathbb E \left[  \left\|\frac{1}{N} \sum_{i=1}^N \frac{\nabla f_i(x_{t,i})}{\sqrt{\overline U_t}  } \right\|^2 \right] + 2\mathbb E \left[  \frac{1}{N} \sum_{i=1}^N G_{\infty}^2  \frac{1}{\sqrt{\epsilon}}\left\|     \frac{1}{\sqrt{u_{t,i}}} - \frac{1}{\sqrt{\overline U_{t}}}  \right\|_1     \right] \, .\nonumber
\end{align}

Summing over $T$, we have
\begin{align}\label{eq: variance_bound_1}
&\sum_{t=1}^T \mathbb E \left[  \left\| \frac{1}{N} \sum_{i=1}^N \frac{\nabla f_i(x_{t,i})}{\sqrt{u_{t,i}}} \right\|^2     \right]   \nonumber \\
\leq & 2\sum_{t=1}^T \mathbb E \left[  \left\|\frac{1}{N} \sum_{i=1}^N \frac{\nabla f_i(x_{t,i})}{\sqrt{\overline U_t}  } \right\|^2 \right] + 2 \sum_{t=1}^T \mathbb E \left[  \frac{1}{N} \sum_{i=1}^N G_{\infty}^2  \frac{1}{\sqrt{\epsilon}}\left\|     \frac{1}{\sqrt{u_{t,i}}} - \frac{1}{\sqrt{\overline U_{t}}}  \right\|_1     \right] \, .
\end{align}
For the last term on RHS of \eqref{eq: variance_bound_1}, we can bound it similarly as what we did for $D_2$ from \eqref{eq:T_2_bound} to \eqref{eq: T_6_bound}, which yields
\begin{align}\label{eq: diff_u}
\sum_{t=1}^T \mathbb E \left[  \frac{1}{N} \sum_{i=1}^N G_{\infty}^2  \frac{1}{\sqrt{\epsilon}}\left\|     \frac{1}{\sqrt{u_{t,i}}} - \frac{1}{\sqrt{\overline U_{t}}}  \right\|_1     \right]&  \leq  \sum_{t=1}^T \mathbb E \left[  \frac{1}{N} \sum_{i=1}^N G_{\infty}^2  \frac{1}{\sqrt{\epsilon}} \frac{1}{2\epsilon^{1.5}} \left\|  u_{t,i} -    \overline U_{t}  \right\|_1     \right] \nonumber \\
=& \sum_{t=1}^T \mathbb E \left[  \frac{1}{N}  G_{\infty}^2 \frac{1}{2\epsilon^2} \left\|     \overline U_{t} \mathbf 1^T - U_{t}  \right\|_{abs}    \right]  \nonumber \\
\leq & \sum_{t=1}^T \mathbb E \left[  \frac{1}{N}  G_{\infty}^2 \frac{1}{2\epsilon^2} \| - \sum_{l=2}^N   \tilde U_t q_l q_l^T \|_{abs}    \right] \nonumber \\ 
\leq & \frac{1}{\sqrt{N}}  G_{\infty}^2 \frac{1}{2\epsilon^2}   \mathbb E \left[   \sum_{o=0}^{T-1} \frac{\lambda}{1-\lambda}     \|    (- \hat V_{o-1} + \hat V_{o} ) \|_{abs}    \right] \, .
\end{align}
Further, we have 
\begin{align}
&\sum_{t=1}^T \mathbb E \left[  \left\|\frac{1}{N} \sum_{i=1}^N \frac{\nabla f_i(x_{t,i})}{\sqrt{\overline U_t}  } \right\|^2 \right]  \nonumber \\
\leq & 2 \sum_{t=1}^T \mathbb E \left[  \left\|\frac{1}{N} \sum_{i=1}^N \frac{\nabla f_i(\overline X_{t})}{\sqrt{\overline U_t}  } \right\|^2 \right] + 2 \sum_{t=1}^T \mathbb E \left[  \left\|\frac{1}{N} \sum_{i=1}^N \frac{\nabla f_i(\overline X_t) - \nabla f_i(x_{t,i})}{\sqrt{\overline U_t}  } \right\|^2 \right] \nonumber \\
= & 2 \sum_{t=1}^T \mathbb E \left[  \left\| \frac{\nabla f(\overline X_{t})}{\sqrt{\overline U_t}  } \right\|^2 \right] + 2 \sum_{t=1}^T \mathbb E \left[  \left\|\frac{1}{N} \sum_{i=1}^N \frac{\nabla f_i(\overline X_t) - \nabla f_i(x_{t,i})}{\sqrt{\overline U_t}  } \right\|^2 \right]  \nonumber
\end{align}
and the last term on RHS of the above inequality can be bounded following similar procedures from \eqref{eq: T_3_bound_first} to \eqref{eq: T_5_bound}, as what we did for $D_3$. Completing the procedures yields
\begin{align}\label{eq: diff_g}
\sum_{t=1}^T \mathbb E \left[  \left\|\frac{1}{N} \sum_{i=1}^N \frac{\nabla f_i(\overline X_t) - \nabla f_i(x_{t,i})}{\sqrt{\overline U_t}  } \right\|^2 \right] \leq & \sum_{t=1}^T \mathbb E \left [ L \frac{1}{\epsilon} \frac{1}{N} \sum_{i=1}^N \left\|x_{t,i} - \overline X_t \right\|^2 \right] \nonumber  \\
\leq & \sum_{t=1}^T \mathbb E \left [ L \frac{1}{\epsilon} \frac{1}{N} \alpha^2 \left( \frac{1}{1-\lambda}\right)Nd G_{\infty}^2 \frac{1}{\epsilon} \right] \nonumber \\
= & T L \frac{1}{\epsilon^2}  \alpha^2 \left( \frac{1}{1-\lambda}\right)d G_{\infty}^2 \, .
\end{align}

Finally, combining \eqref{eq: split_var} to \eqref{eq: diff_g}, we obtain
\begin{align}
 \sum_{t=1}^T \mathbb E\left[ \left\| \frac{1}{N} \sum_{i=1}^N \frac{g_{t,i}}{\sqrt{u_{t,i}}} \right\|^2 \right] 
\leq &  4 \sum_{t=1}^T \mathbb E \left[  \left\| \frac{\nabla f(\overline X_{t})}{\sqrt{\overline U_t}  } \right\|^2 \right] + 4 T L \frac{1}{\epsilon^2}  \alpha^2 \left( \frac{1}{1-\lambda}\right)d G_{\infty}^2  \nonumber \\
& +  2\frac{1}{\sqrt{N}}  G_{\infty}^2 \frac{1}{2\epsilon^2}   \mathbb E \left[   \sum_{o=0}^{T-1} \frac{\lambda}{1-\lambda}     \|    (- \hat V_{o-1} + \hat V_{o} ) \|_{abs}    \right]  + T \frac{d}{N}
\frac{ \sigma^2 }{\epsilon} \nonumber  \\
\leq &  4 \frac{1}{\sqrt{\epsilon}} \sum_{t=1}^T \mathbb E \left[  \left\| \frac{\nabla f(\overline X_{t})}{\overline U_t^{1/4}  } \right\|^2 \right] + 4 T L \frac{1}{\epsilon^2}  \alpha^2 \left( \frac{1}{1-\lambda}\right)d G_{\infty}^2 \nonumber   \\
& +  2\frac{1}{\sqrt{N}}  G_{\infty}^2 \frac{1}{2\epsilon^2}   \mathbb E \left[   \sum_{o=0}^{T-1} \frac{\lambda}{1-\lambda}     \|    (- \hat V_{o-1} + \hat V_{o} ) \|_{abs}    \right]  + T \frac{d}{N}
\frac{ \sigma^2 }{\epsilon}. \nonumber 
\end{align}
where the last inequality is due to each element of $\overline U_t$ is lower bounded by $\epsilon$ by definition.

Combining all above, we obtain
\begingroup
\allowdisplaybreaks
\begin{align}\label{eq: final_bound}
 &\frac{1}{T}\sum_{t=1}^T  \mathbb E \left [\left\|\frac{\nabla f( \overline X_{t})}{\overline U_{t}^{1/4}}\right\|^2  \right] \nonumber  \\
 \leq & \frac{2}{T\alpha} ( \mathbb E  [f( Z_{1})]  -  \mathbb E [f( Z_{T+1})] 
+ \frac{L}{T}   \alpha \left ( \frac{\beta_1}{1-\beta_1} \right)^2   \frac{ G_{\infty} ^2 }{\sqrt{N}}   \frac{1}{{\epsilon^2}} \frac{1}{1-\lambda}  \mathbb E \left[ \mathcal{V}_T \right]   \nonumber  \\
& + \frac{8L}{T}\alpha\frac{1}{\sqrt{\epsilon}} \sum_{t=1}^T \mathbb E \left[  \left\| \frac{\nabla f(\overline X_{t})}{{\overline U_t^{1/4}}  } \right\|^2 \right] + {8L^2}\alpha  \frac{1}{\epsilon^2}  \alpha^2 \left( \frac{1}{1-\lambda}\right)d G_{\infty}^2 \nonumber \\
& +  \frac{4L}{T}\alpha \frac{1}{\sqrt{N}}  G_{\infty}^2 \frac{1}{2\epsilon^2}   \mathbb E \left[   \sum_{o=0}^{T-1} \frac{\lambda}{1-\lambda}     \|    (- \hat V_{o-1} + \hat V_{o} ) \|_{abs}    \right]  +  {2L}\alpha  \frac{d}{N}
\frac{ \sigma^2 }{\epsilon} \nonumber  \\
&+ \frac{2}{T}\frac{\beta_1}{1-\beta_1} G_{\infty}^2 \frac{1}{2\epsilon^{1.5}} \frac{1}{\sqrt{N}}   \mathbb E \left [  \frac{1}{1-\lambda}   \mathcal{V}_T   \right] + \frac{2}{T} \frac{G_{\infty}^2}{\sqrt{N}}\frac{1}{2\epsilon^{1.5}} \frac{\lambda}{1-\lambda}   \mathbb E \left[ \mathcal{V}_T \right]\nonumber   \\
& + \frac{3}{T} \left ( \sum_{t=1} ^TL\left (\frac{1}{1-\lambda} \right)^2 \alpha ^ 2d G_{\infty}^2 \frac{1}{\epsilon^{1.5}} + \sum_{t=1}^T L\left( \frac{\beta_1}{1-\beta_1}\right)^2 \alpha^2 d \frac{G_{\infty}^2}{\epsilon^{1.5}}\right) \nonumber \\
= &  \frac{2}{T\alpha} ( \mathbb E  [f( Z_{1})]  -  \mathbb E [f( Z_{T+1})]) +  {2L}\alpha  \frac{d}{N}
\frac{ \sigma^2 }{\epsilon} + {8L}\alpha\frac{1}{\sqrt{\epsilon}} \frac{1}{T} \sum_{t=1}^T \mathbb E \left[  \left\| \frac{\nabla f(\overline X_{t})}{{\overline U_t^{1/4}}  } \right\|^2 \right] \nonumber \\
&+  3\alpha^2 d \left(\left( \frac{\beta_1}{1-\beta_1}\right)^2 + \left (\frac{1}{1-\lambda} \right)^2 \right)L  \frac{G_{\infty}^2 }{\epsilon^{1.5}} +  {8}\alpha^3 L^2     \left( \frac{1}{1-\lambda}\right)d \frac{G_{\infty}^2}{\epsilon^2} \nonumber \\
& +   \frac{1}{T \epsilon^{1.5}}  \frac{G_{\infty}^2}{\sqrt{N}} \frac{1}{1-\lambda}  \left( L  \alpha \left ( \frac{\beta_1}{1-\beta_1} \right)^2     \frac{1}{{\epsilon^{0.5}}}  +   \lambda + \frac{\beta_1}{1-\beta_1} + 2L\alpha \frac{1}{\epsilon^{0.5}}\lambda   \right)   \mathbb E \left[ \mathcal{V}_T \right] \, .
\end{align}

where $ \mathcal{V}_T : = \sum_{t=1}^{T}   \|    (- \hat V_{t-2} + \hat V_{t-1} ) \|_{abs}$.
Set $\alpha = \frac{1}{\sqrt{dT}}$ and when $\alpha  \leq \frac{\epsilon^{0.5}}{16L} $, we further have
\begin{align}\notag
& \frac{1}{T}\sum_{t=1}^T  \mathbb E \left [\left\|\frac{\nabla f( \overline X_{t})}{\overline U_{t}^{1/4}}\right\|^2  \right] \nonumber \\
\leq & \frac{4}{T\alpha} ( \mathbb E  [f( Z_{1})]  -  \mathbb E [f( Z_{T+1})]) +  {4L}\alpha  \frac{d}{N}
\frac{ \sigma^2 }{\epsilon}  \nonumber \\
&+  6\alpha^2 d \left(\left( \frac{\beta_1}{1-\beta_1}\right)^2 + \left (\frac{1}{1-\lambda} \right)^2 \right)L  \frac{G_{\infty}^2 }{\epsilon^{1.5}} +  {16}\alpha^3 L^2     \left( \frac{1}{1-\lambda}\right)d \frac{G_{\infty}^2}{\epsilon^2} \nonumber \\
& +   \frac{2}{T \epsilon^{1.5}}  \frac{G_{\infty}^2}{\sqrt{N}} \frac{1}{1-\lambda}  \left( L  \alpha \left ( \frac{\beta_1}{1-\beta_1} \right)^2     \frac{1}{{\epsilon^{0.5}}}  +   \lambda + \frac{\beta_1}{1-\beta_1} + 2L\alpha \frac{1}{\epsilon^{0.5}}\lambda   \right)   \mathbb E \left[ \mathcal{V}_T \right] \nonumber \\
\leq & \frac{4}{T\alpha} ( \mathbb E  [f( Z_{1})]  -  \min_x  f(x)) +  {4L}\alpha  \frac{d}{N}
\frac{ \sigma^2 }{\epsilon}  \nonumber \\
&+  6\alpha^2 d \left(\left( \frac{\beta_1}{1-\beta_1}\right)^2 + \left (\frac{1}{1-\lambda} \right)^2 \right)L  \frac{G_{\infty}^2 }{\epsilon^{1.5}} +  {16}\alpha^3d L^2     \left( \frac{1}{1-\lambda}\right) \frac{G_{\infty}^2}{\epsilon^2} \nonumber \\
& +   \frac{2}{T \epsilon^{1.5}}  \frac{G_{\infty}^2}{\sqrt{N}} \frac{1}{1-\lambda}  \left( L  \alpha \left ( \frac{\beta_1}{1-\beta_1} \right)^2     \frac{1}{{\epsilon^{0.5}}}  +   \lambda + \frac{\beta_1}{1-\beta_1} + 2L\alpha \frac{1}{\epsilon^{0.5}}\lambda   \right)   \mathbb E \left[ \mathcal{V}_T \right] \nonumber \\
\leq & C_1\left(\frac{1}{T\alpha} ( \mathbb E  [f( Z_{1})]  -  \min_x  f(x)) +  \alpha  \frac{d\sigma^2}{N}\right)
+  C_2 \alpha^2 d  +  C_3 \alpha^3d + \frac{1}{T\sqrt{N}} (C_4 +  C_5 \alpha) \mathbb E \left[ \mathcal{V}_T \right]   \nonumber \\
\end{align}
where the first inequality is obtained by moving the term ${8L}\alpha\frac{1}{\sqrt{\epsilon}} \frac{1}{T} \sum_{t=1}^T \mathbb E \left[  \left\| \frac{\nabla f(\overline X_{t})}{{\overline U_t^{1/4}}  } \right\|^2 \right] $ on the RHS of \eqref{eq: final_bound} to the LHS to cancel it using the assumption ${8L}\alpha\frac{1}{\sqrt{\epsilon}} \leq \frac{1}{2} $ followed by multiplying both sides by 2.
The constants introduced in the last step are defined as following
\begin{align}
C_1 = & \max (4, 4{L/\epsilon}) \nonumber \, ,\\
C_2 = & 6 \left(\left( \frac{\beta_1}{1-\beta_1}\right)^2 + \left (\frac{1}{1-\lambda} \right)^2 \right)L  \frac{G_{\infty}^2 }{\epsilon^{1.5}}   \nonumber \, ,\\
C_3 = & 16L^2 \left ( \frac{1}{1-\lambda}\right) \frac{G_{\infty}^2}{\epsilon^2} \nonumber \, ,\\
C_4 = &  \frac{2}{ \epsilon^{1.5}}   \frac{1}{1-\lambda}  \left(     \lambda + \frac{\beta_1}{1-\beta_1}    \right){G_{\infty}^2} \nonumber \, ,\\
C_5 = &  \frac{2}{ \epsilon^{2}}   \frac{1}{1-\lambda}   L   \left ( \frac{\beta_1}{1-\beta_1} \right)^2 {G_{\infty}^2}  + \frac{4}{ \epsilon^{2}}   \frac{\lambda}{1-\lambda}   L    {G_{\infty}^2} \nonumber \, .
\end{align}
Substituting into $Z_1 = \overline X_1$ completes the proof. \hfill $\square$

\clearpage

\section{Proof of Theorem~\ref{thm: dams_converge}} \label{app: proof_ams}

Under some assumptions stated in Corollary~\ref{corl: adm_convergence}, we have that
\begin{align}\label{eq: rep_thm1}
	 \frac{1}{T}\sum_{t=1}^T  \mathbb E \left [\left\|\frac{\nabla f( \overline X_{t})}{\overline U_{t}^{1/4}}\right\|^2  \right] 
	\leq  & C_1 \frac{\sqrt{d}}{\sqrt{TN}} \left(( \mathbb E  [f( Z_{1})]  -  \min_x  f(x)) +    \sigma^2 \right)  +  C_2 \frac{N}{T}  +  C_3 \frac{N^{1.5}}{T^{1.5}d^{0.5}} 
	\nonumber \\
    &+  \left(C_4 \frac{1}{T\sqrt{N}} +  C_5   \frac{1}{T^{1.5}d^{0.5}}\right)\mathbb E \left[ \sum_{t=1}^{T}   \|    (- \hat V_{t-2} + \hat V_{t-1} ) \|_{abs} \right] 
\end{align}
where $\| \cdot\|_{abs}$  denotes the entry-wise $L_1$ norm of a matrix (i.e $\| A\|_{abs} = \sum_{i,j}{|A_{ij}|}$) and $C_1, C_2 ,C_3, C_4, C_5$ are defined in Theorem~\ref{thm: dagm_converge}. 

Since Algorithm~\ref{alg: damsgrad} is a special case of~\ref{alg: dadaptive}, building on result of Theorem~\ref{thm: dagm_converge}, we just need to characterize the growth speed of $\mathbb E \left[ \sum_{t=1}^{T}   \|    (- \hat V_{t-2} + \hat V_{t-1} ) \|_{abs} \right]  $ to prove convergence of Algorithm~\ref{alg: damsgrad}.  By the update rule of Algorithm~\ref{alg: damsgrad}, we know $\hat V_t$ is non decreasing and thus
\begin{align}
\mathbb E \left[ \sum_{t=1}^{T}   \|    (- \hat V_{t-2} + \hat V_{t-1} ) \|_{abs} \right] = &\mathbb E \left[ \sum_{t=1}^{T}  \sum_{i=1}^N \sum_{j=1}^d    |- [\hat v_{t-2,i}]_j + [\hat v_{t-1,i}]_j | \right]  \nonumber  \\
= &\mathbb E \left[ \sum_{t=1}^{T}  \sum_{i=1}^N \sum_{j=1}^d    (- [\hat v_{t-2,i}]_j + [\hat v_{t-1,i}]_j ) \right]  \nonumber  \\
= &\mathbb E \left[   \sum_{i=1}^N \sum_{j=1}^d    (- [\hat v_{-1,i}]_j + [\hat v_{T-1,i}]_j ) \right]   \nonumber \\
= &\mathbb E \left[   \sum_{i=1}^N \sum_{j=1}^d    (- [\hat v_{0,i}]_j + [\hat v_{T-1,i}]_j ) \right] \, , 
\end{align}
where the last equality is because  we defined $\hat V_{-1} \triangleq \hat V_0$  previously.

Further, because $\|g_{t,i}\|_{\infty} \leq G_{\infty}$ for all $t,i$ and $v_{t,i}$ is a exponential moving average of $g_{k,i}^2, k=1,2,\cdots,t$, we know $|[v_{t,i}]_j| \leq G^2_{\infty}$, for all $t,i,j$. In addition, by update rule of $\hat V_t$, we also know each element of $\hat V_{t}$ also cannot be greater than $G^2_{\infty}$, i.e., $|[\hat v_{t,i}]_j| \leq G^2_{\infty}$, for all $t,i,j$. 
Given the fact that $[\hat v_{0,i}]_j \geq 0$ , we have 
\begin{align}\notag
\mathbb E \left[ \sum_{t=1}^{T}   \|    (- \hat V_{t-2} + \hat V_{t-1} ) \|_{abs} \right]  = \mathbb E \left[   \sum_{i=1}^N \sum_{j=1}^d    (- [\hat v_{0,i}]_j + [\hat v_{T-1,i}]_j ) \right]  
\leq  \mathbb E \left[   \sum_{i=1}^N \sum_{j=1}^d  G_{\infty}^2   \right] =  Nd G_{\infty}^2 \, .
\end{align}
Substituting the above into \eqref{eq: rep_thm1}, we have 
\begin{align}\label{eq: sub_thm1}
	 \frac{1}{T}\sum_{t=1}^T  \mathbb E \left [\left\|\frac{\nabla f( \overline X_{t})}{\overline U_{t}^{1/4}}\right\|^2  \right] 
	\leq  & C_1 \frac{\sqrt{d}}{\sqrt{TN}} \left(( \mathbb E  [f( Z_{1})]  -  \min_x  f(x)) +    \sigma^2 \right)  +  C_2 \frac{N}{T}  +  C_3 \frac{N^{1.5}}{T^{1.5}d^{0.5}} \nonumber
	 \\
    &+  \left(C_4 \frac{1}{T\sqrt{N}} +  C_5   \frac{1}{T^{1.5}d^{0.5}}\right)NdG_{\infty}^2  \nonumber \\
    =  & C_1' \frac{\sqrt{d}}{\sqrt{TN}} \left(( \mathbb E  [f( Z_{1})]  -  \min_x  f(x)) +    \sigma^2 \right)  +  C_2' \frac{N}{T}  +  C_3' \frac{N^{1.5}}{T^{1.5}d^{0.5}} 
	 \nonumber \\
    &+  C_4' \frac{\sqrt{N}d}{T} +  C_5'  \frac{Nd^{0.5}}{T^{1.5}}\,,
\end{align}
where we have 
\begin{align}
C_1' = C_1 \quad C_2' = C_2 \quad C_3' = C_3 \quad C_4' = C_4G_{\infty}^2 \quad C_5' = C_5 G_{\infty}^2 \, .
\end{align}
and we conclude the proof. \hfill $\square$

\vspace{0.2in}

\section{Proof of Theorem~\ref{thm: dadagrad_converge}} \label{app: proof_adagrad}

The proof follows the same flow as that of Theorem~\ref{thm: dams_converge}. Under assumptions stated in Corollary~\ref{corl: adm_convergence}, set $\alpha = \sqrt{N}/\sqrt{Td}$, we have that
\begin{align}\label{eq: rep_thm1bis}
	 \frac{1}{T}\sum_{t=1}^T  \mathbb E \left [\left\|\frac{\nabla f( \overline X_{t})}{\overline U_{t}^{1/4}}\right\|^2  \right] 
	\leq  & C_1 \frac{\sqrt{d}}{\sqrt{TN}} \left(( \mathbb E  [f( Z_{1})]  -  \min_x  f(x)) +    \sigma^2 \right)  +  C_2 \frac{N}{T}  +  C_3 \frac{N^{1.5}}{T^{1.5}d^{0.5}} 
	\nonumber \\
    &+  \left(C_4 \frac{1}{T\sqrt{N}} +  C_5   \frac{1}{T^{1.5}d^{0.5}}\right) 
\mathbb E \left[ \sum_{t=1}^{T}   \|    (- \hat V_{t-2} + \hat V_{t-1} ) \|_{abs} \right]  \, ,
\end{align}
where $\| \cdot\|_{abs}$  denotes the entry-wise $L_1$ norm of a matrix (i.e $\| A\|_{abs} = \sum_{i,j}{|A_{ij}|}$) and $C_1, C_2 ,C_3, C_4, C_5$ are defined in Theorem~\ref{thm: dagm_converge}. 

Again, since decentralized AdaGrad is a special case of~\ref{alg: dadaptive}, Corollary~\ref{corl: adm_convergence} applies and what we need is to upper bound  $\mathbb E \left[ \sum_{t=1}^{T}   \|    (- \hat V_{t-2} + \hat V_{t-1} ) \|_{abs} \right]  $  derive convergence rate.  By the update rule of decentralized AdaGrad, we have $\hat v_{t,i} = \frac{1}{t}( \sum_{k=1}^{t}g_{k,i}^2)$ for $t \geq 1$ and $\hat v_{0,i} = \epsilon \mathbf 1$. Then we have for $t \geq 3$,
\begin{align}
&\mathbb E \left[ \sum_{t=1}^{T}   \|    (- \hat V_{t-2} + \hat V_{t-1} ) \|_{abs} \right]  \nonumber \\
= &\mathbb E \left[ \sum_{t=1}^{T}  \sum_{i=1}^N \sum_{j=1}^d    |- [\hat v_{t-2,i}]_j + [\hat v_{t-1,i}]_j | \right] \nonumber   \\
\leq &\mathbb E \left[ \sum_{t=3}^{T}  \sum_{i=1}^N \sum_{j=1}^d    |- \frac{1}{t-2}( [\sum_{k=1}^{t-2}g_{k,i}^2]_j) + \frac{1}{t-1}([ \sum_{k=1}^{t-1}g_{k,i}^2]_j )| \right] + Nd (G_{\infty}^2 - \epsilon) \nonumber \\
\leq &\mathbb E \left[ \sum_{t=3}^{T}  \sum_{i=1}^N \sum_{j=1}^d    | (\frac{1}{t-1} - \frac{1}{t-2})( [\sum_{k=1}^{t-2}g_{k,i}^2]_j) + \frac{1}{t-1} [g_{t-1,i}^2]_j )| \right] +  Nd G_{\infty}^2  \nonumber  \\
=  &\mathbb E \left[ \sum_{t=3}^{T}  \sum_{i=1}^N \sum_{j=1}^d    | (-\frac{1}{(t-1)(t-2)} )( [\sum_{k=1}^{t-2}g_{k,i}^2]_j) + \frac{1}{t-1}[g_{t-1,i}^2]_j | \right]    +  Nd G_{\infty}^2  \nonumber \\
\leq &\mathbb E \left[ \sum_{t=3}^{T}  \sum_{i=1}^N \sum_{j=1}^d    \max\left( \frac{1}{(t-1)(t-2)} ( [\sum_{k=1}^{t-2}g_{k,i}^2]_j) , \frac{1}{t-1}[g_{t-1,i}^2]_j \right) \right]    +  Nd G_{\infty}^2 \nonumber  \\
\leq  &\mathbb E \left[ N d \sum_{t=3}^{T}      \frac{G_{\infty}^2}{t-1}   \right]   +  Nd G_{\infty}^2 \nonumber  \\
\leq & Nd G_{\infty}^2 \log (T) +  Nd G_{\infty}^2 \nonumber \\
= & NdG_{\infty}^2 (\log (T) + 1) \nonumber
\end{align}
where the first equality is because  we defined $\hat V_{-1} \triangleq \hat V_0$  previously and $\|g_{k,i}\|_{\infty} \leq G_{\infty}$ by assumption.

Substituting the above into \eqref{eq: rep_thm1bis}, we have 
\begin{align}
	 \frac{1}{T}\sum_{t=1}^T  \mathbb E \left [\left\|\frac{\nabla f( \overline X_{t})}{\overline U_{t}^{1/4}}\right\|^2  \right] 
	\leq  & C_1 \frac{\sqrt{d}}{\sqrt{TN}} \left(( \mathbb E  [f( Z_{1})]  -  \min_x  f(x)) +    \sigma^2 \right)  +  C_2 \frac{N}{T}  +  C_3 \frac{N^{1.5}}{T^{1.5}d^{0.5}} 
	\nonumber \\
    &+  \left(C_4 \frac{1}{T\sqrt{N}} +  C_5   \frac{1}{T^{1.5}d^{0.5}}\right) 
NdG_{\infty}^2 (\log(T)+1) \nonumber \\
	=  & C_1' \frac{\sqrt{d}}{\sqrt{TN}} \left(( \mathbb E  [f( Z_{1})]  -  \min_x  f(x)) +    \sigma^2 \right)  +  C_2' \frac{N}{T}  +  C_3' \frac{N^{1.5}}{T^{1.5}d^{0.5}} 
	\nonumber \\
    &+  C_4' \frac{d\sqrt{N}(\log(T)+1)}{T} +  C_5'   \frac{(\log(T)+1)N\sqrt{d}}{T^{1.5}}\,, \nonumber
\end{align}
where we have 
\begin{align}
C_1' = C_1 \quad C_2' = C_2 \quad C_3' = C_3 \quad C_4' = C_4G_{\infty}^2 \quad C_5' = C_5 G_{\infty}^2 \, .
\end{align}
and we conclude the proof. \hfill $\square$


\vspace{0.2in}

\bibliographystyle{plainnat}
\bibliography{reference}

\end{document}